\newtheorem{assmpt}[theorem]{Assumption}
\newcommand{\mtc}{\mathcal}
\newcommand{\mbf}{\mathbf}
\newcommand{\wt}[1]{{\widetilde{#1}}}
\newcommand{\wh}[1]{{\widehat{#1}}}
\newcommand{\ol}[1]{\overline{#1}}
\newcommand{\tX}{\widetilde{X}}
\newcommand{\ind}[1]{{\mbf{1}_{\{#1\}}}}
\newcommand{\paren}[1]{\left(#1\right)}
\newcommand{\brac}[1]{\left[#1\right]}
\newcommand{\inner}[1]{\left\langle#1\right\rangle}
\newcommand{\norm}[1]{\left\|#1\right\|}
\newcommand{\set}[1]{\left\{#1\right\}}
\newcommand{\abs}[1]{\left\lvert #1 \right\rvert}
\newcommand{\e}[1]{\mbe\brac{#1}}
\newcommand{\ee}[2]{\mbe_{#1}\brac{#2}}
\newcommand{\prob}[1]{\mbp\brac{#1}}
\def\argmin{\mathop{\rm arg\,min}\limits}
\newcommand{\sign}{\mathop{\mathrm{sign}}}
\newcommand{\eps}{\varepsilon}
\def\cB{{\mtc{B}}}
\def\cC{{\mtc{C}}}
\def\cE{{\mtc{E}}}
\def\cF{{\mtc{F}}}
\def\cH{{\mtc{H}}}
\def\cL{{\mtc{L}}}
\def\cR{{\mtc{R}}}
\def\cX{\mathcal{X}}
\def\cY{\mathcal{Y}}
\newcommand{\mbe}{\mathbb{E}}
\newcommand{\mbr}{\mathbb{R}}
\newcommand{\mbp}{\mathbb{P}}
\newcommand{\fP}{{\mathfrak{P}}}
\newcommand{\fK}{{\mathfrak{K}}}
\def\kbound/{{\bf (K-Bounded)}}
\def\kuniv/{{\bf (K-Univ)}}
\def\khoeld/{{\bf (K-H\"{o}lder)}}
\begin{document}

\title{Domain Generalization by Marginal Transfer Learning}

\author{\name Gilles Blanchard \email blanchard@universite-paris-saclay.fr \\
       \addr Universit\'e Paris-Saclay, CNRS, Inria,
       Laboratoire de math\'ematiques d’Orsay
       \AND
       \name Aniket Anand Deshmukh \email aniketde@umich.edu \\
       \addr Microsoft AI \& Research
       \AND
       \name \"{U}run Dogan \email urundogan@gmail.com \\
       \addr Microsoft AI \& Research
       \AND
       \name Gyemin Lee \email gyemin@seoultech.ac.kr \\
       \addr Dept. Electronic and IT Media Engineering\\
      Seoul National University of Science and Technology
      \AND
       \name Clayton Scott \email clayscot@umich.edu \\
       \addr Electrical and Computer Engineering, Statistics\\
       University of Michigan
       }

\editor{Arthur Gretton}

\maketitle

\begin{abstract}%
In the problem of domain generalization (DG), there are labeled training data sets from several related prediction problems, and the goal is to make accurate predictions on future unlabeled data sets that are not known to the learner. This problem arises in several applications where data distributions fluctuate because of environmental, technical, or other sources of variation. We introduce a formal framework for DG, and argue that it can be viewed as a kind of supervised learning problem by augmenting the original feature space with the marginal distribution of feature vectors. While our framework has several connections to conventional analysis of supervised learning algorithms, several unique aspects of DG require new methods of analysis.

This work lays the learning theoretic foundations of domain generalization, building on our earlier conference paper where the problem of DG was introduced \citep{blanchard:11:nips}. We present two formal models of data generation, corresponding notions of risk, and distribution-free generalization error analysis. By focusing our attention on kernel methods, we also provide more quantitative results and a universally consistent algorithm. An efficient implementation is provided for this algorithm, which is experimentally compared to a pooling strategy on one synthetic and three real-world data sets.

\end{abstract}

\begin{keywords}
domain generalization, generalization error bounds, Rademacher complexity, kernel methods, universal consistency, kernel approximation
\end{keywords}

\section{Introduction}


Domain generalization (DG) is a machine learning problem where the learner has access to labeled training data sets from several related prediction problems, and must generalize to a future  prediction problem for which no labeled data are available.
In more detail, there are $N$ labeled training data sets
$S_i = (X_{ij},Y_{ij})_{1 \le j \le n_i}$, $i=1,\ldots, N$, that describe similar but possibly distinct prediction tasks. 
The objective is to learn a rule that takes as input a previously unseen {\em unlabeled} test data set $X_1^T, \ldots, X_{n_T}^T$,
and accurately predicts outcomes for these or possibly other unlabeled points drawn from the associated learning task.


DG arises in several applications. One prominent example is precision medicine, where a common objective is to design a patient-specific classifier (e.g., of health status) based on clinical measurements, such as an electrocardiogram or electroencephalogram. In such measurements, patient-to-patient variation is common, arising from biological variations between patients, or technical or environmental factors influencing data acquisition. Because of patient-to-patient variation, a classifier that is trained on data from one patient may not be well matched to another patient. In this context, domain generalization enables the transfer of knowledge from historical patients (for whom labeled data are available) to a new patient without the need to acquire training labels for that patient. A detailed example in the context of flow cytometry is given below.

We view domain generalization as a conventional supervised learning problem where the original feature space is augmented to include the marginal distribution generating the features. We refer to this reframing of DG as ``marginal transfer learning," because it reflects the fact that in DG, information about the test task must be drawn from that task's marginal feature distribution. Leveraging this perspective, we formulate two statistical frameworks for analyzing DG. The first framework allows the observations within each data set to have arbitrary dependency structure, and makes connections to the literature on Campbell measures and structured prediction. The second framework is a special case of the first, assuming the data points are drawn i.i.d. within each task, and allows for a more refined risk analysis. 

We further develop a distribution-free kernel machine that employs a kernel on the aforementioned augmented feature space. Our methodology is shown to yield a
universally consistent learning procedure under both statistical frameworks,
meaning that the domain generalization risk tends to the best possible value as the relevant sample sizes tend infinity,
with no assumptions on the data generating distributions. Although DG may be viewed as a conventional supervised learning problem on an augmented feature space, the analysis is nontrivial owing to unique aspects of the sampling plans and risks.
We offer a computationally efficient and freely available implementation of our algorithm, and present a thorough experimental study validating the proposed approach on one synthetic and three real-world data sets, including comparisons to a simple pooling approach.\footnote{Code is available at https://github.com/aniketde/DomainGeneralizationMarginal.}

To our knowledge, the problem of domain generalization was first proposed and studied by our earlier conference publication \citep{blanchard:11:nips} which this work extends in several ways. It adds (1) a new statistical framework, the agnostic generative model described below; (2) generalization error and consistency results for the new statistical model; (3) an extensive literature review; (4) an extension to the regression setting in both theory and experiments; (5) a more general statistical analysis, in particular, we no longer assume a bounded loss, and therefore accommodate common convex losses such as the hinge and logistic losses; (6) extensive experiments (the conference paper considered a single small data set); (7) a scalable implementation based on a novel extension of random Fourier features; and (8) error analysis for the random Fourier features approximation.



\section{Motivating Application: Automatic Gating of Flow Cytometry Data}
\label{sec:gating}

Flow cytometry is a high-throughput measurement platform that is an
important clinical tool for the diagnosis of blood-related
pathologies. This technology allows for quantitative analysis of
individual cells from a given cell population, derived for example from a
blood sample from a patient.  We may think of a flow cytometry
data set as a set of $d$-dimensional attribute vectors $(X_j)_{1 \le j \le
n}$, where $n$ is the number of cells analyzed, and $d$ is the number of
attributes recorded per cell. These attributes pertain to various
physical and chemical properties of the cell.  Thus, a flow cytometry data
set may be viewed as a random sample from a patient-specific distribution.

Now suppose a pathologist needs to analyze a new (test) patient
with data $(X_j^T)_{1 \le j \le n_T}$. Before proceeding, the
pathologist first needs the data set to be ``purified" so that only
cells of a certain type are present. For example, lymphocytes are
known to be relevant for the diagnosis of leukemia, whereas
non-lymphocytes may potentially confound the analysis. In other
words, it is necessary to determine the label $Y_j^T \in \{-1,1\}$
associated to each cell, where $Y_j^T = 1$ indicates that the $j$-th
cell is of the desired type.

In clinical practice this is accomplished through a manual process known
as ``gating."  The data are visualized through a sequence of
two-dimensional scatter plots, where at each stage a line segment or
polygon is manually drawn to eliminate a portion of the unwanted cells.
Because of the variability in flow cytometry data, this process is
difficult to quantify in terms of a small subset of simple rules. Instead,
it requires domain-specific knowledge and iterative refinement. Modern
clinical laboratories routinely see dozens of cases per day, so it is desirable to automate this process.

Since clinical laboratories maintain historical databases, we can assume
access to a number ($N$) of historical (training) patients that have already been
expert-gated.  Because of biological and technical variations in flow
cytometry data, the distributions $P_{XY}^{(i)}$ of the historical
patients will vary. To illustrate the flow cytometry gating problem,
we use the NDD data set from the FlowCap-I challenge.\footnote{We will revisit this data set in Section
\ref{Flow_Exp} where details are given.} For example, Fig. \ref{fig:pb_scatter} shows exemplary
two-dimensional scatter plots for two different patients -- see caption
for details. Despite differences in the two distributions, there are
also general trends that hold for all patients.
Virtually every cell type of interest has a known tendency (e.g., high or
low) for most measured
attributes. Therefore, it is reasonable to assume that there is an
underlying distribution (on distributions) governing flow cytometry data
sets, that produces roughly similar distributions thereby making possible
the automation of the gating process.

\begin{figure*}[th]
%
\ifpdf
\centering
\includegraphics[width=\linewidth]
  {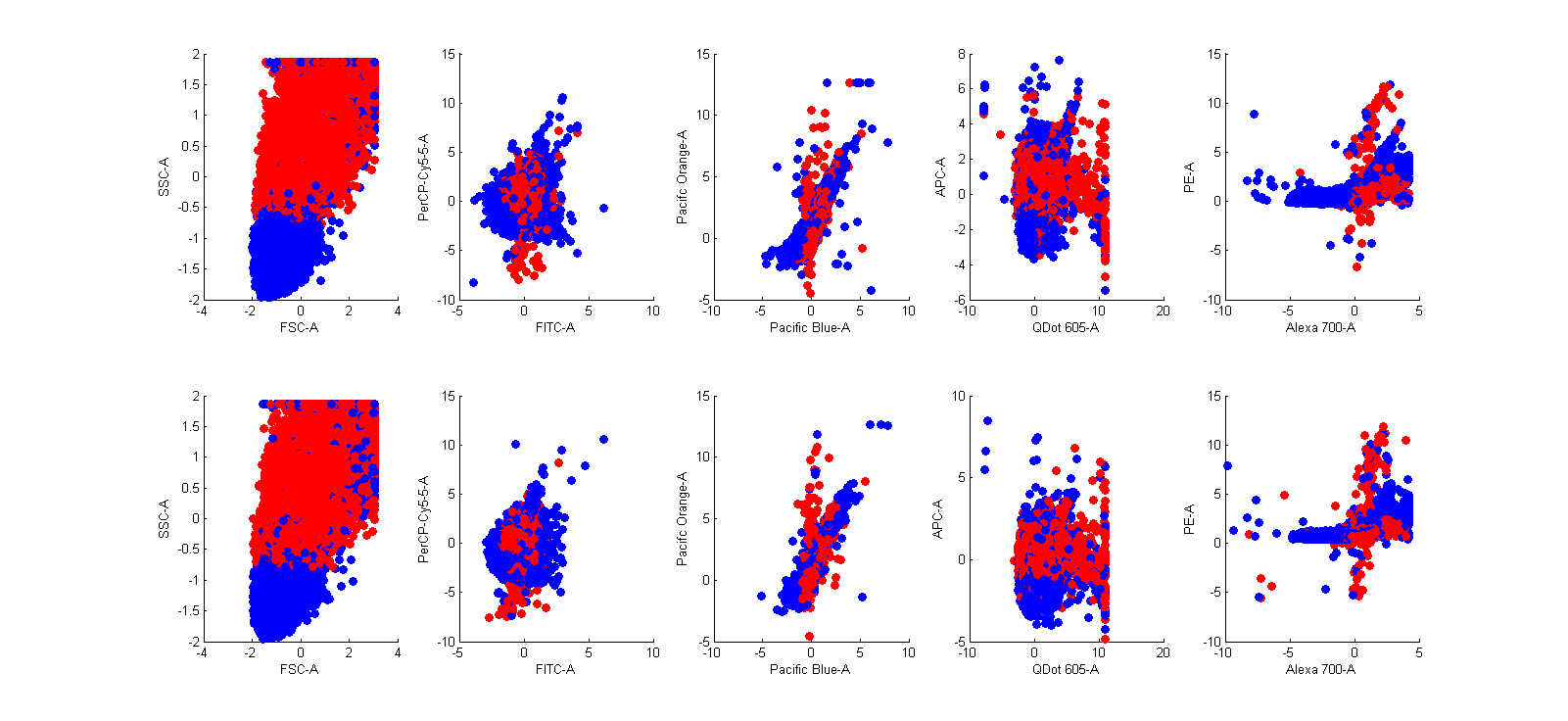}
\fi
\vskip -0.10in
\caption{
Two-dimensional projections of multi-dimensional flow cytometry data.
Each row corresponds to a single patient, and each column to a particular
two-dimensional projection.
The distribution of cells differs from patient to patient. The colors
indicate the results of gating, where a particular type of cell, marked
dark (blue), is separated from all other cells, marked bright (red).
Labels were manually selected by a domain expert.
}
\label{fig:pb_scatter}
\end{figure*}
\section{Formal Setting and General Results}
\label{sec:formal}

In this section we formally define domain generalization via two possible data generation models together with associated notions of risk. We also provide a basic generalization error bound for the first of these data generation models.

Let $\cX$ denote the observation space (assumed to be a Radon space)
and $\cY \subseteq \mbr$ the output space. Let $\fP_{\cX}$ and $\fP_{\cX\times\cY}$ denote the set of probability distributions on $\cX$ and $\cX \times \cY$, respectively. The spaces $\fP_{\cX}$ and $\fP_{\cX\times\cY}$ are endowed with the topology of weak convergence and the associated Borel $\sigma$-algebras. 

The disintegration theorem for joint probability distributions (see for instance \citealp{Kal02}, Theorem 6.4) tells us that (under suitable regularity properties, satisfied if $\cX$ is a Radon space)
any element $P_{XY} \in \fP_{\cX\times\cY}$ can be written as a
Markov semi-direct product $P_{XY} = P_{X} \bullet P_{Y|X}$, with $P_X \in \fP_{\cX}$,
$P_{Y|X} \in \fP_{Y|X}$, where $\fP_{Y|X}$ is the space of conditional probability distributions of $Y$ given $X$, also called Markov transition kernels from $\cX$ to $\cY$.
This specifically means that
\begin{equation}
    \label{eq:disintegration}
\ee{(X,Y)\sim P_{XY}}{h(X,Y)}  = \int \paren{ \int h(x,y) P_{Y|X}(dy|X=x)}
P_X(dx),
\end{equation}
for any integrable function $h:\cX\times\cY \rightarrow \mbr$. Following common
terminology in the statistical learning literature, we will also call $P_{Y|X}$ the
{\em posterior} distribution (of $Y$ given $X$).

We assume that $N$ training samples $S_i = (X_{ij},Y_{ij})_{1 \leq j \leq n_i}$, $i=1,\ldots,N$, are observed. To allow for possibly unequal sample
sizes $n_i$, it is convenient to formally identify each sample $S_i$ with
its associated empirical distribution $\wh{P}_{XY}^{(i)}=\frac{1}{n_i} \sum_{j=1}^{n_i} \delta_{(X_{ij},Y_{ij})} \in \fP_{\cX\times\cY}$. We assume that the ordering of the observations inside a given sample $S_i$ is
arbitrary and does not contain any relevant information.
We also denote by $\wh{P}^{(i)}_X=\frac{1}{n_i} \sum_{j=1}^{n_i} \delta_{X_{ij}} \in \fP_{\cX}$ the $i$th training sample without labels. 
Similarly, a test sample is denoted by $S^T=(X^T_j,Y^T_j)_{1 \le j \le
n_T}$, and the empirical distribution of the unlabeled data by $\wh{P}^{T}_X$.

\subsection{Data Generation Models}
\label{sec:datagen}

We propose two data generation models. The first is more general, and includes the second as a special case.
\begin{assmpt}[AGM]
  There exists a distribution $P_S$ on $\fP_{\cX\times\cY}$
  such that $S_1,\ldots,S_N$ are i.i.d. realizations from $P_S$.
\end{assmpt}
We call this the
{\em agnostic generative model}. This is a quite general model
in which samples are assumed to be identically distributed and independent of each other, but nothing particular is assumed
about the generation mechanism for observations inside a given sample, nor for the (random) sample size.

We also introduce a more specific generative mechanism, where  observations
$(X_{ij},Y_{ij})$ inside the sample $S_i$ are themselves i.i.d. from 
$P^{(i)}_{XY}$, a latent unobserved random distribution,
as follows. The symbol $\otimes$ indicates a product measure.

\begin{assmpt}[2SGM]
  There exists a distribution $\mu$ on
  $\fP_{\cX\times\cY}$ and a distribution $\nu$ on $\mathbb{N}$, such that $(P^{(1)}_{XY},n_1), \ldots , (P^{(N)}_{XY},n_N)$ are i.i.d. realizations from $\mu \otimes \nu $, and conditional to $(P^{(i)}_{XY},n_i)$
  the sample $S_i$ is made of $n_i$ i.i.d. realizations of $(X,Y)$ following the distribution $P^{(i)}_{XY}$. 
\end{assmpt}

This model, called the {\em 2-stage generative model}, is a subcase of {\bf (AGM)}: since
the $(P^{(i)}_{XY},n_i)$ are i.i.d., the samples $S_i$ also are.
This model was the one studied in our conference paper \citep{blanchard:11:nips}. It
has been considered in the distinct but related context of
``learning to learn'' (\citealp{baxter:2000:jair}; see also a more detailed discussion below, Section~\ref{se:ltl}).
Many of our results will hold for the agnostic generative model, but the two-stage generative model allows for additional developments.

Since in {\bf (2SGM)} we assume that the latent random distribution
of the points in the sample $S_i$ and its size $n_i$ are independent
(which is not necessarily the case for {\bf (AGM)}), 
in this model it becomes
a formally well-defined question to ask how the learning problem evolves if we only change the size of the samples. In other words, we may study the setting where the generating distribution $\mu$ remains fixed, but
their size distribution $\nu$ changes. In particular, this work examines the following different
situations of interest in which the distribution $\mu$ is fixed:
\begin{itemize}
\item The samples all have the same fixed size $n$, i.e. $\nu=\delta_n$;
\item The training samples are subsampled (without replacement) to a fixed size $n$ in order to reduce computational complexity; this reduces to the first setting;
\item Both the training samples $N$ and their size grow. In this case the size distribution
$\nu_N$ depends on $N$ (possibly $\nu_N=\delta_{n(N)}$)
\end{itemize}
We note that when the distribution of the sample sizes $n_i$ is a Poisson or a mixture of Poisson distributions, the
{\bf (2SGM)} is (a particular case of) what is known as a Cox model or doubly stochastic Poisson process in the point process literature (see, e.g., \citealp{daley2003introductionvol1}, Section~6.2), which is a Poisson process with random
(inhomogeneous) intensity.


\subsection{Decision Functions and Augmented Feature Space}
\label{sec:decfun}

In domain generalization, the learner's goal is to infer from the training data a general rule that takes an arbitrary, previously unseen, unlabeled data set corresponding to a new prediction task, and produces a classifier for that prediction task that could be applied to any $x$ (possibly outside the unlabeled data set). In other words, the learner should output a mapping $g:\fP_{\cX} \to (\cX \to \mbr)$. Equivalently, the learner should output a function $f:\fP_{\cX} \times \cX \to \mbr$,
where the two notations are related via $g(P_X)(x) = f(P_X,x)$. In the latter viewpoint, $f$
may be viewed as a standard decision function on the ``augmented" or ``extended" feature space $\fP_{\cX} \times \cX$, which facilitates connections to standard supervised learning. We refer to this view of DG as {\em marginal transfer learning}, because the information that facilitates generalization to a new task is conveyed entirely through the marginal distribution. In the next two subsections, we present two definitions of the risk of a decision function $f$, one associated to each of the two data generation models.

\subsection{Risk and Generalization Error Bound under the Agnostic Generative Model}

Consider a test sample $S^T=(X^T_j,Y^T_j)_{1 \le j \le n_T}$, whose labels are not observed. 
If $\ell: \mbr \times \cY \mapsto
\mbr_+$ is a loss function for a single prediction,
and predictions of a fixed decision function $f$ on the test sample are given by
$\wh{Y}_j^T = f(\wh{P}^{T}_X,X_j^{T})$,
then the empirical average loss incurred on
the test sample is 
\[
\cL(S^T,f) := \frac{1}{n_T} \sum_{j=1}^{n_T}
\ell(\wh{Y}^T_j,Y^T_j)\,.
\]
Thus, we define the \emph{risk} of a decision function
as the average of the above quantity when test samples
are drawn according to the same mechanism as the training
samples:
\begin{equation*}
\cE(f) := \mbe_{S^T \sim P_S}\brac{\cL(S^T,f)} = \mbe_{S^T \sim P_S}
\brac{\frac{1}{n_T} \sum_{j=1}^{n_T} \ell(f(\wh{P}^T_X,X_j^T),Y_j^T)}.
\end{equation*}
In a similar way, we define the \emph{empirical risk} of a decision function as its average prediction
error over the training samples:
\begin{equation}
  \label{eq:train_err}
  \wh{\cE}(f,N) := \frac{1}{N}\sum_{i=1}^N \cL(S_i,f) = \frac{1}{N} \sum_{i=1}^N
  \frac{1}{n_i} \sum_{j=1}^{n_i} \ell(f(\wh{P}^{(i)}_X,X_{ij}),Y_{ij}).
\end{equation}
\begin{remark} It is possible to understand the above setting as a particular instance
  of a {\em structured output learning problem} \citep{tsochantaridis2005large,bakir2007predicting}, in which the input variable $X^*$ is
  $\wh{P}^T_X$, and the ``structured output'' $Y^*$ is the collection of labels $(Y_i^T)_{1\leq i \leq n_T}$  (matched to their respective input points). As is generally the case for structured output learning, the nature of the problem and the ``structure'' of the outputs
is very much encoded in the particular form of the loss function. In our setting the loss
function is additive over the labels forming the collection $Y^*$, and we will exploit
this particular form for our method and analysis.
\end{remark}

\begin{remark} \label{rem:campbell} The risk $\cE(f)$ defined above can be described in the following way:
  consider the random variable $\xi:=(\wh{P}_{XY}; (X,Y))$ obtained by first drawing $\wh{P}_{XY}$ according to $P_S$, then, conditional to this, drawing $(X,Y)$ according to $\wh{P}_{XY}$.
  The risk is then the expectation of a certain function of $\xi$
  (namely $F_f(\xi)=\ell(f(\wh{P}_X,X),Y)$). In probability theory literature, the distribution
  of the variable $\xi$ is known as the {\em Campbell measure} associated to the 
  distribution $P_S$ over the measure space $\fP_{\cX \times \cY}$; this object is in particular
  of fundamental use in point process theory (see, e.g., \citealp{daley2008introductionvol2}, Section 13.1). We will denote it by $\cC(P_S)$ here.
  This intriguing connection suggests that more elaborate tools of
  point process literature may find their use to analyze DG when various classical point processes are considered for the generating distribution.
  The Campbell measure will also appear in the Rademacher analysis below.
\end{remark}

The next result establishes an analogue of classical Rademacher analysis
under the agnostic generative model.

\begin{theorem}[Uniform estimation error control under {\bf (AGM)}]
\label{th:radgenbound}
Let $\cF$ be a class of decision functions $\fP_\cX \times \cX \rightarrow \mbr$.
Assume the following boundedness condition holds:
\begin{equation}
\label{eqn:agmbnd}
  \sup_{f \in \cF} \sup_{P_X \in \fP_\cX} \sup_{(x,y) \in \cX \times \cY} \ell(f(P_X,x),y) \leq B_\ell.
\end{equation}
Under {\bf (AGM)}, if $S_1,\ldots,S_N$ are i.i.d. realizations from $P_S$, then 
with probability at least $1-\delta$ with respect to the draws of the training samples: 
\begin{multline}
\label{eq:rademmainbound}
\sup_{f \in \cF} \abs{ \wh{\cE}(f,N) - \cE(f)}\\
\leq \frac{2}{N} \mbe_{(\widehat{P}^{(i)}_{XY};(X_i,Y_i))\sim \cC(P_S)^{\otimes{N}}}
\ee{(\eps_i)_{1\leq i \leq N}}{\sup_{f \in \cF} \abs{\sum_{i=1}^N
\eps_i \ell(f(\widehat{P}^{(i)}_X,X_i),Y_i)}} + B_\ell \sqrt{\frac{\log(\delta^{-1})}{2N}},
\end{multline}
where $(\eps_i)_{1\leq i \leq N}$ are i.i.d. Rademacher variables, independent from
$(\wh{P}^{(i)}_{XY},(X_i,Y_i))_{1\leq i \leq N}$, and $\cC(P_S)$ is
the Campbell measure on $\fP_{\cX \times \cY} \times (\cX\times\cY)$ associated to $P_S$ (see Remark~\ref{rem:campbell}).
\end{theorem}

\begin{proof}
  Since the $(S_i)_{1 \leq i \leq N}$ are i.i.d., $\sup_{f\in\cF}\abs{\wh{\cE}(f,N) - \cE(f)}$
takes the form of a uniform deviation between average and expected loss over
the function class $\cF$. We can therefore apply standard analysis
(Azuma-McDiarmid inequality followed by Rademacher complexity analysis for a nonnegative bounded loss;
see, e.g., \citealp{kolt01,BarMen02}, Theorem~8) to obtain that with probability at least $1-\delta$
with respect to the draw of the training samples $(S_i)_{1 \leq i \leq N}$:
\[
  \sup_{f \in \cF} \abs{ \wh{\cE}(f,N) - \cE(f)}
  \leq \frac{2}{N} \mbe_{(S_i)_{1\leq i \leq N}}\ee{(\eps_i)_{1\leq i \leq N}}{\sup_{f \in \cF}
    \abs{\sum_{i=1}^N \eps_i \cL(S_i,f)}} + B_\ell \sqrt{\frac{\log(\delta^{-1})}{2N}} \,,
\]
where $(\eps_i)_{1\leq i \leq N}$ are i.i.d. Rademacher variables, independent of $(S_i)_{1 \leq i \leq N}$.


We may write
\[
\cL(S_i,f) = \frac1{n_i} \sum_{j=1}^{n_i} \ell(f(\widehat{P}_X^{(i)},X_{ij}),Y_{ij}) = \ee{(X,Y)\sim \widehat{P}_{XY}^{(i)}}{\ell(f(\widehat{P}_X^{(i)},X),Y)};
\]
thus, we have
\begin{multline*}
\mbe_{(S_i)_{1\leq i \leq N}}\ee{(\eps_i)_{1\leq i \leq N}}{\sup_{f \in \cF}
  \abs{\sum_{i=1}^N \eps_i \cL(S_i,f)}}\\
\begin{aligned}
& =
\mbe_{(\widehat{P}_{XY}^{(i)})_{1\leq i \leq N}} \ee{(\eps_i)_{1\leq i \leq
N}}{\sup_{f \in \cF} \abs{\sum_{i=1}^N \eps_i
\ee{(X_i,Y_i) \sim \widehat{P}^{(i)}_{XY}}
{\ell(f(\widehat{P}^{(i)}_X,X_i),Y_i)}}} \\
& \leq 
\mbe_{(\widehat{P}_{XY}^{(i)})_{1\leq i \leq N}} \mbe_{(X_1,Y_1) \sim \widehat{P}_{XY}^{(1)}, \ldots,
  (X_N,Y_N) \sim \widehat{P}_{XY}^{(N)}}
\ee{(\eps_i)_{1\leq i \leq N}}{\sup_{f \in \cF} \abs{\sum_{i=1}^N
\eps_i
\ell(f(\widehat{P}^{(i)}_X,X_i),Y_i)}}.
\end{aligned}
\end{multline*}
In the above inequality, the
inner expectation  on the $(X_i,Y_i)$ is pulled outwards by Jensen's inequality and convexity of the supremum.

To obtain the announced estimate, notice that the above expectation is the same as the expectation with respect to the $N$-fold Campbell measure $\cC(P_S)$.
\end{proof}  

\begin{remark} \label{rem:camprad} The main term in the theorem is just the conventional
Rademacher complexity for the augmented feature space $\fP_\cX \times \cX$ endowed with the Campbell measure
$\cC(P_S)$. It could also be thought of as the Rademacher complexity for the meta-distribution $P_S$.
\end{remark}


\subsection{Idealized Risk under the 2-stage Generative Model}

\label{se:2gsm_gen}

The additional structure of {\bf (2SGM)} allows us to define a different notion of risk under this model.
Toward this end, 
let $P_{XY}^T$ denote the testing data distribution, $P_X^T$ the marginal $X$-distribution of $P_{XY}^T$, $n_T$ the test sample size, $S^T=(X_i^T,Y_i^T)_{1 \le i \le n_T}$ the testing sample, and $\widehat{P}_X^T$ the empirical $X$-distribution. Parallel to the training data generating mechanism under {\bf (2SGM)}, we assume that $P^T_{XY}$ is drawn according to $\mu$.

We first define the risk of any $f:\fP_{\cX} \times \cX \to \mbr$, conditioned on a test sample of size $n_T$, to be
\begin{equation}
  \label{eq:err_nt}
 \cE(f|n_T) := \mbe_{P^T_{XY} \sim \mu} \ee{(X_i^T,Y_i^T)_{1\leq i \leq n_T} \sim (P^T_{XY})^{\otimes n_T}}{\frac{1}{n_T} \sum_{i=1}^{n_T} \ell(f(\wh{P}_X^T,X_i^T),Y_i^T)}.
\end{equation}
In this definition, the test sample $S^T$ consist of $n_T$ iid draws from and random $P_{XY}^T$ drawn from $\mu$. This conditional risk may be viewed as the previously defined risk for {\bf (AGM)} specialized to {\bf (2SGM)}, where $\nu = \delta_{n_T}$.





We are particularly interested in the idealized situation where the test sample size $n_T$ grows to infinity.
By the law of large numbers, as $n_T$ grows, $\wh{P}^T_X$  converges to $P^T_X$ (in the sense of weak convergence). This motivates the introduction of the following \emph{idealized risk} which assumes access to an infinite test sample, and thus to the true marginal $P^T_X$:
\begin{equation}
\label{eq:err_infty}
\cE^\infty(f) := \mbe_{P^T_{XY} \sim \mu} \ee{(X^T,Y^T) \sim
P^T_{XY}}
{\ell(f(P^T_X,X^T),Y^T)}.
\end{equation}
Note that both notions of risk in \eqref{eq:err_nt} and \eqref{eq:err_infty} depends on $\mu$, but not on $\nu$.

The following proposition more precisely motivates viewing $\cE^\infty(f)$ as a limiting case of $\cE(f|n_T)$.
\begin{proposition}
  \label{prop:conv_err_2gsm}
  Assume $\ell$ is a bounded, $L$-Lipschitz loss function and $f: \fP_{\cX} \times \cX \rightarrow \mbr$ is a fixed decision function which is continuous with
  respect to both its arguments (recalling $\fP_{\cX}$ is endowed with the weak convergence topology). Then
  it holds under {\bf (2SGM)}:
  \[
    \lim_{n_T \rightarrow \infty} \cE(f|n_T) = \cE^{\infty}(f).
  \]
\end{proposition}

\begin{remark}
This result provides one setting where the risk $\cE^\infty$ is clearly motivated as the goal of asymptotic analysis when $n_T \to \infty$. Although Proposition \ref{prop:conv_err_2gsm} is not used elsewhere in this work, a more quantitative version of this result is stated below for kernels (see Theorem \ref{th:tworisks}), where convergence holds uniformly and the assumption of a bounded loss is dropped.
\end{remark}


To gain more insight into the idealized risk $\cE^\infty$,
%
recalling the standard decomposition~\eqref{eq:disintegration} of
$P_{XY}$ into the marginal $P_X$ and the posterior $P_{Y|X}$, we observe that we can apply the disintegration theorem not only to any $P_{XY}$, but also to $\mu$, and thus
decompose it into two parts, $\mu_X$ which
generates the marginal distribution $P_X$, and $\mu_{Y|X}$ which,
conditioned on $P_X$, generates the posterior $P_{Y|X}$. (More precise notation might be $\mu_{P_X}$ instead of $\mu_X$ and $\mu_{P_{Y|X}|P_X}$ instead of $\mu_{Y|X}$, but this is rather cumbersome.) Denote $\tX =
(P_X,X)$. We then have, using Fubini's theorem,
\begin{align*}
\cE^\infty(f) & = \mbe_{P_{X} \sim \mu_X} \mbe_{P_{Y|X} \sim \mu_{Y|X}}
\mbe_{X \sim P_X} \ee{Y \sim P_{Y|X}}{\ell(f(\tX),Y)} \\
& = \mbe_{P_{X} \sim \mu_X} \mbe_{X \sim P_X} \mbe_{P_{Y|X} \sim
\mu_{Y|X}} \ee{Y \sim P_{Y|X}}{\ell(f(\tX),Y)} \\
& = \ee{(\tX,Y) \sim Q^\mu}{\ell(f(\tX),Y)}.
\end{align*}
Here $Q^\mu$ is the distribution that generates $\tX$ by first drawing
$P_X$ according to $\mu_X$, and then drawing $X$ according to $P_X$;
similarly, $Y$ is generated, conditioned on $\tX$, by first drawing
$P_{Y|X}$ according to $\mu_{Y|X}$, and then drawing $Y$ from $P_{Y|X}$.
(The distribution of $\wt{X}$ again takes the form of a Campbell measure, see Remark~\ref{rem:campbell}.)

From the previous expression, we see that the risk $\cE^\infty$ is like a standard
supervised learning risk based on $(\tX,Y) \sim Q^\mu$.  Thus, we can
deduce properties that are known to hold for supervised learning risks.
For example, in the binary classification setting, if the loss is the 0/1
loss, then
$f^*(\tX) = 2 \tilde{\eta}(\tX) - 1$ is an optimal predictor, where
$\tilde{\eta}(\tX) = \ee{Y \sim Q^\mu_{Y|\tX}}{\ind{Y=1}}$,
and
\[
\cE^\infty(f) - \cE^\infty(f^*) = \ee{\tX \sim
Q^\mu_{\tX}}{\ind{\sign(f(\tX)) \ne \sign(f^*(\tX))}|2 \tilde{\eta}(\tX)
- 1|}.
\]
Furthermore, consistency in the sense of $\cE^\infty$ with respect to a general loss $\ell$
(thought of as a surrogate) will imply consistency for the 0/1 loss,
provided $\ell$ is classification calibrated \citep{bartlett06}.



For a given loss $\ell$, the optimal or Bayes $\cE^\infty$-risk in DG is in general larger than the expected Bayes risk under the (random) test sample generating distribution $P_{XY}^T$,
because it is typically not possible to fully determine the Bayes-optimal predictor from only the marginal $P_X^T$. There is, however, a
condition where for $\mu$-almost all test
distributions $P_{XY}^T$, the decision function $f^*(P^T_X,.)$
(where $f^*$ is a
global minimizer of Equation \ref{eq:err_infty}) coincides with an optimal Bayes
decision function for $P_{XY}^T$. This condition is simply that the posterior
$P_{Y|X}$ is ($\mu$-almost surely) a function of $P_X$ (in other words, 
with the notation introduced above, $\mu_{Y|X}(P_X)$ is a Dirac
measure for $\mu$-almost all $P_X$). Although we will {\em not} be assuming this
condition throughout the paper under {\bf (2SGM)}, observe that it is implicitly assumed in
the motivating application presented in Section~\ref{sec:gating}, where an
expert labels the data points by just looking at their marginal
distribution.

\begin{lemma}
\label{lem:determ}
For a fixed distribution $P_{XY}$, and a decision function
$g:\cX\rightarrow \mbr$, let us denote $\cR(g,P_{XY}) =
\ee{(X,Y)\sim P_{XY}}{\ell(g(X),Y)}$ and
\[
\cR^*(P_{XY}) := \min_{g: \cX \rightarrow \mbr} \cR(g,P_{XY}) =
\min_{g: \cX \rightarrow \mbr} \ee{(X,Y)\sim P_{XY}}{\ell(g(X),Y)}
\]
the corresponding optimal (Bayes) risk for the loss function $\ell$ under data distribution $P_{XY}$.
Then
under ~{\bf (2SGM)}:
\[
\cE^\infty(f^*) \ge \ee{P_{XY} \sim \mu}{\cR^*(P_{XY})},
\]
where $f^*:\fP_\cX \times \cX \rightarrow \mbr$ is a minimizer of the idealized DG risk $\cE^\infty$
defined in~\eqref{eq:err_infty}. 

Furthermore, if $\mu$ is a distribution on $\fP_{\cX \times \cY}$ such that
$\mu$-a.s. it holds $P_{Y|X}= F(P_X)$ for some deterministic mapping $F$, then for $\mu$-almost all $P_{XY}$:
\[
\cR(f^*(P_X,.),P_{XY}) = \cR^*(P_{XY})
\]
and
\[
\cE^\infty(f^*) 
= \ee{P_{XY} \sim \mu}{\cR^*(P_{XY})}.
\]
\end{lemma}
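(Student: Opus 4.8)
The plan is to reduce the infinite-sample risk to an average of per-task Bayes risks and then exploit the degeneracy of the posterior under the stated assumption. First I would observe that, directly from the definition \eqref{eq:err_infty} together with the definition of $\cR$, for \emph{any} decision function $f$ one has
\[
\cE(f,\infty) = \ee{P_{XY}\sim\mu}{\cR(f(P_X,\cdot),P_{XY})},
\]
since the inner expectation over $(X,Y)\sim P_{XY}$ appearing in \eqref{eq:err_infty} is exactly $\cR(f(P_X,\cdot),P_{XY})$ once we read $f(P_X,\cdot)$ as the function $x\mapsto f(P_X,x)$. Because $f(P_X,\cdot)$ is itself an admissible candidate in the minimization defining $\cR^*$, we obtain the pointwise bound $\cR(f(P_X,\cdot),P_{XY}) \ge \cR^*(P_{XY})$ for every $P_{XY}$, and hence the universal lower bound $\cE(f,\infty) \ge \ee{P_{XY}\sim\mu}{\cR^*(P_{XY})}$, valid for all $f$ and in particular for $f^*$.

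Next I would exhibit a decision function attaining this bound. For each marginal $Q\in\fP_{\cX}$, the assumption $P_{Y|X}=F(P_X)$ means that the task $Q\bullet F(Q)$ is determined by $Q$ alone, so I can select a Bayes-optimal predictor $g_Q^*:\cX\to\mbr$ with $\cR(g_Q^*,Q\bullet F(Q)) = \cR^*(Q\bullet F(Q))$ and set $f_0(Q,x) := g_Q^*(x)$. Then $\mu$-almost surely $P_{XY}=P_X\bullet F(P_X)$, whence $\cR(f_0(P_X,\cdot),P_{XY}) = \cR^*(P_{XY})$, giving $\cE(f_0,\infty) = \ee{P_{XY}\sim\mu}{\cR^*(P_{XY})}$. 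I expect the genuine technical obstacle to lie precisely here: the assignment $Q\mapsto g_Q^*$ must be chosen jointly measurable in $(Q,x)$ so that $f_0$ is a legitimate decision function. This is a measurable-selection issue, which should be dischargeable using the Polish-space regularity already invoked for the disintegration theorem together with mild regularity of $\ell$; alternatively one can work with $\eps$-optimal selections and pass to the limit $\eps\to 0$.

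Finally I would combine the two halves. Since $f^*$ minimizes \eqref{eq:err_infty} and $f_0$ is admissible, $\cE(f^*,\infty)\le\cE(f_0,\infty) = \ee{P_{XY}\sim\mu}{\cR^*(P_{XY})}$, while the universal lower bound gives the reverse inequality; hence $\cE(f^*,\infty) = \ee{P_{XY}\sim\mu}{\cR^*(P_{XY})}$, which is the second claim. For the first claim I would note that the nonnegative function $P_{XY}\mapsto \cR(f^*(P_X,\cdot),P_{XY}) - \cR^*(P_{XY})$ has $\mu$-integral equal to $\cE(f^*,\infty) - \ee{P_{XY}\sim\mu}{\cR^*(P_{XY})} = 0$; a nonnegative integrand with vanishing integral vanishes almost everywhere, so $\cR(f^*(P_X,\cdot),P_{XY}) = \cR^*(P_{XY})$ for $\mu$-almost all $P_{XY}$, as required. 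Apart from the measurable-selection step, the argument uses only the definition of the Bayes risk and does not require the minimizer $f^*$ to be unique.
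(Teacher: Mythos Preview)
Your proposal is correct and follows essentially the same route as the paper: establish the pointwise lower bound $\cR(f(P_X,\cdot),P_{XY})\ge\cR^*(P_{XY})$, then construct a candidate $f_0$ by stitching together per-marginal Bayes predictors for $P_X\bullet F(P_X)$ to saturate the bound. Your write-up is in fact more complete than the paper's terse argument: you make explicit the sandwich $\cE(f^*,\infty)\le\cE(f_0,\infty)$ via minimality of $f^*$, the ``nonnegative integrand with zero integral vanishes a.e.'' step that transfers the conclusion from $f_0$ back to $f^*$, and the measurable-selection caveat, all of which the paper leaves implicit.
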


\begin{proof}
For any $f:\fP_{\cX}\times \cX \rightarrow \mbr$,
one has for all $P_{XY}$: $\cR(f(P_X,.),P_{XY}) \geq \cR^*(P_{XY})$. Taking expectation with respect to $P_{XY}$ establishes the first claim.
Now for any fixed $P_X \in \fP_{\cX}$,
consider $P_{XY} := P_X \bullet F(P_X)$ and $g^*(P_X)$ a Bayes decision
function
for this joint distribution. Pose $f(P_X,x) := g^*(P_X)(x)$. Then $f$ coincides
for $\mu$-almost all $P_{XY}$ with a Bayes decision function for $P_{XY}$,
achieving
equality in the above inequality. The second equality follows by taking
expectation over $P_{XY}\sim\mu$.
\end{proof}

Under {\bf (2SGM)}, we will establish that our proposed learning algorithm is
$\cE^\infty$-consistent, provided the average sample size grows to infinity as well as the total number of samples. Thus, the above result provides a condition on $\mu$ under which it is possible to asymptotically attain the (classical, single-task) Bayes risk on any test distribution although {\em no labels from this test
distribution are observed}.

More generally, and speaking informally, if $\mu$ is such that $ P_{Y|X} $ is close to being a function of $P_X$ in some sense, we can expect the Bayes $\cE^\infty$-risk for domain generalization to be close to the expected 
(classical single-task) Bayes risk for a random test distribution. We reiterate, however, that we make no assumptions on $\mu$ in this work so that the two quantities may be far apart. In the worst case, the posterior may be independent of the marginal, in which case a method for domain generalization will do no better than the na\"ive pooling strategy. For further discussion, see the comparison of domain adaptation and domain generalization in the next section.

\section{Related Work}

Since at least the 1990s, machine learning researchers have investigated the possibility of solving one learning problem by leveraging data from one or more related problems. In this section, we provide an overview of such problems and their relation to domain generalization, while also reviewing prior work on DG.

Two critical terms are {\em domain} and {\em task}. Use of these terms is not consistent throughout the literature, but at a minimum, the domain of a learning problem describes the input (feature) space $\cX$ and marginal distribution of $X$, while the task describes the output space $\cY$ and the conditional distribution of $Y$ given $X$ (also called posterior). In many settings, however, the sets $\cX$ and $\cY$ are the same for all learning problems, and the terms ``domain" and ``task" are used interchangeably to refer to a joint distribution $P_{XY}$ on $\cX \times \cY$. This is the perspective adopted in this work, as well as in much of the work on multi-task learning, domain adaptation (DA), and domain generalization. 

Multi-task learning is similar to DG, except only the training tasks are of interest, and the goal is to leverage the similarity among distributions to improve the learning of individual predictors for each task \citep{caruana:97:ml,evgeniou05,yang2009heterogeneous}.  In contrast, in DG, we are concerned with generalization to a new task.


\subsection{Domain Generalization vs. Domain Adaptation}

Domain adaptation refers to the setting in which there is a specific target task and one or more source tasks. The goal is to design a predictor for the target task, for which there are typically few to no labeled training examples, by leveraging labeled training data from the source task(s).

Formulations of domain adaptation may take several forms, depending on the number of sources and whether there are any labeled examples from the target to supplement the unlabeled examples. In multi-source, unsupervised domain adaptation, the learner is presented with labeled training data from several source distributions, and unlabeled data from a target marginal distribution (see \citet{zhang2015multi} and references therein). Thus, the available data are the same as in domain generalization, and algorithms for one of these problems may be applied to the other. 

In all forms of DA, the goal is to attain optimal performance with respect to the joint distribution of the target domain. For example, if the performance measure is a risk, the goal is to attain the Bayes risk for the target domain. To achieve this goal, it is necessary to make assumptions about how the source and target distributions are related \citep{Quionero-Candela:2009:DSM:1462129}. For example, several works adopt the covariate shift assumption, which requires the source and target domains to have the same posterior, allowing the marginals to differ arbitrarily \citep{zadrozny04, huang07, cortes08, sugiyama08kliep, bickel09, kanamori09, yu12, bendavid12alt}. Another common assumption is target shift, which stipulates that the source and target have the same class-conditional distributions, allowing the prior class probability to change \citep{hall81, titterington83, saerens01, storkey09, plessis12, sanderson14, azizzadenesheli2018regularized}. \citet{mansour2009domain,zhang2015multi} assume that the target posterior is a weighted combination of source posteriors, while \citet{zhang2013domain,gong2016domain} extend target shift by also allowing the class-conditional distributions to undergo a location-scale shift, and \citet{tasche17jmlr} assumes the ratio of class-conditional distributions is unchanged. Work on classification with label noise assumes the source data are obtained from the target distribution but the labels have been corrupted in either a label-dependent \citep{blanchard16ejs, natarajan18jmlr, rooyen18jmlr} or feature-dependent \citep{menon18,cannings18,scott19alt} way. Finally, there are several works that assume the existence of a predictor that achieves good performance on both source and target domains \citep{bendavid07nips,bendavid10ml,blitzer08nips,mansour09colt,cortes15kdd,germain16icml}.

The key difference between DG and DA may be found in the performance measures optimized. In DG, the goal is to design a single predictor $f(P_X,x)$ that can apply to any future task, and risk is assessed with respect to the draw of both a new task, and (under {\bf 2SGM}) a new data point from that task. This is in contrast to DA, where the target distribution is typically considered fixed, and the goal is to design a predictor $f(x)$ where, in assessing the risk, the only randomness is in the draw of a new sample from the target task. This difference in performance measures for DG and DA has an interesting consequence for analysis. As we will show, it is possible to attain optimal risk (asymptotically) in DG without making any distributional assumptions like those described above for DA. Of course, this optimal risk is typically larger than the Bayes risk for any particular target domain (see Lemma \ref{lem:determ}). An interesting question for future research is whether it is possible to close or eliminate this gap (between DG and expected DA risks) by imposing distributional assumptions like those for DA.

Another difference between DA and DG lies in whether the learning algorithm must be rerun for each new test data set. Most unsupervised DA methods employ the unlabeled target data for training and thus, when a new unlabeled target data set is presented, the learning algorithm must be rerun. In contrast, most existing DG methods do not assume access to the unlabeled test data at learning time, and are capable of making predictions as new unlabeled data sets arrive without any further training. 

\subsection{Domain Generalization vs. Learning to Learn}
\label{se:ltl}

In the problem of learning to learn (LTL, \citealp{thrun:96:nips}), which has also been called bias learning, meta-learning, and (typically in an online setting) lifelong learning, there are labeled data sets for several tasks, as in DG. There is also a given family of learning algorithms, and the objective is to design a meta-learner that selects the learning algorithm that will perform best on future tasks. The learning theoretic study of LTL traces to the work of \citet{baxter:2000:jair}, who was the first to propose a distribution on tasks, which he calls an ``environment," and which coincides with our $\mu$. Given this setting, the performance of the learning algorithm selected by a meta-learner is obtained by drawing a new task at random, drawing a labeled training data set from that task, running the selected algorithm, drawing a test point, and evaluating the expected loss, where the expectation is with respect to all sources of randomness (new task, training data from new task, test point from new task). 

Baxter analyzes learning algorithms given by usual empirical risk minimization over a hypothesis (prediction function) class, and the goal of the meta-learner is then to select a hypothesis class from a family of such classes. He shows that it is possible to find a good trade-off between the complexity of a hypothesis class and its approximation capabilities for tasks sampled from $\mu$, in an average sense. In particular, the information gained by finding a well-adapted hypothesis class can lead to significantly improved sample efficiency when learning a new task. See \citet{maurer:2009:ml} for further discussion of the results of \citet{baxter:2000:jair}.

Later work on LTL establishes similar results that quantify the ability of a meta-learner to transfer knowledge to a new task. These meta-learners all optimize a particular structure that defines a learning algorithm, such as a feature representation \citep{maurer:2009:ml, Maurer:2016:jmlr,pontil18}, a prior on predictors in a PAC-Bayesian setting \citep{lampert:2014:icml}, a dictionary \citep{maurer:2013:icml}, the bias of a regularizer \citep{denevi18}, and a pretrained neural network \citep{finn17}. It is also worth noting that some algorithms on multi-task learning extract structures that characterize an environment and can be applied to LTL. 


Although DG and LTL both involve generalization to a new task, they are clearly different problems because LTL assumes access to labeled data from the new task, whereas DG only sees unlabeled data and requires no additional learning. In LTL, the learner can achieve the Bayes risk for the new task, the only issue is the sample complexity. DG is thus a more challenging problem, but also potentially more useful since in many transfer learning settings, labeled data for the new task are unavailable.





\subsection{Prior Work on Domain Generalization}

To our knowledge, the first paper to consider domain generalization (as formulated in Section \ref{sec:decfun}) was our earlier conference paper \citep{blanchard:11:nips}. The term ``domain generalization" was coined by \citet{muandet:2013:icml}, who study the same setting and build upon our work by extracting features that facilitate DG. \citet{carbonell:2013:ml} study an active learning variant of DG in the realizable setting, and directly learn the task sampling distribution. 

Other methods for DG were studied by
\citet{khosla12, xu2014exploiting, grubinger:2015:iwann, ghifary15, Gan_2016_CVPR, ghifary:2017:pami, motiian2017unified, li2017deeper, li2018learning, li18adversarial, li18aaai, li2018deep, balaji18neurips, Ding2018DeepDG, shankar2018generalizing, hu19uai, dou19neurips, Carlucci2019DomainGB, wang2019learning,  Akuzawa2019AdversarialIF}.
Many of these methods learn a common feature space for all tasks. Such methods are complementary to the method that we study. Indeed, our kernel-based learning algorithm may be applied after having learned a feature representation by another method, as was done by \citet{muandet:2013:icml}. 
Since our interest is primarily theoretical, we restrict our experimental comparison to another algorithm that also operates directly on the original input space, namely, a simple pooling algorithm that lumps all training tasks into a single data set and trains a single support vector machine.

\section{Learning Algorithm}
\label{sec:alg}



In this section, we introduce a concrete algorithm to tackle the learning problem
exposed in Section~\ref{sec:formal}, using an approach based on kernels.  The function $k:\Omega \times
\Omega \to \mbr$ is called a {\em kernel} on $\Omega$ if the matrix
$(k(x_i,x_j))_{1 \le i,j \le n}$ is symmetric and  positive semi-definite for all
positive integers $n$ and all $x_1, \ldots, x_n \in \Omega$. It is well known that every kernel $k$ on $\Omega$ is associated to a space of functions $f:\Omega \to \mathbb{R}$ called the reproducing kernel Hilbert space (RKHS) ${\cal H}_k$ with kernel $k$. One way to envision ${\cal H}_k$ is as follows. Define $\Phi(x) := k(\cdot,x)$, which is called the {\em
canonical feature map} associated with $k$. Then the span of $\{\Phi(x) \, : \, x \in \Omega\}$, endowed
with the inner product $\inner{\Phi(x),\Phi(x')}=k(x,x')$, is dense in ${\cal H}_k$. We also recall the
{\em reproducing property}, which states that $\langle f, \Phi(x) \rangle = f(x)$ for all $f \in {\cal H}_k$ and $x \in \Omega$. 


For later use, we introduce the notion of a {\em universal} kernel.
A kernel $k$ on a compact metric space $\Omega$ is said to be {\em
universal} when its RKHS is dense in $\cC(\Omega)$, the set of continuous
functions on $\Omega$, with respect to the supremum norm. Universal
kernels are important for establishing universal consistency of many
learning algorithms.  See \citet{steinwart08} for background on kernels and reproducing kernel Hilbert spaces.

Several well-known learning algorithms, such as support vector machines
and kernel ridge regression, may be viewed as minimizers of a
norm-regularized empirical risk over the RKHS of a kernel. A similar
development has also been made for multi-task learning \citep{evgeniou05}.
Inspired by this framework, we consider a general kernel-based algorithm as
follows.

Consider the loss function $\ell:\mbr\times\cY \rightarrow \mbr_+$.
Let $\ol{k}$ be a kernel on $\fP_{\cX} \times \cX$, and let $\cH_{\ol{k}}$
be the associated RKHS. For the
sample $S_i$, recall that $\wh{P}^{(i)}_X=\frac{1}{n_i} \sum_{j=1}^{n_i} \delta_{X_{ij}}$
denotes the corresponding empirical
$X$ distribution. Also consider the extended input
space $\fP_{\cX} \times \cX$ and the extended data $\wt{X}_{ij} =
(\wh{P}^{(i)}_{X}, X_{ij})$. Note that $\wh{P}^{(i)}_{X}$ plays a role
analogous to the task index in multi-task learning. Now define
\begin{equation}
\wh{f}_\lambda =
\argmin_{f \in \cH_{\ol{k}}} \frac{1}{N} \sum_{i=1}^N
\frac{1}{n_i}\sum_{j=1}^{n_i}
\ell(f(\wt{X}_{ij}),Y_{ij}) + \lambda \norm{f}^2.
\label{eq:est}
\end{equation}
Algorithms for solving \eqref{eq:est} will be discussed in Section \ref{sec:imp}.



\subsection{Specifying the Kernels}

In the rest of the paper we will consider a kernel $\ol{k}$ on
$\fP_{\cX} \times \cX$ of the product form
\begin{equation}
\label{eq:prodkern}
\ol{k}( (P_1,x_1) , (P_2,x_2)) = k_{P}(P_1,P_2)k_X(x_1,x_2),
\end{equation}
where $k_{P}$ is a kernel on $\fP_\cX$ and $k_X$ a kernel on $\cX$.



Furthermore, we will consider kernels on $\fP_{\cX}$ of a particular form.
Let $k_X'$ denote a kernel on $\cX$ (which might be different
from $k_X$)
that is measurable and bounded. We define the {\em kernel mean embedding}
$\Psi: \fP_{\cX} \rightarrow \cH_{k'_X}$:
\begin{equation}
\label{eq:Psidef}
P_X \mapsto \Psi(P_X) := \int_{\cX} k'_X(x,\cdot) dP_X(x).
\end{equation}
This mapping has been studied in the framework of ``characteristic
kernels''
\citep{Greetal07a}, and it has been proved that
universality of $k'_X$ implies injectivity of $\Psi$
\citep{Greetal07b,Srietal10}.

Note that the mapping $\Psi$ is linear. Therefore, if we consider the
kernel $k_P(P_X,P_X') = \inner{\Psi(P_X),\Psi(P_X')}$, it is a linear
kernel on $\fP_{\cX}$ and cannot be a universal kernel.
For this reason, we introduce yet another kernel $\fK$ on
$\cH_{k'_X}$
and consider the kernel on $\fP_\cX$ given by
\begin{equation}
\label{eq:psikern}
k_P(P_X,P_X') = \fK\paren{\Psi(P_X),\Psi(P_X')}\,.
\end{equation}
Note that particular kernels inspired
by the finite dimensional case are of the form
\begin{equation}
\label{eq:Fkernel}
\fK(v,v') = F(\norm{v-v'}),
\end{equation}
or
\begin{equation}
\label{eq:Gkernel}
\fK(v,v') = G(\inner{v,v'}),
\end{equation}
where $F,G$ are real functions of a real variable such that they define
a kernel. For example, $F(t) = \exp(-t^2/(2\sigma^2))$ yields a
Gaussian-like kernel,
while $G(t) = (1+t)^d$ yields a polynomial-like kernel.
Kernels of the above form on the space of probability distributions
over a compact space $\cX$ have been introduced and studied in
\citet{CriSte10}. Below we apply their results to deduce that $\ol{k}$ is a
universal kernel for certain choices of $k_X, k_X'$, and
$\fK$.

\subsection{Relation to Other Kernel Methods}
\label{sec:relkern}

By choosing $\ol{k}$ differently, one can recover other existing kernel
methods. In particular, consider the class of kernels of the same product
form as above, but where
\[
k_P(P_X, P_X') = \left\{
\begin{array}{ll}
1 & P_X = P_X' \\
\tau & P_X \ne P_X'
\end{array}
\right.
\]
If $\tau = 0$, the algorithm \eqref{eq:est} corresponds to training $N$
kernel machines $f(\widehat{P}_X^{(i)}, \cdot)$ using kernel $k_X$ (e.g., support vector
machines in the case of the hinge
loss) on each training data set, independently of the others (note that this does not offer any
generalization ability to a new data set).
If $\tau = 1$, we have a
``pooling" strategy that, in the case of equal sample sizes $n_i$, is equivalent to pooling
all training data sets
together in a single data set, and running a conventional
supervised learning algorithm with kernel $k_X$ ({\em i.e.}, this corresponds to trying to find
a single ``one-fits-all'' prediction function which does not depend on the marginal).
In the intermediate case $0 < \tau <
1$, the resulting kernel is a ``multi-task kernel," and the algorithm
recovers a multitask learning algorithm like that of \citet{evgeniou05}. We
compare to the pooling strategy below in our experiments. We also examined the multi-task kernel with
$\tau < 1$, but found that, as far as generalization to a new unlabeled task is concerned,
it was always outperformed by pooling, and so those results are not reported. This
fits the observation that the choice $\tau=0$ does not provide any generalization to a new task,
while $\tau=1$ at least offers some form of generalization, if only by fitting the same predictor to all data sets.

In the special case where all labels $Y_{ij}$ are the same value for a given task, and
$k_X$ is taken to be the constant kernel, the problem we consider reduces to
``distributional" classification or regression, which is essentially standard supervised
learning where a distribution (observed through a sample) plays the role of the feature
vector. Many of our analysis techniques specialize to this setting.

\section{Learning Theoretic Study}

This section presents generalization error and consistency analysis for the proposed kernel method under the agnostic and 2-stage generative models. Although the regularized estimation formula \eqref{eq:est} defining
$\wh{f}_\lambda$ is standard, the generalization error analysis is not, owing to the particular sampling structures and risks under {\bf (AGM)} and {\bf (2SGM)}.

\subsection{Universal Consistency under the Agnostic Generative Model}

We will consider the following assumptions on the loss function and kernels:

\begin{description}
\item[] {\bf (LB)}
The loss function $\ell:\mbr \times \cY \rightarrow \mbr_+$
is
$L_\ell$-Lipschitz in its first variable and satisfies
$B_0 := \sup_{y \in \cY} \ell(0,y) < \infty.$

\item[] \kbound/
The kernels $k_X,k_X'$ and $\fK$ are bounded respectively by
constants $B_k^2,B_{k'}^2\geq 1$, and $B_\fK^2$\,. 
 \end{description}

The condition $B_0 < \infty$ always holds for classification, as well as certain regression settings.
The boundedness assumptions are clearly satisfied for Gaussian kernels, and can be enforced by normalizing the kernel (discussed further below).

We begin with a generalization error
 bound that establishes uniform estimation error control over functions belonging to a ball of
 $\cH_{\ol{k}}$\,. We then discuss universal kernels, and finally deduce
 universal consistency of the algorithm.


Let $\cB_{k}(r)$ denote the
closed ball of radius $r$, centered at the origin, in the RKHS of the
kernel $k$.
We start with the following simple result allowing us to bound the loss on a RKHS ball.
\begin{lemma}
\label{le:boundloss}
  Suppose $k$ is a kernel on a set $\Omega$, bounded by $B^2$. Let $\ell:\mathbb{R} \times \cY \to [0,\infty)$ be a loss satisfying {\bf (LB)}.
Then for any $R > 0$ and $f\in B_k(R)$, and any $z \in \Omega$ and $y \in \cY$,
\begin{equation}
\label{eqn:kernelbnd}
\big|\ell(f(z),y)\big| \leq B_0 + L_\ell R B 
\end{equation}
\end{lemma}
\begin{proof}
By the Lipschitz continuity of $\ell$, the reproducing property, and Cauchy-Schwarz, we have
\begin{align*}
  \big|\ell(f(z),y)\big| & \leq \ell(0,y)+\big|\ell(f(z),y)-\ell(0,y)\big| \\
   & \leq B_0 + L_\ell |f(z)-0| \\
   &= B_0+L_\ell\big|\langle f,k(z,\cdot)\rangle \big| \\
   & \leq B_0+L_\ell \|f\|_{{\cal H}_k} B \\
   &\leq B_0+L_\ell R B.
\end{align*} 
\end{proof}

The expression in \eqref{eqn:kernelbnd} serves to replace the boundedness assumption \eqref{eqn:agmbnd} in Theorem \ref{th:radgenbound}. We now state the following, which is a specialization of Theorem \ref{th:radgenbound} to the kernel setting.

\begin{theorem}[Uniform estimation error control over RKHS balls]
\label{th:main}
Assume {\bf (LB)} and \kbound/ hold, and data generation
follows {\bf (AGM)}.
Then for any $R>0$,
with probability at least $1-\delta$ (with respect to the draws of the samples $S_i, i=1,\ldots,N$)
%
\begin{equation}
  \label{eq:mainbound}
  \sup_{f \in \cB_{\ol{k}}(R)} \abs{     \wh{\cE}(f,N) - \cE(f)}
  \leq
(B_0 + L_\ell R B_\fK B_k) \frac{(\sqrt{\log \delta^{-1}}+2)}{\sqrt{N}}.
\end{equation}
\end{theorem}

\begin{proof}
  This is a direct consequence of Theorem~\ref{th:radgenbound} and of
  Lemma~\ref{le:boundloss}, the kernel $\ol{k}$ on $\fP_\cX\times\cX$ being bounded by
$B_k^2 B_\fK^2$.
  As noted there, the main term in the upper bound~\eqref{eq:rademmainbound} is
  a standard Rademacher complexity on the augmented input space $\fP \times \cX$,
  endowed with the Campbell measure $\cC(P_S)$.

  In the kernel learning context, we can bound the Rademacher complexity term using a
  standard bound for the Rademacher complexity of a Lipschitz loss function
  on the ball of radius $R$ of $\cH_{\ol{k}}$  (\citealp{kolt01,BarMen02}, e.g., Theorems~8, 12 and Lemma~22 there),
  using again the bound $B_k^2 B_\fK^2$ on the kernel $\ol{k}$, giving the conclusion.
\end{proof}

Next, we turn our attention to universal kernels (see Section
\ref{sec:alg} for the definition).
A relevant notion for our purposes is that of a
normalized kernel.  If $k$ is a kernel on $\Omega$, then
\[
k^*(x,x') := \frac{k(x,x')}{\sqrt{k(x,x)k(x',x')}}
\]
is the associated {\em normalized} kernel. If a kernel is universal, then
so is its associated normalized kernel. For example, the exponential
kernel $k(x,x') = \exp(\kappa \langle x, x' \rangle_{\mbr^d})$, $\kappa >
0$, can be shown to be
universal on $\mbr^d$ through a Taylor series argument. Consequently, the
Gaussian kernel
\[
k_\sigma(x,x') := \frac{\exp(\frac1{\sigma^2}\langle x, x'
\rangle)}{\exp(\frac1{2\sigma^2} \|x\|^2)\exp(\frac1{2\sigma^2} \|x'\|^2)}
\]
is universal, being the normalized kernel associated with the exponential
kernel with $\kappa = 1/\sigma^2$.  See \citet{steinwart08} for
additional details and discussion.

To establish that $\ol{k}$ is universal on $\fP_{\cX} \times \cX$, the
following lemma is useful.
\begin{lemma}
\label{lem:produniv}
Let $\Omega,\Omega'$ be two compact spaces and $k,k'$ be kernels on
$\Omega,\Omega'$, respectively. If $k,k'$ are both universal, then the product kernel
\[
\ol{k}((x,x'),(y,y')) := k(x,y)k'(x',y')
\]
is universal on $\Omega\times\Omega'$.
\end{lemma}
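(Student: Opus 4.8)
The plan is to show that the RKHS $\cH_{\ol{k}}$ of the product kernel is dense in $\cC(\Omega \times \Omega')$ under the supremum norm, assuming each factor is universal. The natural tool here is the Stone--Weierstrass theorem, so the first step is to identify a subalgebra of $\cC(\Omega \times \Omega')$ that we can control. I would start from the observation that $\ol{k}$ is a kernel on the compact product space $\Omega \times \Omega'$ (product of positive semidefinite kernels is positive semidefinite, by the Schur product theorem on the Gram matrices), so $\cH_{\ol{k}}$ is well defined, and its canonical feature map factorizes as $\Phi_{\ol{k}}((x,x')) = \Phi_k(x) \otimes \Phi_{k'}(x')$ in the tensor product of the two feature spaces. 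Concretely, the span of functions of the form $(x,x') \mapsto k(x,y)k'(x',y')$ lies in $\cH_{\ol{k}}$, and more generally any finite sum $\sum_m g_m(x) h_m(x')$ with $g_m \in \cH_k$ and $h_m \in \cH_{k'}$ belongs to $\cH_{\ol{k}}$.

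Next I would consider the linear subspace $\cA \subseteq \cC(\Omega\times\Omega')$ consisting of all finite sums $\sum_m g_m(x)h_m(x')$ with $g_m \in \cH_k$, $h_m \in \cH_{k'}$. By the preceding remark $\cA \subseteq \cH_{\ol{k}}$ (with equal sup-norm closures at worst enlarging, so density of $\cA$ suffices). The key point is that $\cA$ is in fact a subalgebra of $\cC(\Omega \times \Omega')$: it is clearly a linear subspace, and it is closed under multiplication provided the individual factor spaces behave well under products. Here I would invoke that universal kernels can be taken to have RKHSs that are dense subalgebras, or more cleanly, I would first verify the three Stone--Weierstrass hypotheses directly for $\cA$. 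Separation of points follows because $\cH_k$ separates points of $\Omega$ (being dense in $\cC(\Omega)$) and $\cH_{k'}$ separates points of $\Omega'$, hence products separate points of the product space. Vanishing nowhere follows because universality gives functions in $\cH_k$ and $\cH_{k'}$ that are bounded away from zero (e.g.\ approximating the constant $1$), so their product approximates the constant $1$ on $\Omega\times\Omega'$.

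The cleanest route, avoiding the algebra-closure subtlety, is the following standard factorization argument, which I would carry out as the main step. Given any $h \in \cC(\Omega\times\Omega')$ and $\eps > 0$, I would approximate $h$ by a finite sum of product functions $\sum_m g_m(x)h_m(x')$ in sup norm. This is itself a consequence of Stone--Weierstrass applied to the algebra generated by $\cC(\Omega) \otimes \cC(\Omega')$, which is well known to be uniformly dense in $\cC(\Omega\times\Omega')$ on a product of compact spaces. Then, using universality of $k$ and $k'$ separately, I would replace each continuous factor $g_m \in \cC(\Omega)$ by a nearby $\tilde{g}_m \in \cH_k$ and each $h_m \in \cC(\Omega')$ by $\tilde{h}_m \in \cH_{k'}$, uniformly to within a small tolerance. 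Since the finite sum has boundedly many terms with bounded sup norms, these factorwise approximations propagate to a sup-norm approximation of the whole sum by $\sum_m \tilde{g}_m(x)\tilde{h}_m(x') \in \cH_{\ol{k}}$; chaining the two approximations via the triangle inequality gives $\|h - \sum_m \tilde{g}_m \tilde{h}_m\|_\infty \le 2\eps$, establishing density.

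The main obstacle is the bookkeeping in the last step: one must control how the errors from replacing each $g_m$ and $h_m$ accumulate across the (finite but arbitrary) number of product terms, since a naive bound multiplies the per-term error by the number of terms and by the magnitudes of the other factors. This is handled by first fixing the decomposition $\sum_m g_m h_m$ (hence fixing the number of terms $M$ and the bounds $\|g_m\|_\infty, \|h_m\|_\infty$), and only then choosing the factorwise approximation tolerance small relative to $M$ and these magnitudes; because everything is finite once the decomposition is fixed, the errors remain controlled. A secondary subtlety worth flagging is the need to ensure $\sum_m \tilde{g}_m \tilde{h}_m$ genuinely lies in $\cH_{\ol{k}}$, which follows from the feature-map tensor factorization noted above, so no separate argument is required.
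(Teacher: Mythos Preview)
Your proposal is correct and follows essentially the same route as the paper's proof: both show via the tensor factorization of the feature map that finite sums $\sum_m g_m(x)h_m(x')$ with $g_m \in \cH_k$, $h_m \in \cH_{k'}$ lie in $\cH_{\ol{k}}$, then use Stone--Weierstrass to see that $\cC(\Omega)\otimes\cC(\Omega')$ is dense in $\cC(\Omega\times\Omega')$, and finally replace the continuous factors by RKHS elements using universality of $k,k'$ with the same finite-sum bookkeeping you describe. Your discussion of the error control (fixing the decomposition first, then choosing the factorwise tolerance) is exactly the argument the paper carries out.
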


Several examples of universal kernels are known on Euclidean space. For
our purposes, we also need universal kernels on $\fP_{\cX}$.
Fortunately, this was studied by \citet{CriSte10}. Some
additional assumptions on the kernels and feature space are required:

\begin{description}
\item[] \kuniv/ $k_X$, $k_X'$, $\fK$, and $\cX$ satisfy the
following:
\begin{itemize}
\item $\cX$ is a compact metric space
\item $k_X$ is universal on $\cX$
\item $k'_X$ is continuous and universal on $\cX$
\item $\fK$ is universal on any compact subset of $\cH_{k_X'}$.
\end{itemize}
\end{description}

Adapting the results of \citet{CriSte10}, we have the following.

\begin{theorem}[Universal kernel]
\label{th:univkbar}
Assume condition \kuniv/ holds.
Then, for $k_P$ defined as in \eqref{eq:psikern}, the product kernel
$\ol{k}$ in \eqref{eq:prodkern} is universal on $\fP_{\cX} \times \cX$.

Furthermore, the assumption on $\fK$ is fulfilled if $\fK$ is of the form
\eqref{eq:Gkernel}, where $G$ is an analytical function with positive
Taylor series coefficients, or if $\fK$ is the normalized kernel
associated to such a kernel.
\end{theorem}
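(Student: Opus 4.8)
The plan is to establish the first claim by reducing universality of $\ol{k}$ to universality of its two tensor factors and then invoking Lemma~\ref{lem:produniv}. Since $k_X$ is assumed universal on the compact metric space $\cX$ by {\bf (K-B)}, the only missing ingredient is universality of $k_P$ from \eqref{eq:psikern} on $\fP_\cX$; Lemma~\ref{lem:produniv} (with $\Omega = \fP_\cX$, $\Omega' = \cX$, $k = k_P$, $k' = k_X$) then yields universality of $\ol{k}$ on $\fP_\cX\times\cX$ as in \eqref{eq:prodkern} immediately. For this reduction to be meaningful I first record that $\fP_\cX$, endowed with the topology of weak convergence, is itself a compact metric space whenever $\cX$ is (Prokhorov's theorem), so that $\cC(\fP_\cX)$ and universality on $\fP_\cX$ are well defined.

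The core step is the universality of $k_P$, which I would obtain by transporting the universality of $\fK$ through the embedding $\Psi$. Since $k'_X$ is universal it is characteristic, hence $\Psi$ is injective \citep{Greetal07b,Srietal10}. Because $k'_X$ is continuous and bounded on the compact space $\cX$, the embedding $\Psi$ is continuous from $\fP_\cX$ (weak topology) into $\cH_{k'_X}$ (norm topology), as $\norm{\Psi(P)-\Psi(P')}^2 = \iint k'_X\, d(P-P')\, d(P-P')$ and weak convergence forces the right-hand side to vanish. A continuous injection from a compact space into a Hausdorff space is a homeomorphism onto its image, so $\Psi$ maps $\fP_\cX$ homeomorphically onto a compact set $K := \Psi(\fP_\cX) \subset \cH_{k'_X}$. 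By the last bullet of {\bf (K-B)}, $\fK$ is universal on $K$, i.e.\ $\set{h|_K : h \in \cH_\fK}$ is dense in $\cC(K)$. Finally, the canonical feature map of $k_P$ is $\Phi_\fK \circ \Psi$, so $\cH_{k_P} = \set{h \circ \Psi : h \in \cH_\fK}$; given $F \in \cC(\fP_\cX)$, the function $F \circ \Psi^{-1} \in \cC(K)$ is approximated in sup-norm by some $h|_K$, whence $h \circ \Psi \in \cH_{k_P}$ approximates $F$ uniformly on $\fP_\cX$ because $\Psi$ is a homeomorphism. Thus $k_P$ is universal, which completes the first claim.

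For the ``furthermore'' statement I would verify that $\fK(v,v') = G(\inner{v,v'})$ with $G$ analytic and with positive Taylor coefficients is universal on every compact subset of $\cH_{k'_X}$. Writing $G(t) = \sum_{n\ge 0} a_n t^n$ with $a_n > 0$ exhibits $\fK$ as a convergent series of the monomial (polynomial) kernels $\inner{v,v'}^n$, i.e.\ a Taylor-type kernel; universality of such kernels on compact subsets of a Hilbert space is precisely the content of \citet{CriSte10}, which I would adapt/cite directly rather than reprove. The case where $\fK$ is the normalized kernel associated to such a $G$ then follows from the general fact, recalled earlier, that normalization preserves universality.

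I expect the main obstacle to be the topological step that $\Psi$ is a homeomorphism onto a compact image: one must argue carefully that convergence of measures on $\cX$ yields norm convergence of their embeddings in $\cH_{k'_X}$, which is exactly where continuity (not merely boundedness) and the characteristic property of $k'_X$ enter, and where the compactness of $\cX$ is used. Once the homeomorphism and the identification $\cH_{k_P} = \set{h\circ\Psi : h \in \cH_\fK}$ are in hand, the density transfer is routine, and the remaining substantive input is the \citet{CriSte10} characterization invoked for the second claim.
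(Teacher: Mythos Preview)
Your proposal is correct and follows essentially the same route as the paper: reduce to universality of $k_P$ via Lemma~\ref{lem:produniv}, establish compactness of $\fP_\cX$ (Prokhorov), and use injectivity and continuity of $\Psi$ together with universality of $\fK$ on compact subsets of $\cH_{k'_X}$; for the ``furthermore'' clause both you and the paper defer to \citet{CriSte10}, and the normalized case follows from the general fact that normalization preserves universality.

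The only stylistic difference is that where the paper invokes Theorem~2.2 of \citet{CriSte10} as a black box (after checking its hypotheses: separability of $\cH_{k'_X}$, injectivity and continuity of $\Psi$), you unpack that step explicitly via the homeomorphism argument $\fP_\cX \cong \Psi(\fP_\cX)$ and the identification $\cH_{k_P} = \{h\circ\Psi : h\in\cH_\fK\}$. This is exactly what underlies their cited theorem, so the arguments coincide; your version is simply more self-contained.
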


\begin{proof}
By Lemma \ref{lem:produniv}, it suffices to show $\fP_{\cX}$ is a compact
metric space, and that $k_P(P_X,P_X')$ is
universal on $\fP_{\cX}$. The former statement follows from Theorem 6.4 of
\citet{Par67}, where the metric is the Prohorov metric. We will deduce the
latter statement from Theorem~2.2 of \citet{CriSte10}. The statement
of Theorem~2.2 there is apparently restricted to kernels of the
form \eqref{eq:Gkernel}, but the proof actually only uses that the
kernel $\fK$ is universal on any compact set of $\cH_{k_X'}$. To apply
Theorem 2.2,
it remains to show that $\cH_{k_X'}$ is a separable Hilbert space, and
that $\Psi$ is injective and continuous. Injectivity of
$\Psi$ is equivalent to $k_X'$ being a characteristic kernel, and
follows from the assumed universality of $k_X'$ \citep{Srietal10}.
The continuity of $k_X'$ implies separability of
$\cH_{k_X'}$ (\citet{steinwart08}, Lemma 4.33)
as well as continuity of $\Psi$ (\citet{CriSte10}, Lemma 2.3
and preceding discussion). Now Theorem 2.2 of
\citet{CriSte10} may be applied, and the results follows.

The fact that kernels of the form \eqref{eq:Gkernel}, where $G$ is
analytic with positive Taylor coefficients, are
universal on any compact set of $\cH_{k_X'}$ was established in the proof
of
Theorem 2.2 of the same work \citep{CriSte10}.
\end{proof}

As an example, suppose that $\cX$ is a compact subset of $\mbr^d$. Let
$k_X$ and $k_X'$ be Gaussian kernels on $\cX$.
Taking $G(t) = \exp(t)$, it follows that $\fK(P_X,P_X') = \exp(\langle
\Psi(P_X), \Psi(P_X') \rangle_{\cH_{k_X'}})$ is universal on $\fP_{\cX}$.
By similar reasoning as in the finite dimensional case, the Gaussian-like
kernel $\fK(P_X,P_X') = \exp(-\frac1{2\sigma^2}\| \Psi(P_X) - \Psi(P_X')
\|^2_{\cH_{k_X'}})$ is also universal on $\fP_{\cX}$. Thus the product
kernel is universal on $\fP_{\cX} \times \cX$.


From Theorems \ref{th:main} and \ref{th:univkbar}, we may deduce
universal consistency of the learning algorithm. 

\begin{corollary}[Universal consistency]
\label{cor:const}
Assume that conditions {\bf (LB)}, \kbound/ and \kuniv/ are satisfied.
Let $\lambda =\lambda(N)$ be a sequence such that
as $N\rightarrow \infty$: $\lambda(N) \to 0$  and
$\lambda(N) N/ \log N \rightarrow \infty$.
Then
\[
\cE(\wh{f}_{\lambda(N)}) \rightarrow
\inf_{f:\fP_{\cX} \times \cX \rightarrow \mbr} \cE(f) \text{ a.s., as } N \rightarrow \infty.
\]
\end{corollary}
The proof of the corollary relies on the bound established in Theorem
\ref{th:main}, the universality of $\ol{k}$ established in Theorem
\ref{th:univkbar}, and otherwise relatively standard arguments.

One notable feature of this result is that we have established consistency where only $N$ is required to diverge. In particular, the training sample sizes $n_i$ may remain bounded. In the next subsection, we consider the role of the $n_i$ under the 2-stage generative model.

\subsection{Role of the Individual Sample Sizes under the 2-Stage Generative Model
}
\label{sec:theory2sgm}

In this section, we are concerned with the role of the individual sample sizes $(n_i)_{1 \leq i \leq N}$, more precisely, of their distribution $\nu$ under {\bf (2SGM)}, see Section~\ref{sec:datagen}. 
A particular motivation for investigating this point is that in some applications the number of training points per task is large, which can give rise
to a high computational burden at the learning stage (and also for storing the learned model in computer memory).
A practical way to alleviate this issue is to reduce the number of training
points per task by random subsampling, which in effect modifies the sample size distribution $\nu$  while keeping the generating distribution $\mu$ for the tasks' point distributions unchanged.
Observe that under
{\bf (AGM)} the sample size and the sample point distribution may be dependent
in general, and subsampling would then affect that relationship in an
unknown manner. This is why we assume {\bf (2SGM)} in the present
section.

We will consider the following additional assumption.

\begin{description}
  \item[] \khoeld/ The
canonical feature map $\Phi_\fK: \cH_{k'_X} \rightarrow \cH_{\fK}$
associated to $\fK$ satisfies a H\"{o}lder condition of order
$\alpha \in (0,1]$\, with constant $L_\fK$, on $\cB_{k'_X}(B_{k'})$\,:
\begin{equation} \label{eq:kernhold}
\forall v,w \in  \cB_{k'_X}(B_{k'}): \qquad
\norm{\Phi_{\fK}(v) - \Phi_{\fK}(w)} \leq L_{\fK} \norm{v-w}^{\alpha}.
\end{equation}
\end{description}
Sufficient conditions for \eqref{eq:kernhold} are described in Section
\ref{sec:reg}. As an example, the condition is shown to hold with $\alpha
= 1$ when $\fK$ is the Gaussian-like kernel on ${\cal H}_{k_X'}$.

Since we are interested in the influence of the number of training points per task, it is helpful to introduce notations for the {\bf (2SGM)} risks that are conditioned on a fixed task $P_{XY}$.
Thus, we introduce
the following notation, in analogy to~\eqref{eq:err_nt}--\eqref{eq:err_infty} introduced
in Section~\ref{se:2gsm_gen}, for 
risk at sample size $n$, and  risk at infinite sample size, conditional to $P_{XY}$:
\begin{align}
 \cE(f|P_{XY},n) & := \ee{S^T \sim (P_{XY})^{\otimes n}}{\frac{1}{n} \sum_{i=1}^{n} \ell(f(\wh{P}_X,X_i),Y_i)};\\
\label{eq:err_infty_pxy}
\cE^\infty(f|P_{XY}) & := \ee{(X,Y) \sim P_{XY}} {\ell(f(P_X,X),Y)}.
\end{align}

The following proposition gives an upper bound on the discrepancy between these risks.
It can be seen as a quantitative version of Proposition~\ref{prop:conv_err_2gsm}
in the kernel setting, which is furthermore uniform over an RKHS ball.
\begin{theorem}
\label{th:tworisks}
Assume conditions {\bf (LB)}, \kbound/, and \khoeld/ hold.
If the sample $S=(X_{j},Y_{j})_{1\leq j \leq n}$ is made of $n$
i.i.d. realizations from $P_{XY}$, with $P_{XY}$ and $n$ fixed, then for any $R>0$, with probability
at least $1-\delta$:
\begin{multline}
\label{eq:bound2gsm_emp}
\sup_{f \in \cB_{\ol{k}}(R)} \abs{\cL(S,f) - \cE^\infty(f|P_{XY})}
\leq (B_0 + 3 L_\ell R B_k ( B_{k'}^\alpha L_\fK +B_{\fK}) )\paren{\frac{\log (3\delta^{-1})}{n}}^{-\frac{\alpha}{2}}.
\end{multline}
Averaging over the draw of $S$, again with $P_{XY}$ and $n$ fixed, it holds for any $R>0$:
\begin{equation}
\label{eq:bound2gsm_ave}
\sup_{f \in \cB_{\ol{k}}(R)} \abs{\cE(f|P_{XY},n) - \cE^\infty(f|P_{XY})}
\leq 2 L_\ell R B_k L_\fK B_{k'}^\alpha n^{-\alpha/2}.
\end{equation}
As a consequence, for the unconditional risks when $(P_{XY}, n)$ is drawn from $\mu \otimes \nu$ 
under {\bf (2SGM)}, for any $R>0$:
\begin{equation}
\label{eq:bound2gsm_gen}
\sup_{f \in \cB_{\ol{k}}(R)} \abs{\cE(f) - \cE^\infty(f)}
\leq 2 R L_\ell B_k L_\fK B_{k'}^\alpha
\ee{\nu}{n^{-\frac{\alpha}{2}}}.
\end{equation}
\end{theorem}
The above results are useful in a number of ways. First, under {\bf (2SGM)},
we can consider the goal of  asymptotically achieving the idealized optimal risk $\inf_{f} \cE^\infty(f)$, where we recall that $\cE^\infty(f)$ is the expected loss of a decision function $f$ over a random test task $P_{XY}^T$ in the case where $P_X^T$ would be
perfectly observed (this can be thought of as observing an infinite sample from the marginal). Equation \eqref{eq:bound2gsm_gen} bounds the risk under {\bf (2SGM)} in terms of the risk under {\bf (AGM)}, for which we have already established consistency. Thus, consistency 
to the idealized risk under {\bf (2SGM)} will be possible
if the number of examples $n_i$ per training task also grows together with the number
of training tasks $N$. The following result formalizes
this intuition.
\begin{corollary}
\label{th:altgen}
Assume {\bf (LB)}, \kbound/, and \khoeld/, and 
assume {\bf (2SGM)}.
Then for any $R>0$, with probability at least $1-\delta$ with respect to the draws of the training tasks and training samples
\begin{multline}
\label{eq:altgen}
\sup_{f \in \cB_{\ol{k}}(R)} \abs{
\frac{1}{N} \sum_{i=1}^N \cL(S_i,f) -
\cE^\infty(f)} \\
\leq (B_0 + L_\ell R B_\fK B_k)  \frac{(\sqrt{\log \delta^{-1}}+2)}{\sqrt{N}} + 
2 R L_\ell B_k L_\fK B_{k'}^\alpha \ee{\nu}{n^{-\frac{\alpha}{2}}}.
\end{multline}
Consider an asymptotic setting under {\bf (2SGM)} in which, as the number of training tasks $N\rightarrow\infty$, the distribution $\mu$ remains fixed but the sample
size distribution $\nu_N$ depends on $N$. Denote $\kappa_N^{-1} := \ee{\nu_N}{n^{-\alpha/2}} $.
Assuming \kuniv/ is satisfied, and the regularization parameter
$\lambda(N)$ is such that $\lambda(N) \rightarrow 0$ and $\lambda(N)\min(N,\kappa_N^2) \rightarrow \infty$, then
\[
  \cE(\wh{f}_{\lambda(N)}) \rightarrow \inf_{f:\fP_{\cX} \times \cX \rightarrow \mbr} \cE^\infty(f) \text{ in probability, as } N \rightarrow \infty.
  \]
\end{corollary}
\begin{proof}
  The setting is that of the {\bf (2SGM)} model.
  This is a particular case of {\bf (AGM)}, so we can apply Theorem~\ref{th:main}
  and combine with~\eqref{eq:bound2gsm_gen} 
  to get the announced bound.
  The consistency statement follows the same argument as in the proof of Corollary~\ref{cor:const}, with $\cE(f)$ replaced by $\cE^\infty(f)$, and $\eps(N)$ there replaced
  by the RHS in~\eqref{eq:altgen}. 
\end{proof}

\begin{remark} The bound~\eqref{eq:altgen} is non-asymptotic and can be used
as such to assess the respective role of number of tasks $N$ and of the
sample size distribution $\nu$ when the objective is the idealized risk (see below).
The result of consistency to that risk on the other hand, is formalized as a ``triangular array" type of asymptotics where the distribution of the sizes $n_i$ of the i.i.d. 
training samples $S_i$ changes with their number $N$.
\end{remark}
\begin{remark}
Our conference paper \citep{blanchard:11:nips} also established a generalization error bound and consistency for $\cE^\infty$ under {\bf (2SGM)}. That bound had a different form for two main reasons. First, it assumed the loss to be bounded, whereas the present analysis avoids that assumption via Lemma \ref{le:boundloss}. Second, that analysis did not leverage a connection to {\bf (AGM)}, which led to a $\log N$ in the second term. This required the two sample sizes to be coupled asymptotically to achieve consistency. In the present analysis, the two sample sizes $N$ and $n$ may diverge at arbitrary rates.
\end{remark}
\begin{remark} It is possible to obtain a result similar to Corollary~\ref{th:altgen} when
the training task sample sizes $(n_i)_{1\leq i \leq N}$ are fixed
(considered as deterministic), unequal and possibly arbitrary. In this case we would follow a slightly different argument,
leveraging~~\eqref{eq:bound2gsm_emp} for each single training task together with a union bound,
and applying Theorem~\ref{th:main} to the idealized situation with an infinite number
of samples per training task. This way, the term $\ee{\nu}{n^{-\frac{\alpha}{2}}}$ is replaced by $\log(N) N^{-1}\sum_{i=1}^N n_i^{-\frac{\alpha}{2}}$,
the additional logarithmic factor being the price of the union bound. We eschew an exact statement for brevity.
\end{remark}

We now come back to our initial motivation of possibly reducing computational burden by subsampling and analyze to what extent this affects statistical error. Under {\bf (2SGM)}
 the effect of subsampling (without
replacement) is transparent: it amounts to changing the original individual sample size
distribution $\nu$ by $\nu'=\delta_{n'}$, while keeping the generating distribution $\mu$ for the
tasks' point distributions fixed. Here $n'$ is the common fixed size of the training
subsamples, and we must assume implicitly that the original sample sizes are a.s. larger than $n'$, i.e. that their distribution $\nu$ has support $[n',\infty)$.
For simplicity, for the rest of the discussion we only consider the case of equal, 
deterministic sizes of sample ($n$) and subsample ($n'<n$).
Using~\eqref{eq:altgen} we compare the two settings 
to a common reference, namely the idealized risk $\cE^\infty$.
We see that the statistical risk bound in~\eqref{eq:altgen} is unchanged up to a small factor if
$n' \geq \min(N^\alpha,n)$. Assuming $\alpha=1$ to simplify, in the case where the original sample sizes $n$ are much larger than the number of training tasks $N$, this suggests that
we can subsample to $n'\approx N$ without taking a significant hit to performance.
This applies equally well to subsampling the tasks used for prediction or testing. The most
precise statement in this regard is~\eqref{eq:bound2gsm_emp}, since it bounds the
deviations of the observed prediction loss for a fixed task $P_{XY}$ and i.i.d. sample from that task.

  The minimal subsampling size $n'$ can be interpreted as an optimal efficiency/accuracy tradeoff, since it reduces computational complexity as much as possible without sacrificing statistical accuracy. Similar considerations appear in the context of distribution regression~\citep[Remark 6]{szabo2016learning}. In that reference, a sharp analysis giving rise to {\em fast convergence rates} is presented, resulting in a more involved optimal balance between $N$ and $n$. In the present work, we have focused on {\em slow rates} based on a uniform control of the estimation error over RKHS balls; we leave for future work sharper convergence bounds (under additional regularity conditions), which would also give rise to more refined balancing conditions between $n$ and $N$.

\section{Implementation}
\label{sec:imp}

Implementation of the algorithm in \eqref{eq:est}
relies on techniques that are similar to those used for other kernel methods, but with some
variations.\footnote{Code is available at
https://github.com/aniketde/DomainGeneralizationMarginal} The first subsection illustrates how, for the case of hinge loss, the optimization
problem corresponds to a certain cost-sensitive support vector machine. The second subsection
focuses on more scalable implementations based on approximate feature mappings.

\subsection{Representer Theorem and Hinge Loss}

For a particular loss $\ell$, existing algorithms for optimizing an
empirical
risk based on that loss can be adapted to the setting of marginal transfer learning.
We now illustrate this idea for the case of the hinge loss,
$\ell(t,y) = \max(0,1-yt)$. To make the presentation more concise, we will
employ the extended feature representation $\tX_{ij}= (\wh{P}^{(i)}_X,X_{ij})$, and we
will also ``vectorize'' the indices $(i,j)$ so as to employ a single index on these variables and on the labels. Thus
the training data are $(\tX_i,Y_i)_{1 \le i \le M},$ where $M =
\sum_{i=1}^N n_i$, and we seek a solution to
\[
\min_{f \in \cH_{\ol{k}}} \sum_{i=1}^M c_i \max(0,1-Y_i
f(\wt{X}_i)) + \frac12 \norm{f}^2.
\]
Here $c_i = \frac1{\lambda N n_m}$, where $m$ is the smallest
positive
integer such that $i \le n_1 + \cdots + n_m$. By the representer theorem
\citep{steinwart08}, the solution of \eqref{eq:est} has the form
\[
\wh{f}_\lambda = \sum_{i=1}^M r_i \ol{k}(\tX_i, \cdot)
\]
for real numbers $r_i$. Plugging this expression into the objective
function of \eqref{eq:est}, and introducing the auxiliary variables
$\xi_i$, we have the quadratic program
\begin{align*}
\min_{r,\xi} & \ \frac12 r^T \ol{K} r + \sum_{i=1}^M c_i \xi_i \\
{\rm s.t.} & \ Y_i \sum_{j=1}^M r_j
\ol{k}(\tX_i,\tX_j) \ge 1 - \xi_i, \ \forall i \\
& \ \xi_i \ge 0, \ \forall i,
\end{align*}
where $\ol{K} := (\ol{k}(\tX_i,\tX_j))_{1 \le i,j \le M}$.
Using Lagrange multiplier theory,
the dual quadratic program is
\begin{align*}
\max_{\alpha} & \ - \frac12 \sum_{i,j=1}^M \alpha_i \alpha_j Y_i Y_j
\ol{k}(\tX_i, \tX_j) + \sum_{i=1}^M \alpha_i \\
{\rm s.t.} & \ 0 \le \alpha_i \le c_i \ \forall i,
\end{align*}
and the optimal function is
\[
\wh{f}_\lambda = \sum_{i=1}^M \alpha_i Y_i \ol{k}(\tX_i, \cdot).
\]
This is equivalent to the dual of
a cost-sensitive support vector machine, without offset, where the costs
are given by
$c_i$. Therefore we can learn the weights $\alpha_i$ using any existing
software package for SVMs that accepts example-dependent costs and a
user-specified kernel matrix, and allows for no offset. Returning to the
original notation, the
final predictor given a test $X$-sample $S^T$ has the form
\[
\wh{f}_\lambda (\wh{P}^T_X,x) = \sum_{i=1}^N \sum_{j=1}^{n_i}
\alpha_{ij} Y_{ij} \ol{k}((\wh{P}_X^{(i)},X_{ij}),(\wh{P}^T_X,x))
\]
where the $\alpha_{ij}$ are nonnegative. Like the SVM, the solution is
often sparse, meaning most $\alpha_{ij}$ are zero.

Finally, we remark on the computation of $k_P(\wh{P}_X, \wh{P}_X')$. When
$\fK$ has the form of \eqref{eq:Fkernel} or \eqref{eq:Gkernel},
the calculation of $k_P$ may be reduced to computations of the form
$\inner{\Psi(\wh{P}_X), \Psi(\wh{P}_X')}$. If $\wh{P}_X$ and $\wh{P}_X'$
are empirical distributions based on the samples $X_1, \ldots, X_n$ and $X_1', \ldots, X_{n'}'$,
then
\begin{align*}
\inner{\Psi(\wh{P}_X), \Psi(\wh{P}_X')} &=
\inner{\frac1{n} \sum_{i=1}^n k_X'(X_i, \cdot),
\frac1{n'} \sum_{j=1}^{n'} k_X'(X_j', \cdot)} \\
&= \frac1{n n'} \sum_{i=1}^n \sum_{j=1}^{n'} k_X'(X_i,X_j').
\end{align*}
Note that when $k_X'$ is a (normalized) Gaussian kernel,
$\Psi(\wh{P}_X)$ coincides (as a function) with a smoothing
kernel density estimate for $P_X$.

\subsection{Approximate Feature Mapping for Scalable Implementation}

Assuming $n_i = n, $ for all $i$, the computational complexity of a
nonlinear SVM solver (in our context) is between $O(N^2n^2)$ and $O(N^3n^3)$
\citep{joachims:99:svmlight, chang2011libsvm}. Thus, standard nonlinear SVM solvers may be insufficient when $N$ or $n$ are very large.

One approach to scaling up kernel methods is to employ approximate feature
mappings together with linear solvers. This is based on the idea that
kernel methods are solving for a linear predictor after first nonlinearly
transforming the data. Since this nonlinear transformation can have an
extremely high- or even infinite-dimensional output, classical kernel
 methods avoid computing it explicitly. However, if the feature mapping can
be approximated by a finite dimensional transformation with a relatively
low-dimensional output, one can directly solve for the linear predictor,
which can be accomplished in $O(Nn) $ time \citep{hsieh2008dual}.

In particular, given a kernel $ \ol{k}$, the goal is to find an
approximate feature mapping $\ol{z}(\tilde{x})$ such that
$\ol{k}(\tilde{x},\tilde{x}') \approx
\ol{z}(\tilde{x})^T\ol{z}(\tilde{x}')$. Given such a mapping $\ol{z}$, one
then applies an efficient linear solver, such as
Liblinear \citep{fan2008liblinear}, to the training data
$(\ol{z}(\tilde{X}_{ij}), Y_{ij})_{ij}$ to obtain a weight vector $w$. The final
prediction on a test point $\tilde{x}$ is then $\sign\{w^T \ol{z}(\tilde{x})\}$. As
described in the previous subsection, the linear solver may need to be
tweaked, as in the case of unequal sample sizes $n_i$, but this is
usually straightforward.

Recently, such low-dimensional approximate future mappings $z(x)$ have
been developed for several kernels. We examine two such techniques in the
context of marginal transfer learning, the Nystr{\"o}m approximation
\citep{williams2001using, drineas2005nystrom} and random Fourier features.
The Nystr{\"o}m approximation applies to any kernel method, and therefore
extends to the marginal transfer setting without additional work.  On the
other hand, we give a novel extension of random Fourier features to the
marginal transfer learning setting (for the case of all Gaussian kernels),
together with performance analysis. Our approach is similar to the one in \citet{jitkrittum2015kernel} which proposes a two-stage approximation for the mean embedding. Note that \citet{jitkrittum2015kernel} does not give an error bound. We describe our novel extension of random Fourier features to the marginal transfer learning setting, with error bounds, in the appendix, where we also review the Nystr\"om method.

The Nystr{\"o}m approximation holds for any positive definite kernel, but
random Fourier features can be used only for shift invariant kernels. On the other hand, random Fourier features are very easy to implement and the Nystr{\"o}m method has additional time complexity due to an 
eigenvalue decomposition. Moreover, the Nystr{\"o}m method is useful only when the kernel matrix has low rank. For additional comparison of various kernel approximation approaches we refer the reader to \citet{le2014fastfood}. In our experiments, we use random Fourier features when all kernels are Gaussian and the Nystr{\"o}m method otherwise.


\section{Experiments}

This section empirically compares our marginal transfer learning method with
pooling.\footnote{Code available at https://github.com/aniketde/DomainGeneralizationMarginal}
One implementation of the pooling algorithm was mentioned in Section \ref{sec:relkern}, where
$k_P$ is taken to be a constant kernel. Another implementation is to put all the training data
sets together and train a single conventional kernel method. The only difference between the
two implementations is that in the former, weights of $1/n_i$ are used for examples from
training task $i$. In almost all of our experiments below, the various training tasks have the
same sample sizes, in which case the two implementations coincide. The only exception is the
fourth experiment when we use all training data, in which case we use the second of the two
implementations mentioned above.

We consider three classification problems $(\cY = \{-1,1\})$, for which the hinge loss is employed, and
one regression problem $(\cY \subset \mbr)$, where the $\epsilon$-insensitive loss is employed. Thus, the
algorithms implemented are natural extensions of support vector classification and regression to domain generalization. Performance of a learning strategy is assessed by holding out several data sets $S^T_1, \ldots, S^T_{N_T}$, learning a decision function $\wh{f}$ on the remaining data sets, and reporting the average empirical risk $\frac1{N_T} \sum_{i=1}^{N_T} \cL(S_i^T,\wh{f})$. In some cases, this value is again averaged over several randomized versions of the experiment.


\subsection{Model Selection}
The various experiments use different combinations of kernels.
In all experiments, linear kernels $k({x_1},{x_2})= x_1^{T}{x_2}$ and
Gaussian kernels $k_{\sigma}({x_1},{x_2})=\exp\big(-\frac{||{x_1}-{x_2}||^2}{2\sigma^2}\big)$ were
used.

The bandwidth $\sigma$ of each Gaussian kernel and the regularization parameter $\lambda$
of the machines were selected by grid search. For model selection, five-fold
cross-validation was used. In order to stabilize the cross-validation procedure, it
was repeated $5$ times over independent random splits into folds \citep{kohavi1995study}.
Thus, candidate parameter values were evaluated on the $5 \times 5$ validation sets and the
configuration yielding the best average performance was selected. If any of the chosen
hyper-parameters was at the grid boundary, the grid was extended accordingly, i.e., the
same grid size has been used, however, the center of grid has been assigned to the
previously selected point. The grid used for kernels was $\sigma\in \big( 10^{-2},
10^{4}\big) $ with logarithmic spacing, and the grid used for the regularization parameter
was $\lambda\in\big( 10^{-1}, 10^{1}\big)$ with logarithmic spacing.



\subsection{Synthetic Data Experiment}

To illustrate the proposed method, a synthetic problem was constructed. The synthetic data generation
algorithm is given in Algorithm~\ref{alg:ToyDataGeneration}. In brief, for each classification task, the
data are uniformly supported on an ellipse, with the major axis determining the labels, and the rotation
of the major axis randomly generated in a 90 degree range for each task. One random realization of this
synthetic data is shown in Figure \ref{fig:ToyProblem}. This synthetic data set is an ideal candidate for marginal transfer learning, because the Bayes classifier for a task is uniquely determined
by the marginal distribution of the features, i.e. Lemma~\ref{lem:determ} applies (and the optimal error
$\inf_f \cE^\infty(f)$ is zero). On the other hand, observe that the expectation of each $X$ distribution is
the same regardless of the task and thus does not provide any relevant information, so that taking into
account at least second order information is needed to perform domain generalization.

To analyse the effects of number of examples per task ($n$) and number of tasks ($N$), we
constructed $12$ synthetic data sets by taking combinations $N \times n$ where $N \in
\{16,64, 256\} $ and $n\in\{8,16,32,256\}$. For each synthetic data set, the
test set contains 10 tasks and each task contains one million data points. All kernels are taken to be Gaussian, and the random Fourier features speedup is used. The results are shown in Figure \ref{fig:synthetic_results} and Tables \ref{table:synthetic_marginal} and \ref{table:synthetic_pooling} (see appendix). The marginal transfer learning (MTL) method
significantly outperforms the baseline pooling method. Furthermore, the performance of
MTL improves as $N$ and $n$ increase, as expected. The pooling
method, however, does no better than random guessing regardless of $N$ and $n$.

In the remaining experiments, the marginal distribution does not perfectly characterize the
optimal decision function, but still provides some information to offer improvements over
pooling.

\begin{algorithm}
    \SetKwInOut{Input}{input}
    \SetKwInOut{Output}{output}
    \Input{$N$: Number of tasks, $n$: Number of training examples per task}
    \Output{Realization of synthetic data set for $N$ tasks}
		 \For{$i = 1$ to $N$}
		 {
		 	\begin{itemize}
		 	  \setlength\itemsep{0em}
                          \item sample rotation $\alpha_i$ uniformly in $\Big[\dfrac{\pi}{4} ,\dfrac{3 \pi}{4}\Big]$;
		 		\item Take an ellipse whose major axis is aligned with the horizontal axis, and \\ rotate  it by an angle of $\alpha_i$ about its center;
			  	\item Sample $n$ points $X_{ij}$, $j=1,\ldots,n$
                                  uniformly at random from the rotated \\ ellipse;
				\item Label the points according to their position with respect to the major axis, \\ i.e. the points that are on
		  	the right of the major axis are considered as class \\ $1$ and the points on the left
		  	of the major axis are considered as class $-1$.
			\end{itemize}
		 }
    \caption{Synthetic Data Generation 	\label{alg:ToyDataGeneration}}
\end{algorithm}

\begin{figure}[htb!]
	\hspace{1.5 cm}\begin{minipage}{0.18\textwidth}
	\subfloat[]{\label{main:ab}\includegraphics[width=6cm]{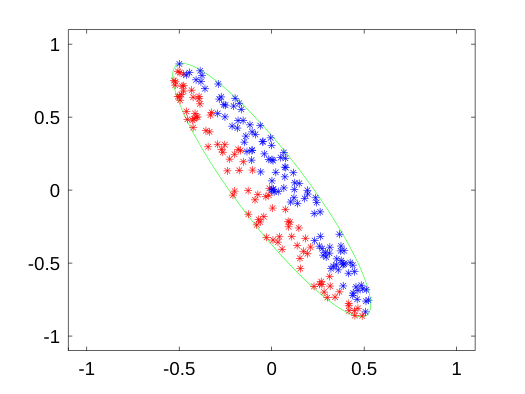}}
	\end{minipage}\hspace{3.75 cm}
	\begin{minipage}{0.18\textwidth}
	\subfloat[]{\label{main:bb}\includegraphics[width=6cm]{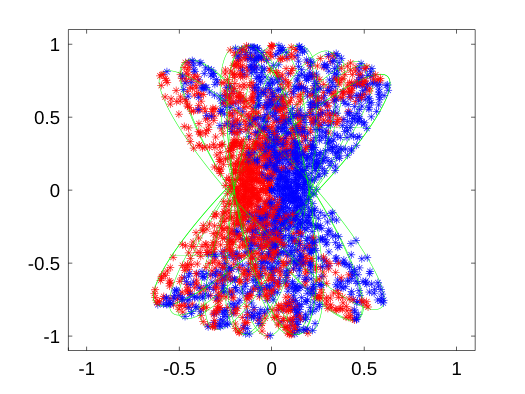}}
	\end{minipage}\\
	
	\hspace{1.5 cm}\begin{minipage}{0.18\textwidth}
	\subfloat[]{\label{main:cb}\includegraphics[width=6cm]{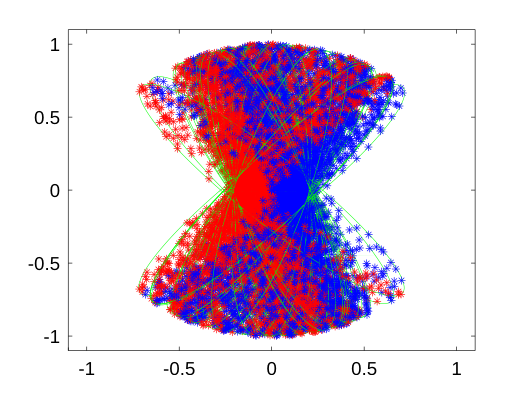}}
	\end{minipage}\hspace{3.75 cm}
	\begin{minipage}{0.18\textwidth}
	\subfloat[]{\label{main:db}\includegraphics[width=6cm]{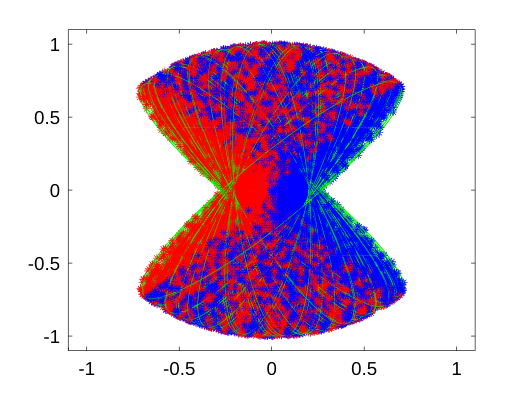}}
	\end{minipage}
	 \caption{Plots of synthetic data sets (red and blue points represent negative and positive classes) for different
settings: (a) Random realization of a single task with 256 training examples per task. Plots (b), (c) and(d)  are random
realizations of synthetic data with 256 training examples for 16, 64 and 256 tasks.}
	 \label{fig:ToyProblem}%
\end{figure}

\begin{figure}[htb!]
\centering
	\includegraphics[width=0.9\columnwidth]{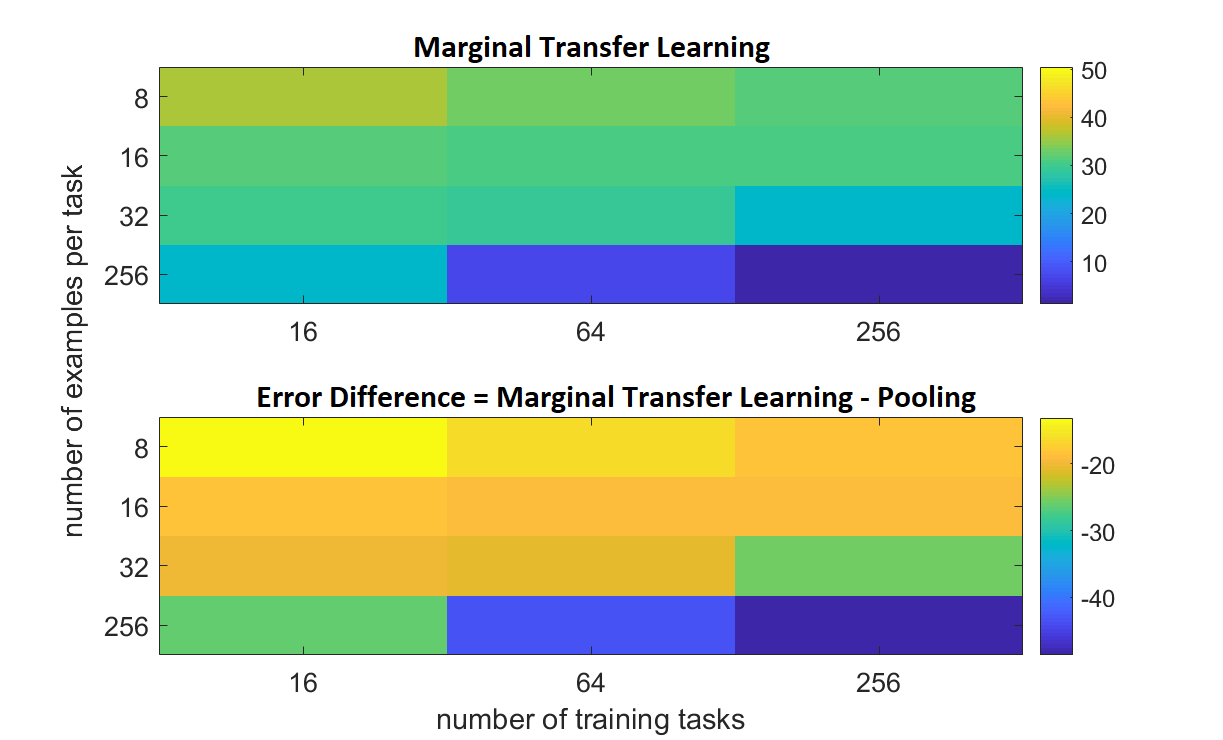}	
	\caption{Synthetic data set: Classification error rates for proposed method and difference with baseline for different experimental settings, i.e., number of examples per task and number of tasks. }
	\label{fig:synthetic_results}
\end{figure}

\subsection{Parkinson's Disease Telemonitoring}\label{subSubSection_Parkinson}
We test our method in the regression setting
using the Parkinson's disease telemonitoring data set, which is composed of a range of biomedical voice measurements using a
telemonitoring device from 42 people with early-stage Parkinson's. The recordings were automatically captured in the
patients' homes. The aim is to predict the clinician's Parkinson's disease symptom score for each recording
on the unified Parkinson's disease
rating scale (UPDRS) \citep{tsanas2010accurate}. Thus we are in a regression setting, and employ the $\epsilon$-insensitive loss from support vector
regression. All kernels are taken to be Gaussian, and the random Fourier features speedup is used.

There are around 200 recordings per patient. We randomly select 7 test users and then vary the number of training users $N$
from 10 to 35 in steps of 5, and we also vary the number of training examples $n$ per user from 20 to 100. We repeat this
process several times to get the average errors which are shown in Fig~\ref{fig:parkinson} and Tables
\ref{table:parkinson_marginal} and \ref{table:parkinson_pooling} (see appendix). The marginal transfer learning method clearly outperforms pooling,
especially as $N$ and $n$ increase.

\begin{figure}[htb!]
\centering
	\includegraphics[width=0.8\columnwidth]{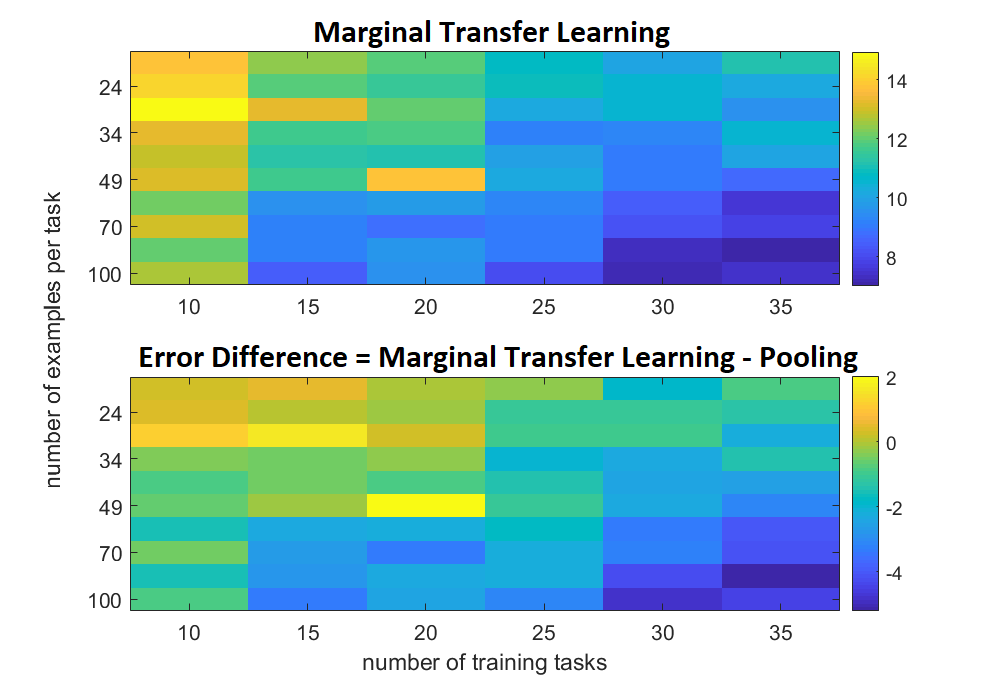}
	\caption{Parkinson's disease telemonitoring data set: Root mean square error rates for proposed method and difference with baseline for different experimental settings, i.e., number of examples per task and number of tasks. 	\label{fig:parkinson}}
\end{figure}

\subsection{Satellite Classification}\label{Satellite_classification}

Microsatellites are increasingly deployed in space missions for a variety of scientific and technological
purposes. Because of randomness in the launch process, the orbit of a microsatellite is random, and must
be determined after the launch. One recently proposed approach is to estimate the orbit of a satellite
based on radiofrequency (RF) signals as measured in a ground sensor network. However, microsatellites are
often launched in bunches, and for this approach to be successful, it is necessary to associate each RF
measurement vector with a particular satellite. Furthermore, the ground antennae are not able to decode
unique identifier signals transmitted by the microsatellites, because (a) of constraints on the
satellite/ground antennae links, including transmission power, atmospheric attenuation, scattering, and
thermal noise, and (b) ground antennae must have low gain and low directional specificity owing to
uncertainty in satellite position and dynamics. To address this problem, recent work has proposed to apply
our marginal transfer learning methodology \citep{sharma2015robust}.

As a concrete instance of this problem, suppose two microsatellites are launched together. Each launch is
a random phenomenon and may be viewed as a task in our framework. For each launch $i$, training data
$(X_{ij}, Y_{ij})$, $j = 1, \ldots, n_i$, are generated using a highly realistic simulation model, where
$X_{ij}$ is a feature vector of RF measurements across a particular sensor network and at a particular
time, and $Y_{ij}$ is a binary label identifying which of the two microsatellites produced a given
measurement. By applying our methodology, we can classify unlabeled measurements $X^T_j$ from a new launch
with high accuracy. Given these labels, orbits can subsequently be estimated using the observed RF
measurements. We thank Srinagesh Sharma and James Cutler for providing us with their simulated data, and
refer the reader to their paper for more details on the application \citep{sharma2015robust}.

To demonstrate this idea, we analyzed the data from \citet{sharma2015robust} for $T=50$ launches, viewing up to
40 as training data and 10 as testing. We use Gaussian kernels and the RFF kernel approximation technique
to speed up the algorithm. Results are shown in Fig~\ref{fig:satellite} (tables given in the appendix). As expected, the error for the
proposed method is much lower than for pooling, especially as $N$ and $n$ increase.

\begin{figure}[htb!]
\centering
	\includegraphics[width=0.8\columnwidth]{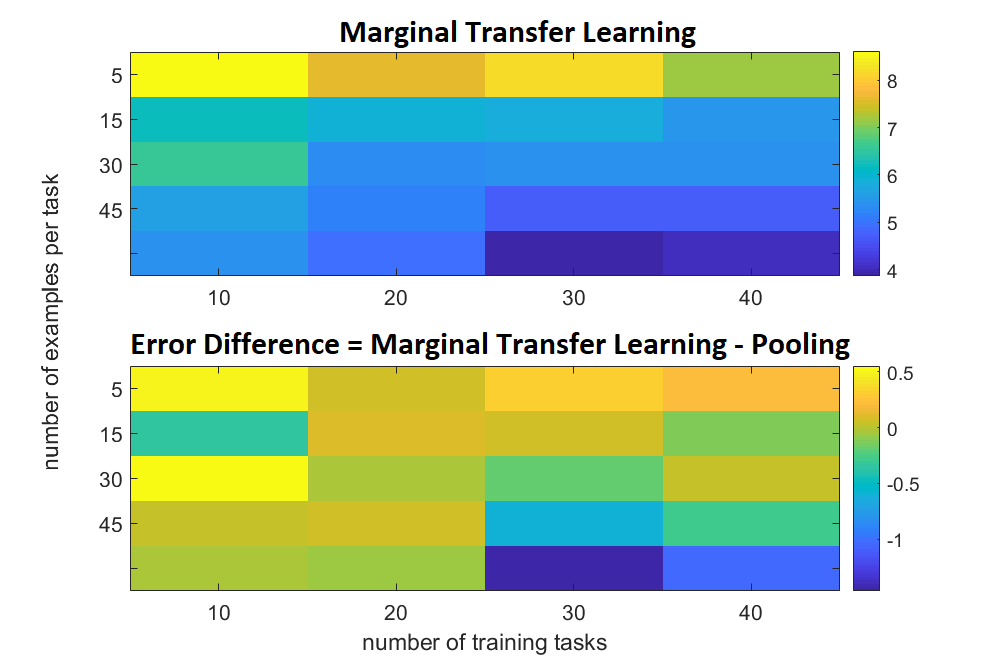}	
	\caption{Satellite data set: Classification error rates for proposed method and difference with baseline for different experimental settings, i.e., number of examples per task and number of tasks. \label{fig:satellite}}
\end{figure}

\subsection{Flow Cytometry Experiments}\label{Flow_Exp}

We demonstrate the proposed methodology for the flow cytometry auto-gating problem, described in Sec. \ref{sec:gating}. The
pooling approach has been previously investigated in this context by \citet{toedling06bioinfo}. We used a data set that is a
part of the FlowCAP Challenges where the ground truth labels have been supplied by human experts \citep{aghaeepour2013critical}.
We used the so-called ``Normal Donors" data set. The data set contains 8 different classes and 30 subjects. Only two classes
(0 and 2) have consistent class ratios, so we have restricted our attention to these two.

The corresponding flow cytometry data sets have sample sizes ranging from 18,641 to 59,411, and the proportion of class 0 in
each data set ranges from 25.59 to 38.44\%. 
We  randomly  selected  10  tasks to serve as the test tasks. These tasks were removed from the pool of eligible training tasks.
We varied the number of training tasks from 5 to 20 with an additive step size of 5, and the number
of training examples per task from 1024 to 16384 with a multiplicative step size of 2. We repeated this process 10 times to
get the average classification errors which are shown in Fig. \ref{fig:flow_error} and Tables \ref{table:flow_marginal} and
\ref{table:flow_pooling} (see appendix). The kernel $k_P$ was Gaussian, and the other two were linear. The Nystr{\"o}m approximation was used to achieve an efficient implementation.

For nearly all settings the proposed method has a smaller error rate than the baseline. Furthermore, for
the marginal transfer learning method, when one fixes the number of training examples and increases the number of
tasks then the classification error rate drops.

On the other hand, we observe on Table~\ref{table:flow_marginal} that the number $n$ of training points per task hardly affects the final performance when $n\geq 10^3$. This is in contrast with the previous experimental examples (synthetic, Parkinson's disease telemonitoring, and satellite classification), for which increasing $n$ led to better performance, but where the values of $n$ remained somewhat modest ($n\leq 256$). This is qualitatively in line with the theoretical results under {\bf (2SGM)} in Section~\ref{sec:theory2sgm} (see in particular the concluding discussion there), suggesting that the influence of increasing $n$ on the performance should eventually taper off, in particular if $n\gg N$.

\begin{figure}[htb!]
\centering
	\includegraphics[width=0.8\columnwidth]{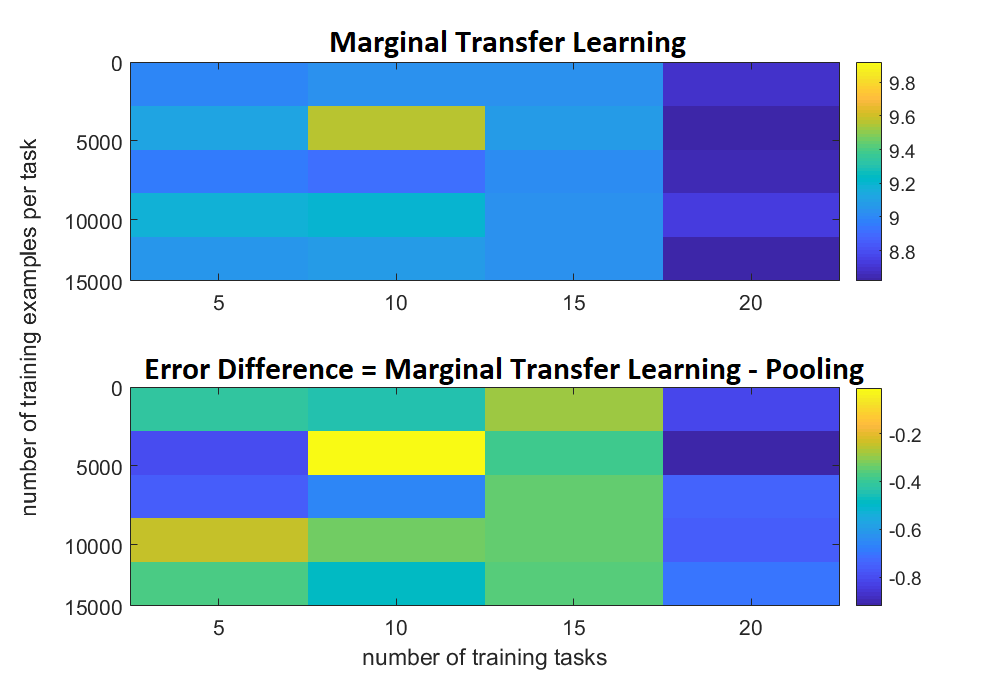}
	\caption{Flow Cytometry Data set: Percentage Classification error rates for proposed method and difference with baseline for different experimental settings, i.e., number of examples per task and number of tasks.
        	\label{fig:flow_error}}
\end{figure}

\section{Discussion}
\label{sec:disc}

Our approach to domain generalization relies on the extended input
pattern $\tX = (P_X,X)$. Thus, we study the natural algorithm of
minimizing a regularized empirical loss over a reproducing kernel Hilbert
space associated with the extended input domain $\fP_{\cX} \times \cX$. We
also establish universal consistency under two sampling plans.  To achieve this, we present novel
generalization error analyses,
and construct a universal kernel on $\fP_{\cX} \times \cX$. A detailed implementation based on novel approximate feature
mappings is also presented. 

On one synthetic and three real-world data sets, the marginal transfer learning approach consistently outperforms a pooling baseline. On some data sets, however, the difference between the two methods is small. This is because the utility of  transfer learning varies from one DG problem to another. As an extreme example, if all of the task are the same, then pooling should do just as well as our method. 

Several future directions exist.  From an application perspective, the
need for adaptive classifiers arises in many applications, especially in
biomedical applications involving biological and/or technical variation in
patient data. Examples include brain computer interfaces and patient
monitoring.  For example, when electrocardiograms are used to continuously
monitor cardiac patients, it is desirable to classify each heartbeat as
irregular or not.  Given the extraordinary amount of data involved,
automation of this process is essential.  However, irregularities in a
test patient's heartbeat will differ from
irregularities of historical patients, hence the need to adapt to the test
distribution \citep{wiens10}.

From a theoretical and methodological perspective, several questions are of interest.  We would like to specify conditions on the meta-distributions $P_S$ or $\mu$ under which the DG risk is close to the expected Bayes risk of the test distribution (beyond the simple condition discussed in Lemma~\ref{lem:determ}). We would also like to develop fast learning rates under suitable distributional assumptions. Furthermore, given the close connections with supervised learning, many common variants of supervised learning can also be investigated in the DG context, including multiclass classification, class probability estimation, and robustness to various forms of noise.

 We can also ask how the methodology and analysis can be
 extended to the context where a
 small number of labels are available for the test distribution
 (additionally to a larger number of unlabeled data from the same distribution); this situation appears to be common in practice, and can be seen as intermediary between the DG and learning to learn (LTL, see Section~\ref{se:ltl}) settings (one could dub it ``semi-supervised domain generalization"). 
 In this setting, two
 approaches appear promising to take advantage of the labeled data. The simplest one is to use the same optimization
 problem \eqref{eq:est}, where we include additionally the labeled
examples of the test distribution. However, if several test samples
 are to be treated in succession, and we want to avoid a full,
 resource-consuming re-training using all the training samples each time,
an interesting alternative is the following:
 learn once a function $f_0(P_X,x)$ using the available training samples via
 \eqref{eq:est}; then, given a partially labeled test sample, learn
 a decision function on this sample only via the usual kernel
 ($k_X$) norm
 regularized empirical loss minimization
 method, but replace the usual regularizer term $\norm{f}^2_\cH$ by
 $\norm{f-f_0(\wh{P}_X^T,.)}_\cH^2$ (note that $f_0(\wh{P}_X^T,.) \in \cH_{k_X}$). In this
 sense,
 the marginal-adaptive decision function learned from the training samples
 would serve as a ``prior'' or ``informed guess" for learning on the test data. This can be also interpreted as learning an adequate complexity penalty to improve learning on new samples, thus connecting to the general principles of LTL (see Section~\ref{se:ltl}).
 An interesting difference with underlying existing LTL approaches is that those tend to adapt the hypothesis class or the``shape" of the regularization penalty to the problem at hand, while the approach delineated above would modify the ``origin" of the penalty, using the marginal distribution information. These two principles could also be combined.


\acks{C. Scott and A. Deshmukh were supported in part by NSF Grants No. 1422157, 1217880, 1047871, 1838179, and 2008074.
G. Blanchard was supported in part by the Deutsche Forschungsgemeinschaft (DFG) via Research Unit FG1735/2 {\it Structural Inference in Statistics}, and via
SFB1294/1 - 318763901DFG {\it Data Assimilation}; and by the Agence Nationale de la Recherche (ANR) via the IA Chair {\it BiSCottE}. G. Lee was supported by the National Research Foundation of Korea
(NRF-2014R1A1A1003458). }

\appendix
\section{Proofs, Technical Details, and Experimental Details}

\label{sec:proofs}

This appendix contains the remaining proofs, as well as additional technical and experimental details.

\subsection{Proof of Proposition~\ref{prop:conv_err_2gsm}}

Let $P^T_{XY}$ be a fixed probability distribution on $\cX \times \mbr$, and $\eps>0$ a fixed number.
Since $\cX$ is a Radon space, by definition any Borel probability measure on it, in  particular $P^T_{X}$ (the $X$-marginal of $P^T_{XY}$), is inner regular, so that there exists a compact set $K \subset \cX$ such that $P^T_{X}(K^c) \leq \eps.$

For all $x\in K$, by the assumed continuity of the decision function $f$ at point $(P^T_X,x)$ there exists an open neighborhood $U_x \times V_x \subset \fP_{\cX} \times \cX$ of this point such that $\abs{f(u,v) - f(P^T_X,x)} \leq \eps$ for all $(u,v) \in U_x\times V_x$. Since the family $(V_{x})_{x\in K}$ is an open covering of the
compact $K$, there exists a finite subfamily $(V_{x_i})_{i \in I}$ covering $K$.
Denoting $U_0 := \bigcap_{i \in I} U_{x_i}$ which is an open neighborhood of $P^T_X$ in
$\fP_{\cX}$, it therefore holds for any $P \in U_0$ and uniformly over
$x \in K$ that $\abs{f(P,x) - f(P_X^T,x)} \leq \abs{f(P,x) - f(P,x_{i_0})} + \abs{f(P_X^T,x) - f(P,x_{i_0})} \leq 2 \eps$, where $i_0 \in I $ is such that $x\in V_{x_{i_0}}$.

Denote $S^T=(X^T_i,Y^T_i)_{1\leq i \leq n_T}$ a sample of size $n_T$ drawn i.i.d. from $P^T_{XY}$,
and $A$ the event $\set{\wh{P}_X^T \in U_0}$. By the law of large numbers, $\wh{P}_X^T$ weakly
converges to $P_X^T$ in probability, so that $\prob{A^c}\leq \eps$ holds for $n_T$ large enough.
We have (denoting $B$ a bound on the loss function):
\begin{multline*}
  \ee{S_T}{\frac{1}{n_T} \sum_{i=1}^{n_T} \ell(f(\wh{P}_X^T,X_i^T),Y_i^T)}
     \leq B \eps  + \ee{S_T}{\frac{1}{n_T} \sum_{i=1}^{n_T} \ell(f(\wh{P}_X^T,X_i^T),Y_i^T)\ind{X_i^T \in K}}\\
\begin{aligned}
    & \leq B (\eps + \prob{A^c})  + \ee{S_T}{\ind{\wh{P}_X^T \in U_0} \frac{1}{n_T} \sum_{i=1}^{n_T} \ell(f(\wh{P}_X^T,X_i^T),Y_i^T)\ind{X_i^T \in K}}\\
    & \leq 2B\eps + 2L\eps + \ee{S_T}{\frac{1}{n_T} \sum_{i=1}^{n_T} \ell(f(P_X^T,X_i^T),Y_i^T)}\\
    & \leq 2(B+L) \eps + \ee{(X^T,Y^T) \sim P_{XY}^T}{\ell(f(P_X^T,X^T),Y^T)}.
\end{aligned}
\end{multline*}
Conversely,
\begin{align*}
    \ee{S_T}{\frac{1}{n_T} \sum_{i=1}^{n_T} \ell(f(\wh{P}_X^T,X_i^T),Y_i^T)}
    & \geq \ee{S_T}{\ind{\wh{P}_X^T \in U_0} \frac{1}{n_T} \sum_{i=1}^{n_T} \ell(f(\wh{P}_X^T,X_i^T),Y_i^T)\ind{X_i^T \in K}}\\
    & \geq \ee{S_T}{\frac{1}{n_T} \sum_{i=1}^{n_T} \ell(f(P_X^T,X_i^T),Y_i^T)} - 2 B\eps -2 L\eps\\
   & \geq   \ee{(X^T,Y^T) \sim P_{XY}^T}{\ell(f(P_X^T,X^T),Y^T)} - 2(B+L) \eps.    
\end{align*}
Since the above inequalities hold for any $\eps>0$ provided $n_T$ is large enough, this yields that
for any fixed $P_{XY}^T$, we have \[\lim_{n_T \rightarrow \infty}  \ee{S_T \sim (P_{XY}^T) ^{\otimes n_T}}{\frac{1}{n_T} \sum_{i=1}^{n_T} \ell(f(\wh{P}_X^T,X_i^T),Y_i^T)} = \ee{(X^T,Y^T) \sim P_{XY}^T}{\ell(f(P_X^T,X^T),Y^T)}.\]
Finally, since the above right-hand side is bounded by $B$, applying dominated convergence to
integrate over $P_{XY}^T \sim \mu$ yields the desired conclusion.

\subsection{Proof of Corollary \ref{cor:const}}

\begin{proof}
Denote $\cE^* = \inf_{f: {\fP}_X \times \cX \to \mbr} \cE(f)$. Let
$\eps > 0$. Since $\ol{k}$ is a universal kernel on $\fP_{\cX} \times \cX$ and
$\ell$ is Lipschitz, there exists $f_0 \in \cH_{\ol{k}}$ such that
$\cE(f_0) \le \cE^* + \frac{\eps}2$ \citep{steinwart08}.


By comparing the objective function in~\eqref{eq:est} at the
minimizer
$\wh{f}_\lambda$ and at
the null function, using assumption {\bf (LB)} we deduce that we must have $\| \wh{f}_{\lambda} \|
\leq \sqrt{B_0/\lambda}$.
Applying Theorem~\ref{th:main} for $R = R_\lambda =
\sqrt{B_0/\lambda}$,
and $\delta  = 1/N^2$,
gives that with probability at least $1-1/N^2$,
\[
  \sup_{f \in \cB_{\ol{k}}(R)} \abs{     \wh{\cE}(f,N) - \cE(f)}
  \leq \eps(N):=
  (B_0 + L_\ell B_\fK B_k \sqrt{{B_0}/{\lambda}}) \frac{(\sqrt{\log N}+2)}{\sqrt{N}}.
\]


Let $N$ be large enough so that $\| f_0 \| \le R_\lambda$.
We can now deduce that with probability at least $1-1/N^2$,
\begin{align*}
  \cE(\wh{f}_\lambda)
&\le \wh{\cE}(\wh{f}_\lambda,N) + \eps(N) \\
&=\wh{\cE}(\wh{f}_\lambda,N) + \lambda \| \wh{f}_\lambda \|^2
- \lambda \| \wh{f}_\lambda \|^2 + \eps(N) \\
&\le \wh{\cE}(f_0,N) + \lambda \| {f_0}\|^2
- \lambda \| \wh{f}_\lambda \|^2 + \eps(N) \\
&\le \wh{\cE}(f_0,N) + \lambda \| {f_0}\|^2 + \eps(N) \\
&\le \cE(f_0) + \lambda \| {f_0}\|^2 + 2\eps(N) \\
&\le \cE^* + \frac{\eps}2 + \lambda \| {f_0}\|^2 + 2\eps(N). \\
\end{align*}
The last two terms become less than $\frac{\eps}2$ for $N$ sufficiently
large by the assumptions on the growth of 
$\lambda = \lambda(N)$. This establishes that for any $\eps > 0$,
there exists $N_0$ such that
\[
\sum_{N \ge N_0}  \Pr(\cE(\wh{f}_\lambda) \ge
\cE^* + \eps) \le \sum_{N \ge N_0}  \frac1{N^2} <
\infty,
\]
and so the result follows by the Borel-Cantelli lemma.
\end{proof}

\subsection{Proof of Theorem~\ref{th:tworisks}}

We control the difference between the training loss
and the conditional risk at infinite sample size via
the following decomposition:
\begin{align}
  \sup_{f \in \cB_{\ol{k}}(R)} |\cL(S,f) &- \cE^\infty(f|P_{XY})|
   = \sup_{f \in \cB_{\ol{k}}(R)} \abs{ \frac{1}{n} \sum_{i=1}^{n}
\ell(f(\wh{P}_X ,X_i),Y_{i}) - \cE^\infty(f|P_{XY})} \notag \\
& \leq \sup_{f \in \cB_{\ol{k}}(R)}
 \abs{ \frac{1}{n} \sum_{i=1}^{n}
\paren{ \ell(f(\wh{P}_X,X_{i}),Y_{i}) -
\ell(f(P_X,X_{i}),Y_{i})}} \notag
\\
& \;\;\;\;
+ \sup_{f \in \cB_{\ol{k}}(R)} \abs{ \frac{1}{n}
\sum_{i=1}^{n} \ell(f(P_X,X_{i}),Y_{i})
- \cE^\infty(f|P_{XY})} \notag\\
& =: (I) + (II). \label{eq:term1-2}
\end{align}

\subsubsection{Control of Term (I)}

Using the assumption that the loss $\ell$ is $L_\ell$-Lipschitz
in its first coordinate,
we can bound the first term as follows:
\begin{equation}
\label{eq:term1}
(I)  \leq L_\ell \sup_{f \in \cB_{\ol{k}}(R)}  
\frac{1}{n} \sum_{i=1}^{n} \abs{f(\wh{P}_X,X_{i}) - f(P_X,X_{i})}
 \leq L_\ell  \sup_{f \in \cB_{\ol{k}}(R)} 
\norm{f(\wh{P}_X,\cdot) - f(P_X,\cdot)}_\infty.
\end{equation}
This can now be controlled using the first part of the
following result:
\begin{lemma}
Assume \kbound/ holds.
Let $P_X$ be an arbitrary
distribution on $\cX$ and
$\wh{P}_X$ denote an empirical distribution on $\cX$ based on an iid
sample
of size $n$ from $P_X$. Then with probability at least $1-\delta$ over the
draw of this sample, it holds that
\begin{equation}
  \label{eq:supf-dev}
\sup_{f \in \cB_{\ol{k}}(R)} \norm{f(\wh{P}_X,\cdot) -
f(P_X,\cdot)}_\infty
\leq 3 R B_k L_\fK B_{k'}^\alpha  \paren{\frac{\log 2\delta^{-1}}{n}}^{\frac{\alpha}{2}}\,.
\end{equation}
In expectation, it holds
\begin{equation}
  \label{eq:supf-exp}
  \e{\sup_{f \in \cB_{\ol{k}}(R)} \norm{f(\wh{P}_X,\cdot) -
f(P_X,\cdot)}_\infty}
\leq 2 R B_k L_\fK B_{k'}^\alpha n^{-\alpha/2}.
\end{equation}
\end{lemma}
\begin{proof}
Let $X_1,\ldots,X_n$ denote the $n$-sample from $P_X$.
Let us denote by $\Phi'_X$ the canonical feature mapping $x \mapsto k'_X(x,\cdot)$ from
$\cX$ into $\cH_{k'_\cX}$.
We have for all $x \in \cX$, $\norm{\Phi_X'(x)} \leq B_{k'}$, and so, as a
consequence of
Hoeffding's inequality in a Hilbert space 
(see, e.g., \citealp{PinSak85}), it holds with
probability at least $1-\delta$:
\begin{equation}
\label{eq:Hoeffdingstyle}
\norm{ \Psi(P_X) - \Psi(\wh{P}_X)}
= \norm{ \frac{1}{n} \sum_{i=1}^n \Phi_X'(X_i) - \ee{X\sim
P_X}{\Phi_X'(X)}}
\leq  3 B_{k'} \sqrt{\frac{\log 2\delta^{-1}}{n}}.
\end{equation}
Furthermore, using the reproducing property of the kernel
$\ol{k}$, we have for any $x\in \cX$ and $f\in \cB_{\ol{k}}(R)$:
\begin{align*}
|f(\wh{P}_X,x) - f(P_X,x)|
& =
\left| \inner{\ol{k}((\wh{P}_X,x),\cdot) - \ol{k}((P_X,x),\cdot) , f} \right| \\
& \leq \norm{f} \norm{\ol{k}((\wh{P}_X,x),\cdot) - \ol{k}((P_X,x),\cdot)}\\
& \leq R k_X(x,x)^{\frac{1}{2}} \Big(\fK(\Psi(P_X),\Psi(P_X)) \\
& \qquad \qquad \qquad  +
\fK(\Psi(\wh{P}_X),\Psi(\wh{P}_X))
- 2 \fK(\Psi(P_X),\Psi(\wh{P}_X))\Big)^{\frac{1}{2}} \\
& \leq R B_k  \norm{\Phi_{\fK}(\Psi(P_X)) - \Phi_{\fK}(\Psi(\wh{P}_X))} \\
& \leq R B_k L_\fK \norm{\Psi(P_X) - \Psi(\wh{P}_X)}^{\alpha},
\end{align*}
where in the last step we have used property \khoeld/ together with the fact that for all $P \in \fP_\cX$,
$\norm{\Psi(P)} \leq \int_{\cX} \norm{ k'_X(x,\cdot)} dP_X(x) \leq B_{k'}$,
so that $\Psi(P) \in \cB_{k'_X}(B_{k'})$.
Combining with \eqref{eq:Hoeffdingstyle} gives~\eqref{eq:supf-dev}.

For the bound in expectation, we use the inequality above,
and can bound further (using Jensen's inequality, since $\alpha \leq 1$)
\begin{multline*}
\e{ \norm{\Psi(P_X) - \Psi(\wh{P}_X)}^{\alpha}}
 \leq \e{ \norm{\Psi(P_X) - \Psi(\wh{P}_X)}^2}^{\alpha/2}\\
\begin{aligned}
& = \paren{\frac{1}{n^2}\sum_{i,j=1}^{n}\e{\inner{\Phi'_X(X_i)-\e{\Phi'_X(X)},\Phi'_X(X_j)-\e{\Phi'_X(X)}}}}^{\alpha/2}\\
& = \paren{\frac{1}{n^2}\sum_{i=1}^{n}\e{\norm{\Phi'_X(X_i)-\e{\Phi'_X(X)}}^2}}^{\alpha/2}\\
& \leq \paren{\frac{4B_{k'}^2}{n}}^{\frac{\alpha}{2}},
\end{aligned}
\end{multline*}
which yields~\eqref{eq:supf-exp} in combination with the above.
\end{proof}

\subsubsection{Control of Term (II)}

Term (II) takes the form of a uniform deviation over a RKHS ball of
an empirical loss for the data $(\wt{X}_i,Y_i)$, where $\wt{X}_i := (P_X,X_i)$. Since
$P_X$ is fixed (in contrast with term (I)
where $\wh{P}_X$ depended on the whole sample), these data are i.i.d.
Similar to the proofs of Theorems~\ref{th:radgenbound}
and~\ref{th:main}, we can therefore apply again standard Rademacher analysis, this time at the level of one specific task (Azuma-McDiarmid inequality followed by Rademacher complexity analysis for a Lipschitz, bounded
loss over a RKHS ball; see \citealp{kolt01,BarMen02}, Theorems~8, 12 and Lemma~22 there).
The kernel $\ol{k}$ is bounded by $B_k^2B_{\fK}^2$ by assumption \kbound/; by Lemma~\ref{le:boundloss}
and assumption {\bf (LB)}, the loss is bounded by $B_0+L_\ell R B_k B_{\fK}$, and is $L_\ell$-Lipschitz.
Therefore, with probability at least $1-\delta$ we get
\begin{align}
  (II) &\leq (B_0 + 3L_\ell R B_k B_{\fK})\min\paren{\sqrt{\frac{\log (\delta^{-1})}{2n}},1} \notag \\
  &\leq (B_0 + 3L_\ell R B_k B_{\fK})\min\paren{\paren{\frac{\log (\delta^{-1})}{2n}}^{\frac{\alpha}{2}},1}.  \label{eq:term2} 
\end{align}
Observe that we can cap the second factor at 1 since (II) is upper bounded by the
bound on the loss in all cases; the second inequality then uses $\alpha \leq 1$.
Combining with a union bound  the probabilistic controls~\eqref{eq:term1}, \eqref{eq:supf-dev} of term (I) and~\eqref{eq:term2} of (II) yields~\eqref{eq:bound2gsm_emp}.

To establish the bound~\eqref{eq:bound2gsm_ave} we use a similar argument.
We use the
decomposition
\begin{multline*}
\sup_{f \in \cB_{\ol{k}}(R)} \abs{\cE(f|P_{XY},n) - \cE^\infty(f|P_{XY})}\\
\begin{aligned}
& \leq \sup_{f \in \cB_{\ol{k}}(R)}
 \abs{\ee{S_n \sim (P_{XY})^{\otimes n}}{\frac1{n}\sum_{i=1}^{n}
\paren{ \ell(f(\wh{P}_X,X_{i}^T),Y_{i}) - \ell(f(P_X,X_{i}),Y_{i})}}}
\\
& \;\;
+ \sup_{f \in \cB_{\ol{k}}(R)}
 \abs{\ee{S \sim (P_{XY})^{\otimes n}}{\frac1{n}\sum_{i=1}^{n}
\paren{ \ell(f(P_X,X_{i}^T),Y_{i})}} - \ee{(X,Y)\sim P_{XY}}{\ell(f(P_X,X),Y)}}
\\
& =: (I') + (II').
\end{aligned}
\end{multline*}
It is easily seen that the second term vanishes: for any fixed
$f \in \cB_{\ol{k}}(R)$ and $P_{XY}$, the difference of the expectations is zero.
For the first term, using Lipschitzness of the loss, then~\eqref{eq:supf-exp}, we obtain
\begin{align*}
  (I')  \leq L_\ell \e{ \sup_{f \in \cB_{\ol{k}}(R)}  
  \norm{f(\wh{P}_X,.) - f(P_X,.)}_\infty}
  \leq 2 L_\ell R B_k L_\fK B_{k'}^\alpha n^{-\alpha/2},
\end{align*}
yielding~\eqref{eq:bound2gsm_ave}. The bound~\eqref{eq:bound2gsm_gen} is obtained as
a direct consequence by taking expectation over $P_{XY} \sim \mu$ and using Jensen's
inequality to pull out the absolute value.

\subsection{Regularity Conditions for the Kernel on Distributions}
\label{sec:reg}

We investigate sufficient conditions on the kernel $\fK$ to
ensure the regularity condition \khoeld/ \eqref{eq:kernhold}. Roughly speaking,
the regularity of the feature mapping of a reproducing kernel is
``one half'' of the regularity of the kernel in each of its variables.
The next result considers the situation where $\fK$ is itself simply
a H\"older continuous function of its variables.
\begin{lemma}
Let $\alpha\in(0,\frac{1}{2}]$.
Assume that the kernel $\fK$ is H\"older continuous of order~$2\alpha$
and constant $L^2_{\fK}/2$ in each of its two variables on
$\cB_{k'_X}(B_{k'})$. Then \khoeld/  is satisfied.
\end{lemma}
\begin{proof}
For any $v,w \in \cB_{k'_X}(B_{k'})$:
\begin{align*}
\norm{\Phi_{\fK}(v) - \Phi_{\fK}(w)} &
= \paren{ \fK(v,v) + \fK(w,w) - 2 \fK(v,w)}^{\frac{1}{2}}
\leq L_{\fK} \norm{v-w}^{\alpha}.
\end{align*}
\end{proof}
The above type of regularity only leads to a H\"older feature
mapping of order at most~$\frac{1}{2}$ (when the kernel function is Lipschitz
continuous in each variable). Since this order plays an important role
in the rate of convergence of the upper bound in the main error control theorem,
it is desirable to study conditions ensuring more regularity, in
particular a feature mapping which has
at least Lipschitz continuity.
For this, we consider the following stronger condition, namely that
the kernel function is twice differentiable in a specific sense:
\begin{lemma}
\label{le:Lipk}
Assume that, for any $u,v \in \cB_{k'_X}(B_{k'})$ and
unit norm vector $e$ of $\cH_{k'_X}$, the function
$h_{u,v,e}: (\lambda,\mu) \in \mbr^2 \mapsto \fK(u + \lambda e, v + \mu e)$ admits
a mixed partial derivative $\partial_{1} \partial_{2} h_{u,v,e}$ at the point $(\lambda,\mu)=(0,0)$
which is 
bounded in absolute value by a constant $C^2_\fK$ independent of $(u,v,e)$.
Then \eqref{eq:kernhold} is satisfied with $\alpha=1$ and $L_\fK=C_{\fK}$, that is,
the canonical feature mapping of $\fK$ is Lipschitz continuous on $\cB_{k'_X}(B_{k'})$.
\end{lemma}
\begin{proof}
The argument is along the same lines as \citet{steinwart08}, Lemma 4.34.
Observe that, since $h_{u,v,e}(\lambda+\lambda',\mu+\mu') = h_{u+\lambda e,v +\mu e,e}(\lambda',\mu')$,
the function $h_{u,v,e}$ actually admits a uniformly bounded mixed partial derivative in any point
$(\lambda,\mu)\in\mbr^2$ such that $(u+\lambda e,v+\mu e)\in \cB_{k'_X}(B_{k'})$\,.
Let us denote $\Delta_1 h_{u,v,e}(\lambda,\mu) := h_{u,v,e}(\lambda,\mu) - h_{u,v,e}(0,\mu)$\,.
For any $u , v \in \cB_{k'_X}(B_{k'})$\,, $u \neq v$\,, let us set $\lambda:=\norm{v-u}$ and
the unit vector $e:=\lambda^{-1}(v-u)$; we have
\begin{align*}
\norm{\Phi_{\fK}(u) - \Phi_{\fK}(v)}^2 & =
\fK(u,u) + \fK(u+\lambda e,u+\lambda e) -\fK(u,u+\lambda e) - \fK(u+\lambda e,u)\\
& = \Delta_1 h_{u,v,e}(\lambda,\lambda) - \Delta_1 h_{u,v,e}(\lambda,0)\\
& = \lambda \partial_2 \Delta_1 h_{u,v,e}(\lambda,\lambda')\,,
\end{align*}
where we have used the mean value theorem, yielding existence of
$\lambda' \in [0,\lambda]$ such that the last equality holds.
Furthermore,
\begin{align*}
\partial_2 \Delta_1 h_{u,v,e}(\lambda,\lambda') & =
\partial_2 h_{u,v,e}(\lambda,\lambda') - \partial_2 h_{u,v,e}(0,\lambda')\\
& = \lambda \partial_1 \partial_2 h_{u,v,e}(\lambda'',\lambda')\,,
\end{align*}
using again the mean value theorem, yielding existence of
$\lambda'' \in [0,\lambda]$ in the last equality.
Finally, we get
\[
\norm{\Phi_{\fK}(u) - \Phi_{\fK}(v)}^2 = \lambda^2 \partial_{1} \partial_2  h_{u,v,e}(\lambda',\lambda'')
\leq C_{\fK}^2 \norm{v-u}^2\,.
\]
\end{proof}

\begin{lemma}
Assume that the kernel $\fK$ takes the form of either (a) $\fK(u,v) = g(\norm{u-v}^2)$ or
(b) $\fK(u,v) = g(\inner{u,v})$\,, where $g$ is a twice differentiable real function of a
real variable defined on $[0,4B_{k'}^2]$ in case (a), and on
$[-B_{k'}^2, B_{k'}^2]$ in case (b). Assume
$\norm{g'}_\infty \leq C_1$ and $\norm{g''}_\infty \leq C_2$. Then
$\fK$ satisfies the assumption of Lemma \ref{le:Lipk} with $C_\fK:=2C_1 + 16 C_2 B_{k'}^2$
in case (a), and $C_\fK := C_1 + C_2 B_{k'}^2$ for case (b).
\end{lemma}
\begin{proof}
In case (a), we have $h_{u,v,e}(\lambda,\mu)=g(\norm{u-v + (\lambda-\mu)e}^2)$. It follows
\begin{align*}
\abs{\partial_{1} \partial_2 h_{u,v,e}(0,0)} & = \abs{-2 g'(\norm{u-v}^2)\norm{e}^2
- 4g''(\norm{u-v}^2)
\inner{u-v,e}^2}\\
& \leq 2 C_1 + 16 C_2 B^2_{k'} \,.
\end{align*}
In case (b), we have $h_{u,v,e}(\lambda,\mu)=g(\inner{u + \lambda e, v + \mu e})$. It follows
\begin{align*}
\abs{\partial_{1} \partial_2 h_{u,v,e}(0,0)} & = \abs{g'(\inner{u,v})\norm{e}^2 +
g''(\inner{u,v}) \inner{u,e}\inner{v,e}
}\\
& \leq C_1 + C_2 B^2_{k'} \,.
\end{align*}
\end{proof}

\subsection{Proof of Lemma \ref{lem:produniv}}

\begin{proof}
Let $\cH,\cH'$ the RKHS associated to $k,k'$ with the associated feature
mappings
$\Phi,\Phi'$. Then it can be checked that $(x,x')\in \cX \times \cX'
\mapsto \Phi(x)\otimes\Phi'(x')$
is a feature mapping for $\ol{k}$ into the Hilbert space $\cH\otimes\cH'$.
Using \citep{steinwart08}, Th. 4.21, we deduce that the RKHS $\ol{H}$ of
$\ol{k}$ contains precisely all
functions of the form $(x,x') \in \cX\times\cX' \mapsto F_w(x,x') =
\inner{w,\Phi(x)\otimes\Phi(x')}$,
where $w$ ranges over $\cH\otimes\cH'$. Taking $w$ of the form $w =
g\otimes g'$,
$g\in \cH,g\in \cH'$, we deduce that $\ol{H}$ contains in particular all
functions of the
form $f(x,x')=g(x)g(x')$, and further
\[
\wt{\cH} := \mathrm{span}\set{(x,x') \in \cX\times\cX' \mapsto g(x)g(x') ;
g \in \cH, g' \in \cH'} \subset \ol{H}.
\]
Denote $\cC(\cX),\cC(\cX'),\cC(\cX\times \cX')$ the set of real-valued
continuous functions on the respective spaces. Let
\[
\cC(\cX)\otimes \cC(\cX') := \mathrm{span}\set{ (x,x') \in \cX \times \cX'
\mapsto  f(x)f'(x') ; f \in \cC(\cX), f' \in \cC(\cX')}.
\]
Let $G(x,x')$ be an arbitrary element of $\cC(\cX)\otimes \cC(\cX')$,
$G(x,x') = \sum_{i=1}^k \lambda_i g_i(x)g'_i(x')$
with $g_i \in \cC(\cX),g_i'\in \cC(\cX')$ for $i=1,\ldots,k$. For
$\eps>0$,
by universality of $k$ and $k'$, there exist
$f_i \in \cH,f_i'\in \cH'$ so that $\norm{f_i-g_i}_\infty \leq \eps$,
$\norm{f'_i-g'_i}_\infty \leq \eps$
for $i=1,\ldots,k$. Let $F(x,x') := \sum_{i=1}^k \lambda_i f_i(x)f'_i(x')
\in \wt{\cH}$. We have
\begin{align*}
\norm{F(x,x') - G(x,x')}_\infty & \leq \norm{\sum_{i=1}^k \lambda_i
(g_i(x)g'_i(x) - f_i(x) f'_i(x))}_\infty \\
& = \Bigg\|\sum_{i=1}^k \lambda_i \Big[(f_i(x)-g_i(x))(g'_i(x')-f'_i(x'))
\\
& \qquad \qquad + g_i(x)(g'_i(x) - f'_i(x')) + (g_i(x) -
f_i(x)) g'_i(x')\Big] \Bigg\|_\infty \\
& \leq \eps \sum_{i=1}^k \abs{\lambda_i}(\eps + \norm{g_i}_\infty +
\norm{g'_i}_\infty)\,.
\end{align*}
This establishes that $\wt{\cH}$ is dense in $\cC(\cX)\otimes \cC(\cX')$
for the supremum norm.
It can be easily checked that $\cC(\cX)\otimes \cC(\cX')$ is an algebra of
functions which does not vanish
and separates points on $\cX\times \cX'$. By the Stone-Weierstrass theorem,
it is therefore dense in
$\cC(\cX\times \cX')$ for the supremum norm. We deduce that $\wt{\cH}$
(and thus also $\ol{\cH}$)
is dense in $\cC(\cX\times \cX')$, so that $\ol{k}$ is universal.
\end{proof}

\subsection{Approximate Feature Mapping for Scalable Implementation}

We first treat random Fourier features and then the Nystr\"om method.

\subsubsection{Random Fourier Features}
The approximation of \citet{rahimi2007random} is based on Bochner's theorem, which
characterizes shift invariant kernels.
\begin{theorem}
    \label{thm:bochner}
    A continuous kernel $k(x,y) = k(x-y)$ on $\mathbb{R}^d$ is positive definite iff $k(x-y)$ is the Fourier transform of a
    finite positive measure $ p(w) $, i.e.,
\end{theorem}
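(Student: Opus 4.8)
The plan is to prove the two implications separately, since they are wildly unequal in difficulty. The ``if'' direction is a one-line computation: supposing $k(x-y)=\int_{\mbr^d} e^{i w^\top(x-y)}\,p(dw)$ for a finite positive measure $p$, then for any points $x_1,\ldots,x_n$ and complex scalars $c_1,\ldots,c_n$ I would write
\[
\sum_{j,l} c_j \overline{c_l}\, k(x_j - x_l)
= \int_{\mbr^d} \Big| \sum_{j} c_j\, e^{i w^\top x_j} \Big|^2 p(dw) \geq 0,
\]
so that positive definiteness falls out immediately, the nonnegativity of the integrand being the entire content.

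The converse is the substantive direction. The obstruction is that a continuous positive definite $k$ need not be integrable, so one cannot define its Fourier transform naively. My strategy is Gaussian regularization followed by a weak-compactness argument. After normalizing $k(0)=1$, I would introduce the damped functions $k_\eps(x) := k(x)\,e^{-\eps \norm{x}^2/2}$, which \emph{are} integrable, and define candidate spectral densities by inverse Fourier transform, $p_\eps(w) := (2\pi)^{-d}\int_{\mbr^d} k_\eps(x)\, e^{-i w^\top x}\, dx$. Two facts must then be extracted: that each $p_\eps$ is nonnegative, and that the associated measures have uniformly bounded total mass.

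Nonnegativity of $p_\eps$ is where positive definiteness is used. The key step is to upgrade the finite-sum positive definiteness to its integral form, $\int\!\int k(s-t)\,\phi(s)\overline{\phi(t)}\,ds\,dt \geq 0$ for suitable test functions $\phi$, which follows by approximating the double integral with Riemann sums and invoking continuity of $k$. Choosing $\phi$ to be a plane wave tempered by a Gaussian, $\phi(s)=e^{i w^\top s}\,e^{-\eps\norm{s}^2/2}$, the double integral reduces — after the substitution $u=s-t$ and evaluation of the resulting Gaussian integral in the complementary variable — to a positive multiple of $p_\eps(w)$ (up to the harmless reflection $w\mapsto -w$, since positive definiteness forces $k(-x)=\overline{k(x)}$), forcing $p_\eps(w)\geq 0$. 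For the total mass, Fourier inversion applied to the Gaussian factor gives $\int_{\mbr^d} p_\eps(w)\,dw = k_\eps(0)=k(0)=1$, uniformly in $\eps$.

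At this point $\set{p_\eps(w)\,dw}_{\eps>0}$ is a family of probability measures on $\mbr^d$, and I would extract a weak limit $p$ as $\eps\to 0$. The main obstacle, and the only place genuine care is needed, is \emph{tightness}: one must rule out mass escaping to infinity before Prokhorov's theorem yields a convergent subsequence. This is exactly where continuity of $k$ at the origin is indispensable; the standard device is to integrate $1-\mathrm{Re}\,k$ over a small cube about $0$ and relate this, through the representation $k_\eps(x)=\int e^{i w^\top x} p_\eps(dw)$, to the $p_\eps$-mass outside a large ball, so that continuity of $k$ at $0$ bounds the tails uniformly in $\eps$. Granting tightness, Prokhorov delivers a subsequence $p_{\eps_m}\rightharpoonup p$ with $p$ a finite positive measure. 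Finally, since $e^{i w^\top x}$ is bounded and continuous in $w$, weak convergence passes to the limit on the right of $k_{\eps_m}(x)=\int e^{i w^\top x} p_{\eps_m}(dw)$ while $k_{\eps_m}(x)\to k(x)$ on the left, yielding the claimed representation $k(x)=\int_{\mbr^d} e^{i w^\top x}\,p(dw)$.
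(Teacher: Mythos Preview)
Your proof sketch is a correct and essentially classical rendition of Bochner's theorem, following the standard Gaussian-regularization route. However, there is nothing in the paper to compare it against: the paper does not prove this statement at all. Theorem~\ref{thm:bochner} is quoted as a known result (explicitly attributed to Bochner, with a citation to Rahimi and Recht) and used purely as a tool to construct random Fourier features; no proof is given or attempted. So the comparison the prompt asks for is vacuous here --- your argument is sound, but the ``paper's own proof'' does not exist.
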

 \begin{equation}
        \label{eq:bochner_eq}
       k(x-y) = \int_{\mathbb{R}^d} p(w)e^{jw^T(x-y)}dw\,.
 \end{equation}


If a shift invariant kernel $k(x-y)$ is properly scaled then
Theorem~\ref{thm:bochner} guarantees that $p(w)$ in \eqref{eq:bochner_eq}
is a proper probability distribution.

Random Fourier features (RFFs) approximate the integral in \eqref{eq:bochner_eq}
using samples drawn from $p(w)$. If $w_1,
w_2,..., w_L$ are i.i.d. draws from $p(w)$,
\begin{align}
    \label{eq:rff_approx}
    \nonumber k(x-y)  &= \int_{\mathbb{R}^d} p(w)e^{jw^T(x-y)}dw  \\  \nonumber
    &=   \int_{\mathbb{R}^d} p(w)\cos(w^Tx-w^Ty)dw   \\ \nonumber
   & \approx  \frac {1}{L} \sum_{i=1}^{L} \cos(w_i^Tx-w_i^Ty)\\ \nonumber
 	&= \frac {1}{L} \sum_{i=1}^{L} \cos(w_i^Tx) \cos(w_i^Ty) + \sin(w_i^Tx) \sin(w_i^Ty)\\ \nonumber
	&= \frac {1}{L} \sum_{i=1}^{L} [\cos(w_i^Tx) , \sin(w_i^Tx)]^T [\cos(w_i^Ty)  , \sin(w_i^Ty)] \\
	&= z_w(x)^Tz_w(y)  \,,
\end{align}
where $z_w(x) = \frac {1}{\sqrt L} [\cos(w_1^Tx) , \sin(w_1^Tx), ... ,
  \cos(w_L^Tx) , \sin(w_L^Tx)] \in \mathbb{R}^{2L}$ is an approximate nonlinear
feature mapping of dimensionality $2L$. In the
following, we extend the RFF methodology to the kernel
$\bar{k}$ on the extended feature space $\fP_{\cX}\times \cX $.  Let $
X_1,\ldots,X_{n_1}$ and $ X^{\prime}_1,\ldots,X^{\prime}_{n_2}$ be i.i.d.
realizations of $ P_X$ and $ P_X^{\prime}$ respectively, and let $
\widehat{P}_X$ and $ \widehat{P}^{\prime}_X$ denote the corresponding
empirical distributions. Given $ x, x^{\prime} \in {\cX} $, denote $
\tilde{x} = ( \wh{P}_X, x) $ and $ \tilde{x}^{\prime} = (\wh{P}_X^{\prime},
x^{\prime})$. The goal is to find an approximate feature mapping
$\bar{z}(\tilde{x})$
such that $ \bar{k}(\tilde{x},\tilde{x}^{\prime}) \approx
\bar{z}(\tilde{x})^T\bar{z}(\tilde{x}^{\prime}) $.  Recall that
\begin{equation*}
\label{eq:kernel_ext_space}
\bar{k}(\tilde{x},\tilde{x}^{\prime}) = k_P(\widehat{P}_X,\widehat{P}_X^{\prime})k_X(x,x^{\prime});
\end{equation*}
specifically, we consider $k_X$ and $k_X'$ to be Gaussian kernels and
the kernel on distributions $k_P$ to have the Gaussian-like form
\begin{equation*}
\label{eq:kernel_distribution}
  k_P(\widehat{P}_X,\widehat{P}_X^{\prime}) = \exp  \left\{
\frac{1}{2\sigma_P^2} \| \Psi(\widehat{P}_X) - \Psi(\widehat{P}_X^{\prime})\|^2_{H_{k_X^{\prime}} } \right\}.
\end{equation*}
As noted earlier in this section, the calculation of
$ k_P(\widehat{P}_X,\widehat{P}_X^{\prime}) $ reduces to the computation
of
\begin{equation}
    \label{eq:kernel_distribution_calc1}
    \langle  \Psi(\widehat{P}_X), \Psi(\widehat{P}_X^{\prime}) \rangle =
    \frac{1}{n_1n_2}  \sum_{i=1}^{n_1}  \sum_{j=1}^{n_2}  k_X^{\prime}(X_{i},X_{j}^{\prime}) .
\end{equation}

We use Theorem~\ref{thm:bochner} to approximate $ k_X^{\prime}$ and thus $
\langle  \Psi(\widehat{P}_X),
\Psi(\widehat{P}_X^{\prime}) \rangle$. Let $w_1, w_2,..., w_L$ be i.i.d. draws
from $ p^{\prime}(w) $, the inverse Fourier transform of
$k_X^{\prime}$.
Then we have:
\begin{multline*}
  \lefteqn{ \langle  \Psi(\widehat{P}_X), \Psi(\widehat{P}_X^{\prime}) \rangle =   \frac{1}{n_1n_2}  \sum_{i=1}^{n_1}  \sum_{j=1}^{n_2}  k_X^{\prime}(X_{i},X_{j}^{\prime}) } \\
\begin{aligned}
& \approx    \frac{1}{Ln_1n_2}\sum_{l=1}^L\sum_{i=1}^{n_1} \sum_{j=1}^{n_2} \cos(w_l^TX_{i} - w_l^TX_{j}^{\prime}) \\
& = \frac{1}{Ln_1n_2}\sum_{l=1}^L\sum_{i=1}^{n_1} \sum_{j=1}^{n_2} [\cos(w_l^TX_{i})\cos(w_l^TX_{j}^{\prime}) + \sin(w_l^TX_{i})\sin(w_l^TX_{j}^{\prime})] \\
& = \frac{1}{Ln_1n_2}\sum_{l=1}^L\{\sum_{i=1}^{n_1} [\cos(w_l^TX_{i}) , \sin(w_l^TX_{i})]^T \sum_{j=1}^{n_2}[\cos(w_l^TX_{j}^{\prime}) , \sin(w_l^TX_{j}^{\prime})]\}  \\
& = Z_P(\widehat{P}_X)^TZ_P(\widehat{P}_X^{\prime}), \\
\end{aligned}
\end{multline*}
where
\begin{eqnarray}
\label{eq:kernel_distribution_approx_first}
Z_P(\widehat{P}_X)=  \frac{1}{n_1\sqrt{L}} \sum_{i=1}^{n_1} \Big[ \cos
(w_1^TX_{i}), \sin (w_1^TX_{i}), ... ,\cos (w_L^TX_{i}), \sin
(w_L^TX_{i}) \Big],
\end{eqnarray}
and $Z_P(\widehat{P}_X^{\prime}) $ is defined analogously with $ n_1$
replaced by $ n_2$. For the proof of Theorem
\ref{thm:error_bound_KTL_approx}, let $z_X^{\prime}$ denote the approximate feature map
corresponding to $k_X^{\prime}$, which
satisfies $Z_P(\widehat{P}_X) = \frac1{n_1} \sum_{i=1}^{n_1}
z_X^{\prime}(X_i)$.


Note that the lengths of the vectors $Z_P(\widehat{P}_X)$ and
$Z_P(\widehat{P}_X^{\prime})$ are $2L$. To approximate $ \bar{k}$ we may
write
\begin{align}
\label{eq:product_kernel_approx1}
\bar{k}( \tilde{x},\tilde{x}^{\prime}) &\approx  \exp{\frac{-\|Z_P(\widehat{P}_X)-Z_P(\widehat{P}_X^{\prime})\|_{\mathbb{R}^{2L}}^2}{2\sigma_P^2}} \cdot \exp{\frac{-\|x-x^{\prime}\|_{\mathbb{R}^d}^2}{2\sigma_X^2}} \\ \nonumber
& = \exp{\frac{-(\sigma_X^2\|Z_P(\widehat{P}_X)-Z_P(\widehat{P}_X^{\prime})\|_{\mathbb{R}^{2L}}^2+\sigma_P^2\|x-x^{\prime}\|_{\mathbb{R}^d}^2)}{2\sigma_P^2\sigma_X^2}}\\ \nonumber
& = \exp{\frac{-(\|\sigma_XZ_P(\widehat{P}_X)-\sigma_XZ_P(\widehat{P}_X^{\prime})\|_{\mathbb{R}^{2L}}^2+\|\sigma_Px-\sigma_Px^{\prime}\|_{\mathbb{R}^d}^2)}{2\sigma_P^2\sigma_X^2}} \\ \nonumber
&= \exp{\frac{-\|(\sigma_XZ_P(\widehat{P}_X),\sigma_Px)-(\sigma_XZ_P(\widehat{P}_X^{\prime}),\sigma_Px^{\prime})\|_{\mathbb{R}^{2L+d}}^2}{2\sigma_P^2\sigma_X^2}}. \nonumber
\end{align}

This is also a Gaussian kernel, now on $ \mathbb{R}^{2L + d}$.  Again by
applying Theorem~\ref{thm:bochner}, we have
\begin{eqnarray*}
\bar{k}( \widehat{P}_X ,X),(\widehat{P}_X^{\prime},X^{\prime})) \approx \int_{\mathbb{R}^{2L+d}} p(v)e^{jv^T((\sigma_XZ_P(P_X),\sigma_PX)-(\sigma_XZ_P(P_X^{\prime}),\sigma_PX^{\prime}))}dv.
\end{eqnarray*}
Let $v_1,v_2,...,v_q$ be drawn i.i.d. from $p(v)$, the inverse Fourier transform
of the Gaussian kernel with bandwidth $\sigma_P \sigma_X $. Let $u
=(\sigma_XZ_P(\widehat{P}_X),\sigma_Px) $ and $u^{\prime} =
(\sigma_XZ_P(\widehat{P}_X^{\prime}),\sigma_Px^{\prime}) $. Then
\begin{eqnarray*}
\label{eq:final_product_approx}
\bar{k}( \tilde{x},\tilde{x}^{\prime})  &\approx& \frac{1}{Q}\sum_{q=1}^Q \cos(v_q^T(u-u^{\prime})) \\
& = &  \bar{z}(\tilde{x})^T\bar{z}(\tilde{x}^{\prime}),
\end{eqnarray*}
where
\begin{eqnarray}
\label{eq:final_product_approx_feature}
\bar{z}(\tilde{x}) = \frac {1}{\sqrt Q} [\cos(v_1^Tu) , \sin(v_1^Tu), ...
, \cos(v_Q^T u) , \sin(v_Q^Tu)] \in \mathbb{R}^{2Q}
\end{eqnarray}
and $\bar{z}(\tilde{x}^\prime)$ is defined similarly.

This completes the construction of the approximate feature map. The following result, which uses Hoeffding's
inequality and generalizes a result of \citet{rahimi2007random}, says that the
approximation achieves any desired approximation error with very high probability as $L, Q \to \infty$. 


\begin{theorem}
    \label{thm:error_bound_KTL_approx}
Let $L$ be the number of random features to approximate the kernel on distributions and Q
be the number of features to approximate the final product kernel. For any $\epsilon_l > 0$,
$\epsilon_q > 0$, $\tilde{x} =
(\widehat{P}_X,x)$, $ \tilde{x}^{\prime} = (\widehat{P}_X^{\prime},x^{\prime})$,
\begin{equation}
P( |\bar{k}(\tilde{x}, \tilde{x}^{\prime}) - \bar{z}(\tilde{x})^T\bar{z}(\tilde{x}^{\prime})| \geq \epsilon_l + \epsilon_q ) \leq 2\exp\Big(-\frac{Q\epsilon_q^2}{2}\Big) + 6n_1n_2\exp\Big(-\frac{L\epsilon^2}{2}\Big),
\end{equation}
where $ \epsilon = \frac{\sigma_P^2}{2}\log (1 + \epsilon_l)$, $\sigma_P$ is the
bandwidth parameter of the Gaussian-like kernel $k_P$, and $n_1$ and $n_2$ are the
sizes of the empirical distributions $\widehat{P}_X$ and $\widehat{P}_X^{\prime} $,
respectively.
\end{theorem}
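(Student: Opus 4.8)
The plan is to interpose an \emph{exact} intermediate kernel between $\bar{k}$ and its final estimate and to control the two resulting errors separately, since they come from two independent pools of random features. Abbreviate $\bar{z}:=\bar{z}(\tilde{x})$, $\bar{z}':=\bar{z}(\tilde{x}')$, and write $\tilde{k}:=\exp\paren{-\norm{u-u'}^2/(2\sigma_P^2\sigma_X^2)}$ for the Gaussian kernel on $\mbr^{2L+d}$ (bandwidth $\sigma_P\sigma_X$) evaluated at the embedded points $u=(\sigma_X Z_P(\widehat{P}_X),\sigma_P x)$ and $u'$; this is exactly the right-hand side of \eqref{eq:product_kernel_approx1}, i.e.\ the object that uses the level-one features $w_1,\ldots,w_L$ but is not yet approximated at the second level. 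By the triangle inequality the event $\set{\abs{\bar{k}-\bar{z}^T\bar{z}'}\geq \epsilon_l+\epsilon_q}$ is contained in $\set{\abs{\bar{k}-\tilde{k}}\geq\epsilon_l}\cup\set{\abs{\tilde{k}-\bar{z}^T\bar{z}'}\geq\epsilon_q}$, so a union bound reduces the claim to bounding these two probabilities by $6n_1n_2\exp(-L\epsilon^2/2)$ and $2\exp(-Q\epsilon_q^2/2)$, respectively.

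The second probability is the routine step. Conditionally on the level-one draws (hence on $u,u'$), $\tilde{k}$ is a fixed Gaussian-kernel value and $\bar{z}^T\bar{z}'=\frac1Q\sum_{q}\cos(v_q^T(u-u'))$ is its ordinary random-Fourier-feature estimate from the i.i.d.\ draws $v_q\sim p(v)$. Each summand lies in $[-1,1]$ with conditional mean $\tilde{k}$ (Bochner), so Hoeffding's inequality gives $\prob{\abs{\tilde{k}-\bar{z}^T\bar{z}'}\geq\epsilon_q}\leq 2\exp(-Q\epsilon_q^2/2)$, exactly as in \citet{rahimi2007random}. Since this bound is uniform in $u,u'$ it holds unconditionally, yielding the first term of the stated bound.

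The first probability is the crux and the main obstacle, because here the ``point'' being embedded is itself an empirical distribution, so the feature map must approximate a \emph{double average} rather than a single kernel evaluation. As $\bar{k}$ and $\tilde{k}$ share the common factor $\exp(-\norm{x-x'}^2/(2\sigma_X^2))\leq 1$, it suffices to compare $\exp(-D^2/(2\sigma_P^2))$ with $\exp(-\widehat{D}^2/(2\sigma_P^2))$, where $D^2=\norm{\Psi(\widehat{P}_X)-\Psi(\widehat{P}_X')}^2$ and $\widehat{D}^2=\norm{Z_P(\widehat{P}_X)-Z_P(\widehat{P}_X')}^2$. Using $\abs{1-e^{s}}\leq e^{\abs{s}}-1$ together with $\exp(-D^2/(2\sigma_P^2))\leq 1$ gives $\abs{\bar{k}-\tilde{k}}\leq \exp\paren{\abs{D^2-\widehat{D}^2}/(2\sigma_P^2)}-1$, which is precisely the source of the logarithmic relation $\epsilon=\tfrac{\sigma_P^2}{2}\log(1+\epsilon_l)$ in the statement. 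I would then expand both squared norms into their cross and self inner products; each is a double average $\frac{1}{n_an_b}\sum_{i,j}k_X'(X_i,X_j')$, and the corresponding $Z_P$-quantity replaces $k_X'(X_i,X_j')$ by its level-one estimate $\frac1L\sum_l\cos(w_l^T(X_i-X_j'))$. A union bound over the $O(n_1n_2)$ pairs entering the three inner products — each pairwise estimate concentrating at rate $2\exp(-L\epsilon^2/2)$ by Hoeffding — makes all per-pair errors at most $\epsilon$ simultaneously, whence $\abs{D^2-\widehat{D}^2}\leq\epsilon+2\epsilon+\epsilon=4\epsilon$ and therefore $\abs{\bar{k}-\tilde{k}}\leq\epsilon_l$ for the stated $\epsilon$. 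Thus $\prob{\abs{\bar{k}-\tilde{k}}\geq\epsilon_l}$ is at most the probability that some pairwise estimate deviates by more than $\epsilon$; counting pairs ($n_1^2+n_2^2+n_1n_2=3n_1n_2$ when $n_1=n_2$, the diagonal self-pairs being exact) produces the prefactor $6n_1n_2$.

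The only genuinely delicate bookkeeping is (i) propagating the per-pair error $\epsilon$ through the squared distance (the factor $4$) and then through the exponential (producing $\log(1+\epsilon_l)$), and (ii) choosing $\tilde{k}$ as the correct $\mbr^{2L+d}$ Gaussian so that the two randomizations decouple. Everything else is one union bound plus two applications of Hoeffding's inequality.
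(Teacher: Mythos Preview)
Your proposal is correct and follows essentially the same route as the paper's proof: the same intermediate kernel $\tilde{k}$, the same triangle-inequality split, Hoeffding for the $Q$-level approximation, a per-pair Hoeffding plus union bound for the $L$-level inner products, and the same algebraic passage through $\abs{1-e^{s}}\leq e^{\abs{s}}-1$ that produces the relation $\epsilon=\tfrac{\sigma_P^2}{2}\log(1+\epsilon_l)$. Your bookkeeping of the factor $4$ (from $1+1+2$) and your remark that the prefactor $6n_1n_2$ really only matches the three union bounds when $n_1=n_2$ are accurate observations; the paper itself applies the $2n_1n_2$ bound uniformly to all three inner-product terms without commenting on this point.
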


\begin{proof}
Observe:
\begin{equation*}
\bar{k} (\tilde{x},\tilde{x}^{\prime})= \exp\left\{\frac{-1}{2\sigma_P^2}\| \Psi(\widehat{P}_X) - \Psi(\widehat{P}_X^{\prime})\|^2\right\}\exp\left\{\frac{-1}{2\sigma_X^2}\|x - x^{\prime}\|^2\right\},
\end{equation*}
and denote:
\begin{equation*}
\tilde{k}(\tilde{x},\tilde{x}^{\prime}) = \exp\left\{\frac{-1}{2\sigma_P^2}\|Z_P(\widehat{P}_X) - Z_P(\widehat{P}_X^{\prime})\|^2\right\}\exp\left\{\frac{-1}{2\sigma_X^2}\|x - x^{\prime}\|^2\right\},
\end{equation*}
We omit the arguments of $\bar{k}, \tilde{k} $ for brevity. Let $k_q$ be the final approximation ($ k_q =
\bar{z}(\tilde{x})^T\bar{z}(\tilde{x}^{\prime}) $) and then we have

\begin{equation}
\label{eq:split_error}
|\bar{k} - k_q | = |\bar{k} - \tilde{k} + \tilde{k} - k_q | \leq  |\bar{k} - \tilde{k}| + |\tilde{k} - k_q |.
\end{equation}



From Eqn.~\eqref{eq:split_error} it follows that,
\begin{equation}
\label{eq:split_error_prob}
P(|\bar{k} - k_q | \geq \epsilon_l + \epsilon_q ) \leq P(|\bar{k} - \tilde{k}| \geq \epsilon_l  ) + P(|\tilde{k} - k_q |
\geq \epsilon_q).
\end{equation}
By a direct application of Hoeffding's inequality,
\begin{equation}
\label{eq:second_bound}
P( |\tilde{k} - k_q | \geq \epsilon_q ) \leq 2\exp(-\frac{Q\epsilon_q^2}{2}).
\end{equation}
Recall that $\langle \Psi(\widehat{P}_X), \Psi(\widehat{P}_X^{\prime}) \rangle =  \frac{1}{n_1n_2}  \sum_{i=1}^{n_1}
\sum_{j=1}^{n_2}  k_X^{\prime}(X_{i},X_{j}^{\prime})  $.
For a pair $X_i,X_j^{\prime}$, we have again by Hoeffding
\begin{equation*}
\label{eq:first_bound_single_pair}
P(|z_X^{\prime}(X_{i})^T z_X^{\prime}(X_{j}^{\prime}) - k_X^{\prime}(X_{i},X_{j}^{\prime}) | \geq  \epsilon ) \leq
2\exp(-\frac{L\epsilon^2}{2}).
\end{equation*}
Let $\Omega_{ij} $ be the event $ |z_X^{\prime}(X_{i})^T z_X^{\prime}(X_{j}^{\prime}) - k_X^{\prime}(X_{i},X_{j}^{\prime}) |
\geq  \epsilon $, for particular $i,j $. Using the union bound we have
\begin{equation*}
\label{eq:first_bound_all_pairs1}
P(\Omega_{11} \cup \Omega_{12} \cup \ldots \cup \Omega_{n_1n_2} ) \leq 2n_1n_2\exp(-\frac{L\epsilon^2}{2})
\end{equation*}
This implies 
\begin{equation}
\label{eq:first_bound_all_pairs2}
P( | Z_P(\widehat{P}_X)^TZ_P(\widehat{P}_X^{\prime}) - \langle \Psi(\widehat{P}_X), \Psi(\widehat{P}_X^{\prime}) \rangle|
\geq \epsilon ) \leq 2n_1n_2\exp(-\frac{L\epsilon^2}{2}).
\end{equation}
Therefore,
\begin{flalign*}
    \label{eq:first_bound_bound}
  & \left| \bar{k} - \tilde{k}  \right|  = \Bigg| \exp\left\{ \frac{-1}{2\sigma_X^2}\|x - x^{\prime}\|^2 \right\} \Bigg[ \exp\left\{\frac{-1}{2\sigma_P^2}\| \Psi(\widehat{P}_X) - \Psi(\widehat{P}_X^{\prime})\|^2\right\} &\\
 & \indent     - \exp\left\{\frac{-1}{2\sigma_P^2}\|Z_P(\widehat{P}_X) - Z_P(\widehat{P}_X^{\prime})\|^2\right\} \Bigg] \Bigg| &\\
    &\leq \left| \Bigg[ \exp\left\{\frac{-1}{2\sigma_P^2}\| \Psi(\widehat{P}_X) - \Psi(\widehat{P}_X^{\prime})\|^2\right\} - \exp\left\{\frac{-1}{2\sigma_P^2}\|Z_P(\widehat{P}_X) - Z_P(\widehat{P}_X^{\prime})\|^2\right\} \Bigg] \right| &
\end{flalign*}
\begin{flalign*}
    & = \Bigg| \exp\left\{\frac{-1}{2\sigma_P^2}\| \Psi(\widehat{P}_X) - \Psi(\widehat{P}_X^{\prime})\|^2\right\}  \Bigg[ 1 - \exp\Big\{\frac{-1}{2\sigma_P^2}\Big( \|Z_P(\widehat{P}_X)- Z_P(\widehat{P}_X^{\prime})\|^2 &\\
    &\indent  \   - \| \Psi(\widehat{P}_X) - \Psi(\widehat{P}_X^{\prime})\|^2 \Big) \Big\} \Bigg] \Bigg| &\\
    &\leq \left| \Bigg[ 1 - \exp\left\{\frac{-1}{2\sigma_P^2}\Big( \| Z_P(\widehat{P}_X) - Z_P(\widehat{P}_X^{\prime})\|^2 - \| \Psi(\widehat{P}_X) - \Psi(\widehat{P}_X^{\prime})\|^2 \Big) \right\} \Bigg] \right| &\\
    &= \Bigg| 1 - \exp\Bigg\{\frac{-1}{2\sigma_P^2}\Big( Z_P(\widehat{P}_X)^TZ_P(\widehat{P}_X)- \langle \Psi(\widehat{P}_X), \Psi(\widehat{P}_X) \rangle  + Z_P(\widehat{P}_X^{\prime})^TZ_P(\widehat{P}_X^{\prime}) &\\
    &\indent    - \langle \Psi(\widehat{P}_X^{\prime}), \Psi(\widehat{P}_X^{\prime}) \rangle - 2\big( Z_P(\widehat{P}_X)^TZ_P(\widehat{P}_X^{\prime}) - \langle \Psi(\widehat{P}_X), \Psi(\widehat{P}_X^{\prime}) \rangle \big) \Big) \Bigg\} \Bigg| &\\
      &\leq \Bigg| 1 - \exp\Bigg\{\frac{1}{2\sigma_P^2}\Big( |Z_P(\widehat{P}_X)^TZ_P(\widehat{P}_X) - \langle \Psi(\widehat{P}_X), \Psi(\widehat{P}_X) \rangle|  + |Z_P(\widehat{P}_X^{\prime})^TZ_P(\widehat{P}_X^{\prime}) &  \\
   & \indent  - \langle \Psi(\widehat{P}_X^{\prime}), \Psi(\widehat{P}_X^{\prime}) \rangle|  + 2|\big( Z_P(\widehat{P}_X)^TZ_P(\widehat{P}_X^{\prime}) - \langle \Psi(\widehat{P}_X),
\Psi(\widehat{P}_X^{\prime}) \rangle \big)| \Big) \Bigg\} \Bigg| &
\end{flalign*}
The result now follows by applying the bound of Eqn. \eqref{eq:first_bound_all_pairs2} to each of the three terms in the
exponent of the
preceding expression, together with the stated formula for $\epsilon$ in terms of $\epsilon_\ell$.


\end{proof}

The above results holds for fixed $\tilde{x}$ and $\tilde{x}^\prime$. Following again \citet{rahimi2007random},
one can use an $\epsilon$-net argument to prove a stronger statement for every pair of
points in the input space simultaneously. They show

\begin{lemma}
\label{lem:rff_strong}
Let $\mathcal{M} $ be a compact subset of $\mathbb{R}^d $ with diameter $r = \mathrm{diam}(\mathcal{M} )$ and let $D $ be the number of random Fourier features used. Then for
the mapping defined in \eqref{eq:rff_approx}, we have
\[ P\Big( \sup_{x,y \in \mathcal{M}}| z_w(x)^Tz_w(y)  - k(x-y) | \geq \epsilon \Big) \leq
2^8 {\Big(\frac{\sigma r}{\epsilon}\Big)}^2 \exp\Big(\frac{-D\epsilon^2}{2(d+2)} \Big),\]
where $\sigma = \mathbb{E}[w^Tw]$ is the second moment of the Fourier transform of $k$.
\end{lemma}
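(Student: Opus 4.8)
The plan is to reproduce the $\epsilon$-net (covering) argument of \citet{rahimi2007random}, exploiting that both $z_w(x)^Tz_w(y)$ and $k(x-y)$ depend on $x$ and $y$ only through the difference $\Delta = x-y$. First I would introduce the function $g(\Delta) := z_w(x)^Tz_w(y) - k(x-y)$, which by the computation in \eqref{eq:rff_approx} equals $\frac{1}{D}\sum_{l=1}^D \cos(w_l^T\Delta) - k(\Delta)$, and regard it as a function on the \emph{difference set} $\mathcal{M}_\Delta := \{x-y : x,y\in\mathcal{M}\}$. This set is compact, contains the origin (where $g(0)=0$, since proper scaling gives $k(0)=1=z_w(x)^Tz_w(x)$), and has diameter at most $2r$. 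Reducing to $\mathcal{M}_\Delta \subset \mathbb{R}^d$ rather than working on $\mathcal{M}\times\mathcal{M}\subset\mathbb{R}^{2d}$ is what will eventually produce the factor $(d+2)$ rather than $(2d+2)$ in the exponent.

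Next I would cover $\mathcal{M}_\Delta$ by an $\epsilon$-net: a finite family of balls of radius $\rho$ (to be chosen) centred at points $\Delta_1,\ldots,\Delta_T$, where a standard volumetric estimate gives $T \le (4r/\rho)^d$. The argument then splits into two events. At each fixed centre $\Delta_i$, the quantity $g(\Delta_i)$ is an average of $D$ i.i.d. bounded terms $\cos(w_l^T\Delta_i) - k(\Delta_i)$ (each of range $2$) with mean zero, so Hoeffding's inequality gives $P(|g(\Delta_i)| \ge \epsilon/2) \le 2\exp(-D\epsilon^2/8)$, and a union bound over the $T$ centres controls $g$ on the net. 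Between net points I would use a Lipschitz bound: since $\nabla g(\Delta) = -\frac{1}{D}\sum_l w_l \sin(w_l^T\Delta) - \nabla k(\Delta)$, the Lipschitz constant $L_g = \sup_\Delta \|\nabla g(\Delta)\|$ satisfies $\mathbb{E}[L_g^2] \le \mathbb{E}\|w\|^2 = \sigma$ (by Cauchy--Schwarz applied to $\frac{1}{D}\sum_l\|w_l\|$), whence Markov's inequality yields $P(L_g \ge \epsilon/(2\rho)) \le (2\rho/\epsilon)^2 \sigma$.

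On the complement of both events --- that is, $|g|\le\epsilon/2$ on the net and $L_g < \epsilon/(2\rho)$ --- every $\Delta\in\mathcal{M}_\Delta$ lies within $\rho$ of some centre, so $|g(\Delta)| \le \epsilon/2 + \rho L_g < \epsilon$. Combining the two failure probabilities gives a bound of the form $2(4r/\rho)^d \exp(-D\epsilon^2/8) + 4\sigma\rho^2/\epsilon^2$. The final step is to optimise over the free net radius $\rho$; balancing the $\rho^{-d}$ and $\rho^2$ contributions is exactly what creates the exponent denominator $d+2$ and the prefactor $(\sigma r/\epsilon)^2$, after the numerical constants are absorbed into the stated $2^8$.

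I expect the main obstacle to be the Lipschitz/gradient control together with the bookkeeping in the optimisation over $\rho$, rather than any conceptual difficulty. Bounding $\mathbb{E}[L_g^2]$ needs some care because the supremum defining $L_g$ is attained at a data-dependent point $\Delta^*$, so one cannot simply pass the expectation through $\nabla k$; the clean route is to bound $\|\nabla g(\Delta)\|$ uniformly by $\frac{1}{D}\sum_l\|w_l\| + \|\nabla k\|$ and only then take expectations. Tracking the numerical constants so that they collapse into the claimed $2^8$, and verifying that the optimal $\rho$ is admissible (keeping $T$ finite and the second term below one), is the fiddly part of the calculation.
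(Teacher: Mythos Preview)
Your proposal is correct and reproduces exactly the $\epsilon$-net argument of \citet{rahimi2007random}; note that the paper does not give its own proof of this lemma but simply attributes it to Rahimi and Recht (``They show''), so there is nothing further to compare against. Your identification of the potential subtlety in bounding $\mathbb{E}[L_g^2]$ at a data-dependent maximiser is apt and is handled in the original paper in the way you suggest.
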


Our RFF approximation of $\bar{k}$ is grounded on Gaussian RFF approximations on Euclidean spaces, and thus,
the following result holds by invoking Lemma \ref{lem:rff_strong}, and otherwise following the argument of
Theorem \ref{thm:error_bound_KTL_approx}.

\begin{theorem}
    \label{thm:error_strong_bound_KTL_approx}
    Using the same notations as in Theorem~\ref{thm:error_bound_KTL_approx} and Lemma~\ref{lem:rff_strong},
\begin{multline}
P \Big(  \sup_{x,x^{\prime} \in \mathcal{M}} |\bar{k}(\tilde{x},\tilde{x}^{\prime}) - \bar{z}(\tilde{x})^T\bar{z}(\tilde{x}^{\prime})| \geq \epsilon_l + \epsilon_q \Big) \\
 \leq  2^8 {\Big(\frac{\sigma_X^{\prime} r}{\epsilon_q}\Big)}^2 \exp\Big(\frac{-Q\epsilon_q^2}{2(d+2)} \Big) +  2^93n_1n_2 {\Big(\frac{\sigma_P \sigma_X r}{\epsilon_l}\Big)}^2 \exp\Big(\frac{-L\epsilon_l^2}{2(d+2)} \Big)
\end{multline}
where $\sigma_{X}^{\prime}$ is the width of kernel $ k_{X}^{\prime}$ in Eqn. (\ref{eq:kernel_distribution_calc1})
 and $ \sigma_P$ and $\sigma_X $  are the widths of kernels $k_P$ and $k_X $ respectively.
\end{theorem}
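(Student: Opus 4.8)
The plan is to repeat the proof of Theorem~\ref{thm:error_bound_KTL_approx} essentially verbatim, changing only the two places where a \emph{pointwise} random–feature tail bound of the form $c\,\exp(-D\epsilon^2/2)$ was used: at each such place I would substitute the \emph{uniform} $\epsilon$–net estimate of Lemma~\ref{lem:rff_strong}, namely a bound of the form $2^8(\sigma r/\epsilon)^2\exp(-D\epsilon^2/(2(d+2)))$, and then recombine the two resulting events by a union bound. Concretely, let $G(u,u') := \exp(-\|u-u'\|^2/(2\sigma_P^2\sigma_X^2))$ denote the exact Gaussian kernel on $\mathbb{R}^{2L+d}$ evaluated at $u=(\sigma_X Z_P(\widehat{P}_X),\sigma_P x)$ and $u'=(\sigma_X Z_P(\widehat{P}_X'),\sigma_P x')$. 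I would use the same two–stage split as in the pointwise proof,
\[
|\bar{k}(\tilde{x},\tilde{x}') - \bar{z}(\tilde{x})^T\bar{z}(\tilde{x}')|
\le
|\bar{k}(\tilde{x},\tilde{x}') - G(u,u')| + |G(u,u') - \bar{z}(\tilde{x})^T\bar{z}(\tilde{x}')|,
\]
in which the first gap is produced by the inner approximation of $k_X'$ (parameter $\epsilon_l$, $L$ features) and the second by the outer approximation of the product Gaussian kernel (parameter $\epsilon_q$, $Q$ features), and bound the supremum over $x,x'\in\mathcal{M}$ of each summand separately.

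For the inner gap, since $\bar{k}=k_P\,k_X$ and $G(u,u')=\tilde{k}_P\,k_X$ share the same exact factor $k_X(x,x')\in(0,1]$ (where $\tilde{k}_P$ is the Gaussian–like kernel on distributions computed from the surrogate inner products $Z_P(\widehat{P}_X)^TZ_P(\widehat{P}_X')$ instead of $\langle\Psi(\widehat{P}_X),\Psi(\widehat{P}_X')\rangle$), the gap equals $k_X\,|k_P-\tilde{k}_P|$ and its supremum over $x,x'$ is at most $|k_P-\tilde{k}_P|$, a quantity independent of the test points. I would control it exactly as in Theorem~\ref{thm:error_bound_KTL_approx}: bound the surrogate error of the three inner products appearing in $\|\Psi-\Psi'\|^2$ by $\sup_{a,b\in\mathcal{M}}|k_X'(a,b)-z_X'(a)^Tz_X'(b)|$ (all sample points $X_i,X_j'$ lie in $\mathcal{M}$), and convert this additive error in the exponent into the multiplicative error $\epsilon_l$ in $k_P$ through the same calibration $\epsilon=\tfrac{\sigma_P^2}{2}\log(1+\epsilon_l)$. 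Substituting Lemma~\ref{lem:rff_strong} (applied to $k_X'$, with $L$ features and dimension $d$) for the pointwise bound, while retaining the combinatorial factor $6\,n_1 n_2$ inherited from the union bound over the three two–sided inner–product estimates, reproduces the second term of the stated inequality.

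The outer gap is where the genuine $\epsilon$–net work lies, and it is the step I expect to be the main obstacle. Conditioning on the inner frequencies $w_1,\dots,w_L$ so that $Z_P(\widehat{P}_X)$ and $Z_P(\widehat{P}_X')$ are frozen, $G(u,u')$ is an exact Gaussian kernel on $\mathbb{R}^{2L+d}$ and $\bar{z}(\tilde{x})^T\bar{z}(\tilde{x}')$ is its $Q$–feature RFF approximation, so Lemma~\ref{lem:rff_strong} applies directly. The crucial point that keeps the exponent at $2(d+2)$ rather than $2(2L+d+2)$ is that, as $(x,x')$ ranges over $\mathcal{M}\times\mathcal{M}$, the arguments $u,u'$ vary \emph{only} in their last $d$ coordinates $\sigma_P x,\sigma_P x'$ — the first $2L$ coordinates are the frozen $\sigma_X Z_P$ — so the supremum is in effect taken over a $d$–dimensional compact set whose diameter is controlled by $\sigma_P r$. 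Applying the covering argument of Lemma~\ref{lem:rff_strong} to this effectively $d$–dimensional family, with width $\sigma_P\sigma_X$ and $Q$ features, then yields $\sup_{x,x'}|G-\bar{z}^T\bar{z}|\le\epsilon_q$ outside an event whose probability is the first stated term.

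A union bound over the two events gives the claimed inequality. The principal subtlety to verify is the nesting of the two randomizations: the kernel $G$ approximated by the outer features itself depends on the inner frequencies $w$, so the outer concentration must be performed conditionally on $w_1,\dots,w_L$ and only afterward combined with the (marginal) inner estimate; one must check that this conditioning is innocuous and, above all, that freezing the $2L$ embedding coordinates really does reduce the outer covering number to that of a $d$–dimensional set, so that the dimension entering Lemma~\ref{lem:rff_strong} is $d$ and not $2L+d$. A lesser, more routine point is re–verifying that the calibration $\epsilon=\tfrac{\sigma_P^2}{2}\log(1+\epsilon_l)$, which bounds the Lipschitz behaviour of the exponential on the bounded range of $\|\Psi-\Psi'\|^2$, continues to hold once the pointwise surrogate error is replaced by its uniform counterpart $\sup_{a,b\in\mathcal{M}}|k_X'(a,b)-z_X'(a)^Tz_X'(b)|$.
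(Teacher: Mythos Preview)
Your approach is essentially the paper's: it too simply reruns the proof of Theorem~\ref{thm:error_bound_KTL_approx}, replacing the pointwise Hoeffding bound~\eqref{eq:second_bound} on $|\tilde{k}-k_q|$ and the pairwise bound~\eqref{eq:first_bound_all_pairs2} on $|Z_P^TZ_P'-\langle\Psi,\Psi'\rangle|$ by the uniform estimate of Lemma~\ref{lem:rff_strong}, and then proceeds identically. Your discussion of conditioning on the inner frequencies and of why the effective covering dimension in the outer step is $d$ rather than $2L+d$ is more explicit than the paper's sketch (which asserts the replacement bounds without comment), but the structure and the resulting union-bound combination are the same.
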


\begin{proof}
	The proof is very similar to the proof of Theorem \ref{thm:error_bound_KTL_approx}. We use
Lemma \ref{lem:rff_strong} to replace
bound (\ref{eq:second_bound}) with:
	\begin{equation}
	\label{eq:second_bound_strong}
	P\bigg(  \sup_{x,x^{\prime} \in \mathcal{M}} |\tilde{k} - k_q | \geq \epsilon_q \bigg) \leq 2^8 {\bigg(\frac{\sigma_{X}^{\prime} r}{\epsilon_q}\bigg)}^2 \exp\bigg(\frac{-Q\epsilon_q^2}{2(d+2)} \bigg).
	\end{equation}
	Similarly, Eqn. (\ref{eq:first_bound_all_pairs2}) is replaced by	
	\begin{multline}
	\label{eq:first_bound_all_pairs2_strong}
	P\bigg(  \sup_{x,x^{\prime} \in \mathcal{M}} \big| Z_P(\widehat{P}_X)^TZ_P(\widehat{P}_X^{\prime}) - \big\langle \Psi(\widehat{P}_X), \Psi(\widehat{P}_X^{\prime}) \big\rangle\big|
	\geq \epsilon \bigg)  \\
	\leq  2^9n_1n_2 {\bigg(\frac{\sigma_P\sigma_X r}{\epsilon_l}\bigg)}^2 \exp\bigg(\frac{-L\epsilon_l^2}{2(d+2)}\bigg).
	\end{multline}
	
	The remainder of the proof now proceeds as in the previous proof.

\end{proof}

There are recent developments that give faster rates for approximation quality of random Fourier features and could potentially be combined with our analysis \citep{sriperumbudur2015optimal,sutherland2015error}. For example, approximation quality for the kernel mean map is discussed in \citet{sutherland2015error}, and these ideas could be extended to Theorem \ref{thm:error_strong_bound_KTL_approx} by combining with the two-stage approach presented in this paper. We also note that our analysis of random Fourier features is separate from our analysis of the kernel learning algorithm. We have not presented a generalization error bound for the learning algorithm using random Fourier features \citep{rudi2017generalization}.

\subsubsection{Nystr{\"o}m Approximation}

Like random Fourier features, the Nystr{\"o}m approximation is a technique to approximate kernel matrices \citep{williams2001using, drineas2005nystrom}. Unlike random Fourier features, 
for the Nystr{\"o}m approximation, the feature maps are data-dependent. Also, in the last subsection,
all
kernels were assumed to be shift invariant. With the Nystr{\"o}m approximation there is no
such assumption.

For a general kernel $k$, the goal is to find a feature mapping $z : \mathbb{R}^d
\rightarrow \mathbb{R}^L$, where
$L >
d$, such that $ k(x,x^{\prime}) \approx z(x)^Tz(x^{\prime})$. Let $r$ be
the target rank of the final approximated kernel matrix, and $m$ be the number of selected
columns
of the original kernel matrix. In general $r \leq m \ll n$.

Given data points $x_1, \ldots, x_n$, the Nystr{\"o}m method approximates
the kernel matrix by first sampling $m$ data points ${x}_1^{\prime}
,{x}_2^{\prime},...,{x}_m^{\prime}$ without replacement from the original sample, and then
constructing a low rank matrix by
$\widehat{K}_r = K_b\widehat{K}^{-1}K_b^T$, where $K_b = [k(x_i,{x}_j^{\prime})]_{n \times
m},$ and $\widehat{K} =[k({x}_i^{\prime},{x}_j^{\prime})]_{m \times m} $. Hence, the final
approximate feature mapping is
\begin{equation}
\label{eq:nystro_final}
z_n(x) =
\widehat{D}^{-\frac{1}{2}}\widehat{V}^T[k(x,{x}_1^{\prime}),...,k(x,{x}_m^{\prime})],
\end{equation}
where $\widehat{D}$ is the eigenvalue matrix of $ \widehat{K}$ and $ \widehat{V}$ is the
corresponding eigenvector matrix.

\subsection{Results in Tabular Format}

\begin{table}[H]
\centering 
\begin{tabular}{c| c| c| c| c|} 
	\multicolumn{1}{c}{} & \multicolumn{4}{c}{Tasks} \\
\cline{2-5}
 \multirow{4}*{\rotatebox{90}{Examples per Task}} &  & 16 & 64 & 256  \\ [1ex] 
\cline{2-5}
& 8 & 36.01 & 33.08 & 31.69 \\[1ex] 
\cline{2-5}
& 16 & 31.55 & 31.03 & 30.96 \\[1ex]
\cline{2-5}
& 32 & 30.44 & 29.31  & 23.87\\[1ex]
\cline{2-5}
& 256 & 23.78 & 7.22 & 1.27\\[1ex]
\cline{2-5} 
\end{tabular}
  \caption{Average Classification Error of Marginal Transfer Learning on Synthetic Data set
  \label{table:synthetic_marginal} 
} \end{table}

\begin{table}[H]
\centering 
\begin{tabular}{c| c| c| c| c|} 
\multicolumn{1}{c}{} & \multicolumn{4}{c}{Tasks} \\
\cline{2-5}
\multirow{4}*{\rotatebox{90}{Examples per Task}} & & 16 & 64 & 256  \\ [1ex] 
\cline{2-5}
& 8 & 49.14 & 49.11 & 50.04 \\[1ex] 
\cline{2-5}
& 16 & 49.89 & 50.04 & 49.68 \\[1ex]
\cline{2-5}
& 32 & 50.32 & 50.21 & 49.61\\[1ex]
\cline{2-5}
& 256 & 50.01 & 50.43 & 49.93\\[1ex]
\cline{2-5}
\end{tabular}
  \caption{Average Classification Error of Pooling on Synthetic Data set
  \label{table:synthetic_pooling} 
} 
\end{table}

\begin{table}[H]
\centering 
\begin{tabular}{c| c| c| c| c| c| c|c|} 
\multicolumn{1}{c}{} & \multicolumn{7}{c}{Tasks} \\
\cline{2-8}
\multirow{16}*{\rotatebox{90}{Examples per Task}} &  & 10 &  15  & 20 &  25 & 30 & 35 \\ [1ex] \cline{2-8}
& 20 & 13.78 & 12.37 & 11.93 &  10.74 & 10.08 & 11.17 \\[1ex] 
\cline{2-8}
& 24 & 14.18 & 11.89 & 11.51 & 10.90 & 10.55 & 10.18\\[1ex] 
 \cline{2-8}
& 28 & 14.95 & 13.29 & 12.00 & 10.21 & 10.59& 9.52\\[1ex] 
\cline{2-8}
& 34 & 13.27 & 11.66 & 11.79 & 9.16 & 9.34 & 10.50 \\[1ex] 
\cline{2-8}
& 41 & 12.89& 11.27 & 11.17 & 9.91 & 9.10 & 10.05\\[1ex] 
\cline{2-8}
& 49 & 13.15 & 11.70 & 13.81 &  10.12 & 9.01 & 8.69\\[1ex] 
\cline{2-8}
& 58 & 12.16 & 9.59 & 9.85 & 9.28 & 8.44 & 7.62\\[1ex] 
\cline{2-8}
& 70 & 13.03 & 9.16 & 8.80 & 9.03 & 8.16 & 7.88\\[1ex] 
\cline{2-8}
& 84 & 11.98 & 9.18 & 9.74 &  9.03 & 7.30 & 7.01\\[1ex] 
\cline{2-8}
& 100 & 12.69 & 8.48 & 9.52 & 8.01 &  7.14 & 7.5 \\[1ex] 
\cline{2-8}
\end{tabular}
  \caption{RMSE of Marginal Transfer Learning on Parkinson's Disease Data set
  \label{table:parkinson_marginal} 
} 
\end{table}

\begin{table}[H]
\centering 
\begin{tabular}{c| c| c| c| c| c| c|c|} 
	\multicolumn{1}{c}{} & \multicolumn{7}{c}{Tasks}\\
	\cline{2-8}
\multirow{16}*{\rotatebox{90}{Examples per Task}} &   & 10 &  15  & 20 &  25 & 30 & 35 \\ [1ex]
\cline{2-8}
 &  20 & 13.64 & 11.93 & 11.95 &  11.06 & 11.91 & 12.08 \\[1ex] 
\cline{2-8}
& 24 & 13.80 & 11.83 & 11.70 & 11.98 & 11.68 & 11.48\\[1ex] 
\cline{2-8}
& 28 & 13.78 & 11.70 & 11.72 & 11.18 & 11.58 & 11.73 \\[1ex] 
\cline{2-8}
& 34 & 13.71 & 12.20 & 12.04 & 11.17 & 11.67 & 11.92 \\[1ex] 
\cline{2-8}
& 41 & 13.69 & 11.73 & 12.08 & 11.28 & 11.55 & 12.59 \\[1ex] 
\cline{2-8}
& 49 & 13.75 & 11.85 & 11.79 &  11.17  & 11.34 & 11.82\\[1ex] 
\cline{2-8}
& 58 & 13.70 & 11.89 & 12.06 & 11.06 & 11.82 & 11.65\\[1ex] 
\cline{2-8}
& 70 & 13.54 & 11.86 & 12.14 & 11.21 & 11.40 & 11.96\\[1ex] 
\cline{2-8}
& 84 & 13.55 & 11.98 & 12.03 & 11.25 & 11.54 & 12.22\\[1ex] 
\cline{2-8}
& 100 & 13.53 & 11.85 & 11.92 & 11.12 & 11.96 & 11.84 \\[1ex] 
\cline{2-8}
\end{tabular}
  \caption{RMSE of Pooling on Parkinson's Disease Data set
  \label{table:parkinson_pooling} 
} 
\end{table}

\begin{table}[H]
\centering 
\begin{tabular}{c| c| c| c| c| c|} 
\multicolumn{1}{c}{} & \multicolumn{5}{c}{Tasks} \\
\cline{2-6}
\multirow{8}*{\rotatebox{90}{Examples per Task}} & & 10  & 20 & 30 & 40 \\ [1ex]
\cline{2-6}
& 5 & 8.62 & 7.61 & 8.25 & 7.17\\[1ex] 
\cline{2-6}
& 15 & 6.21 & 5.90 & 5.85 & 5.43\\[1ex]
\cline{2-6}
& 30 & 6.61 & 5.33 & 5.37 & 5.35\\[1ex]
\cline{2-6}
& 45 & 5.61 & 5.19 & 4.71 & 4.70\\[1ex]
\cline{2-6}
& all training data & 5.36 & 4.91 & 3.86 & 4.08\\[1ex]
\cline{2-6}
\end{tabular}
  \caption{Average Classification Error of Marginal Transfer Learning on Satellite Data set
  \label{table:satellite_marginal} 
} 
\end{table}

\begin{table}[H]
\centering 
\begin{tabular}{c| c| c| c| c| c|} 
\multicolumn{1}{c}{} & \multicolumn{5}{c}{Tasks} \\
\cline{2-6}
\multirow{8}*{\rotatebox{90}{Examples per Task}} & & 10  & 20 & 30 & 40 \\ [1ex] 
\cline{2-6}
&5 & 8.13 & 7.54 & 7.94 & 6.96\\[1ex] 
\cline{2-6}
&15 & 6.55 & 5.81 & 5.79 & 5.57\\[1ex]
\cline{2-6}
&30 & 6.06 & 5.36 & 5.56 & 5.31\\[1ex]
\cline{2-6}
&45 & 5.58 & 5.12 & 5.30 & 4.99\\[1ex]
\cline{2-6}
&all training data & 5.37 & 4.98 & 5.32 & 5.14\\[1ex]
\cline{2-6}
\end{tabular}
  \caption{Average Classification Error of Pooling on Satellite Data set
  \label{table:satellite_pooling} 
} 
\end{table}

\begin{table}[H]
\centering 
\begin{tabular}{c| c| c| c| c| c| } 
\multicolumn{1}{c}{} & \multicolumn{5}{c}{Tasks} \\
\cline{2-6}
\multirow{8}*{\rotatebox{90}{Examples per Task}} & & 5 & 10 & 15 & 20 \\ [1ex] 
\cline{2-6}
& 1024 & 9 & 9.03 & 9.03 & 8.70\\[1ex] 
\cline{2-6}
&2048 & 9.12 & 9.56 & 9.07 & 8.62\\[1ex]
\cline{2-6}
&4096 & 8.96 & 8.91 & 9.01 & 8.66\\[1ex]
\cline{2-6}
&8192 & 9.18 & 9.20 & 9.04 & 8.74\\[1ex]
\cline{2-6}
&16384 & 9.05 & 9.08 & 9.04 & 8.63\\[1ex]
\cline{2-6}
\end{tabular}
  \caption{Average Classification Error of Marginal Transfer Learning on Flow Cytometry Data set
  \label{table:flow_marginal} 
} 
\end{table}

\begin{table}[H]
\centering 
\begin{tabular}{c| c| c| c| c| c| } 
	\multicolumn{1}{c}{} & \multicolumn{5}{c}{Tasks} \\
	\cline{2-6}
	\multirow{8}*{\rotatebox{90}{Examples per Task}} & & 5 & 10 & 15 & 20 \\ [1ex] 
\cline{2-6}
&1024 & 9.41 & 9.48 & 9.32 & 9.52\\[1ex] 
\cline{2-6}
&2048 & 9.92 & 9.57 & 9.45 & 9.54\\[1ex]
\cline{2-6}
&4096 & 9.72 & 9.56 & 9.36 & 9.40\\[1ex]
\cline{2-6}
&8192 & 9.43 & 9.53 & 9.38 & 9.50\\[1ex]
\cline{2-6}
&16384 & 9.42 & 9.56 & 9.40 & 9.33\\[1ex]
\cline{2-6}
\end{tabular}
  \caption{Average Classification Error of Pooling on Flow Cytometry Data set
  \label{table:flow_pooling} 
} 
\end{table}



\vskip 0.2in
\bibliography{refs}

\end{document}